\documentclass[lettersize,journal]{IEEEtran}
\usepackage{amsmath,amsfonts}
\usepackage{algorithmic}
\usepackage{algorithm}
\usepackage{array}
\usepackage[caption=false,font=normalsize,labelfont=sf,textfont=sf]{subfig}
\usepackage{textcomp}
\usepackage{stfloats}
\usepackage{url}
\usepackage{verbatim}
\usepackage{graphicx}
\usepackage{orcidlink}
\usepackage{lipsum}
\usepackage{bm}
\usepackage{rotating}
\usepackage{pdflscape}
\usepackage{tcolorbox}
\usepackage{hyperref,url,dsfont}
\usepackage{makecell}
\usepackage{anyfontsize}
\usepackage{bbding}
\usepackage{xspace}
\usepackage{xfrac}
\usepackage{threeparttable}
\usepackage{pifont}
\usepackage{tikz}
\usepackage{wrapfig}
\usepackage{accents}
\usepackage{amsmath,mathrsfs,amssymb,amsfonts,enumitem}
\usepackage{multirow}
\usepackage{tabularx}
\usepackage[numbers,sort&compress]{natbib}

\setlist[itemize]{leftmargin=5.5mm}

\newcommand{\cmark}{\ding{51}}%
\newcommand{\xmark}{\ding{55}}%

\renewcommand{\tilde}{\widetilde}
\renewcommand{\hat}{\widehat}

\def \D {\mathcal{D}}
\def \E {\mathbb{E}}

\def \O {\mathcal{O}}

\def \R {\mathbb{R}}

\def \X {\mathcal{X}}

\def \g {\mathbf{g}}
\def \h {\boldsymbol{h}}
\def \r {\boldsymbol{r}}
\def \p {\boldsymbol{p}}

\def \u {\mathbf{u}}
\def \w {\mathbf{w}}
\def \m {\boldsymbol{m}}

\def \x {\mathbf{x}}
\def \y {\mathbf{y}}
\def \z {\mathbf{z}}

\def \xh {\hat{\x}}

\def \Ot {\tilde{\O}}

\def \ellb {\boldsymbol{\ell}}

\def \is {i_{\star}}

\def \rb {\bar{r}}

\usepackage{mathtools}
\let\norm\undefined 
\DeclarePairedDelimiter\norm{\lVert}{\rVert}
\DeclarePairedDelimiter\abs{\lvert}{\rvert}
\newcommand\inner[2]{\langle #1, #2 \rangle}

\def \Reg {\textsc{Reg}}
\DeclareMathOperator*{\AReg}{\textsc{I-Reg}}

\DeclareMathOperator*{\argmin}{arg\,min}

\def \AReg {\textsc{I-Reg}}

\usepackage{amsthm}
\newtheorem{myThm}{Theorem}
\newtheorem{myCor}{Corollary}

\newtheorem{myLemma}{Lemma}

\newtheorem{myProp}{Proposition}

\theoremstyle{definition}
\newtheorem{myAssump}{Assumption}

\newtheorem{myRemark}{Remark}

\newtheoremstyle{proofsketchstyle}{}{} {} {} {\itshape} {.}{ } {\thmname{#1}\thmnote{ #3}} 

\theoremstyle{proofsketchstyle}

\usepackage{graphicx,color} 

\definecolor{wine_red}{RGB}{228,48,64}
\definecolor{DSgray}{cmyk}{0,1,0,0}

\begin{document}

\title{Online Learning with \\ Gradient-Variation Interval Regret}

\author{Yan-Feng Xie, Shuche Wang, Peng Zhao and Zhi-Hua Zhou
  \thanks{Yan-Feng Xie, Peng Zhao and Zhi-Hua Zhou are with State Key Laboratory for Novel Software Technology and the School of Artificial Intelligence, Nanjing University, Nanjing 210023, China. (e-mail: xieyf@lamda.nju.edu.cn; zhaop@lamda.nju.edu.cn; zhouzh@lamda.nju.edu.cn). Shuche Wang is with Institute of Operations Research and Analytics, National University of Singapore, Singapore 117602 (e-mail: shuche.wang@u.nus.edu).}
}

\maketitle

\begin{abstract}
  This paper investigates non-stationary online learning using the metric of \emph{interval regret}, which requires an online algorithm to perform well over every time interval.
  We propose the first online learning algorithm that achieves an interval regret bound scaling with \emph{gradient variation}, a fundamental measure of the cumulative change in online function gradients, which relates to various problem-dependent quantities and is closely connected to stochastic optimization and other problems.
  Our method employs a simple and efficient two-layer online ensemble structure that achieves strong theoretical guarantees. Specifically, it enjoys a regret bound that simultaneously adapts to various problem-dependent quantities while also preserving the minimax-optimal rate in the worst case.
  Moreover, recognizing the challenge of hyperparameter tuning, we introduce a Lipschitz- and smoothness-agnostic variant that automatically adapts to these potentially unknown constants. This is primarily enabled by a novel \emph{Lipschitz-adaptive meta algorithm}, which may be of independent interest.
  Beyond interval regret, our method also yields broader implications: it provides versatile bounds for interval dynamic regret, a stronger measure that competes with changing comparators over any interval, and yields the first piecewise characterization for stochastic extended adversarial optimization. Theoretical findings are validated by experiments.
\end{abstract}

\begin{IEEEkeywords}
  Online Learning, Non-Stationary Environment, Interval Regret, Gradient Variation, Lipschitz Adaptivity
\end{IEEEkeywords}

\section{Introduction}
This paper investigates online learning within the framework of online convex optimization (OCO)~\cite{ICML'03:zinkvich, book'16:Hazan-OCO}, a general paradigm for sequential decision-making between a learner and the environments. Under the OCO framework, the learner is required to provide a decision $\x_t \in \X$ from a convex decision set $\X$ iteratively at each round $t \in [T]$. Subsequently, the environments reveal a convex loss function $f_t:\X \mapsto \R$, and the learner incurs the loss $f_t(\x_t)$. A classic objective is to minimize the regret, which measures the cumulative loss against the single best fixed decision in hindsight:
\begin{align}
    \label{eq:regret}
    \Reg_T = \sum_{t=1}^T f_t(\x_t) - \min_{\x \in \X} \sum_{t=1}^T f_t(\x).
\end{align}
This regret benchmarks the performance against a fixed comparator $\x_{\star} \in \argmin_{\x \in \X} \sum_{t=1}^T f_t(\x)$ in hindsight, also known as the static regret. However, static regret becomes an inadequate measure in non-stationary environments where the optimal decision may change over time. To address this limitation, more robust performance measures have been proposed. This paper focuses on \emph{interval regret}~\cite{ICML'15:Daniely-adaptive} (also referred to as \emph{adaptive regret} in some literature~\cite{journal'07:Hazan-adaptive,ICML19:Zhang-Adaptive-Smooth, ICML'20:Ashok}), which is defined as the regret over any contiguous time interval $[r, s] \subseteq [T]$:
\begin{align}
    \label{eq:strongly-adaptive-regret}
    \AReg_{[r,s]} =\sum_{t = r}^{s} f_t(\x_t) - \min_{\x \in \X} \sum_{t = r}^{s} f_t(\x) .
\end{align}
Conceptually, interval regret provides a stronger performance guarantee by ensuring that the algorithm performs well against the local best-in-hindsight decision within an arbitrary interval. This measure is more suitable for changing environments, as the assumption of a single best decision is more plausible over shorter, local intervals than over the entire time horizon.

The central challenge in designing algorithms with interval regret guarantees is the need to handle inherent non-stationarity: the algorithms are required to perform well over any possible time interval, without knowing in advance which intervals will be difficult. To address this, a standard approach is to leverage the two-layer online ensemble framework~\cite{JMLR'24:Sword++} to mitigate the uncertainty. This framework employs a meta learner that dynamically combines the predictions from a pool of base learners. Each base learner is an instantiation of an OCO algorithm. To maintain computational efficiency, this large set of experts is managed by a schedule that determines which subset of base learners to run at each time step.

Algorithms based on such frameworks aim to minimize the interval regret defined in Eq.~\eqref{eq:strongly-adaptive-regret}, thereby ensuring strong performance across all possible intervals. Pioneering work in~\cite{ICML'15:Daniely-adaptive} achieves a regret guarantee of $\O(\sqrt{\abs{I}} \log T)$ for any interval $I\subseteq [T]$. Although this bound is known to be near-optimal up to logarithmic factors for convex functions in the worst case~\cite{conf/colt/AbernethyBRT08}, it is pessimistic. The regret scales with the interval length even in favorable local environments where the data might exhibit simpler patterns. For this reason, such guarantees are referred to as worst-case regret bounds.
To obtain fine-grained local guarantees, researchers have explored \emph{problem-dependent} bounds. One prominent approach utilizes smoothness to derive bounds based on the small loss~\cite{ICML19:Zhang-Adaptive-Smooth}, $F_I = \min_{\x \in \X} \sum_{t \in I} f_t(\x)$~\cite{NIPS'10:smooth, aistats'12:exp-concave-smooth}. These bounds can adapt to the environments' difficulty, becoming smaller when $F_I$ is small.
In this paper, we focus on another crucial problem-dependent quantity: the \emph{gradient variation}~\cite{COLT'12:variation-Yang}. For any interval $I = [r, s]$, this quantity is defined as:
\begin{align}
    \label{eq:gradient-variation}
    V_{[r, s]} = \sum_{t = r+1}^{s} \sup_{\x \in \X} \norm{\nabla f_t(\x) - \nabla f_{t-1}(\x)}^2,
\end{align}
and we sometimes use $V_I = V_{[r, s]}$ for notational simplicity. Intuitively, this quantity measures the intensity of change in online functions. We aim to establish the \emph{first} online learning algorithm equipped with a gradient-variation interval regret bound of $\Ot(\sqrt{V_{I}})$ under the smoothness assumption, where $\Ot(\cdot)$ hides the logarithmic factors. In the following, we will demonstrate the importance of the gradient-variation interval regret bound and the challenges of deriving it.

First, gradient variation provides a more refined, local characterization of the algorithm's performance. The proposed $\Ot(\sqrt{V_{I}})$ bound can be more powerful than a single global bound, such as $\O(\sqrt{V_{[1, T]}})$~\cite{COLT'12:variation-Yang}. A global bound might be pessimistic: if a short period of high variation occurs, $V_{[1, T]}$ becomes large, causing the guarantee to degrade to the worst-case bound across the entire horizon. In contrast, our local guarantee reflects the difficulty of each interval. For any given interval $I$, if the environment is benign ($V_{I}$ is small), we achieve a tight, data-dependent bound. If the interval is adversarial, the fact that $V_I = \O(\abs{I})$ ensures our bound gracefully recovers the optimal $\Ot(\sqrt{\abs{I}})$ worst-case guarantee for that specific interval. Second, the gradient-variation bound usually implies the small-loss bound and is thus viewed as a more fundamental measure for problem-dependent guarantees~\cite{ICML'22:Zhang-simple, JMLR'24:Sword++,NeurIPS'24:universal}. Third, gradient variation connects to fundamental problems in machine learning, such as stochastic extended adversarial optimization~\cite{TIT'04:generalization-ability, TIT'12:agarwal-lower-bound-sco, nips'22:sea}. Despite its importance and existing work in other contexts~\cite{nips'23:portfolio-data-dependent,AISTATS'24:bilevel-gradient-variation, arxiv'24:optimistic-jordan, JMLR'24:Sword++}, how to derive versatile gradient-variation interval regret bounds remains unclear, which we address in this work.
\begin{table*}[t]
    \centering
    \caption{A comparison of guarantees for interval regret and interval dynamic regret. ``\# Gradient'' denotes number of gradient queries per round, and ``\# Active Base'' denotes number of active base learners per round. Problem-dependent quantities include the gradient-variation $V_{I}$, and the small-loss value $F_{I} = \min_{\u} \sum_{t\in I} f_t(\u)$. For interval dynamic regret, $F_{I}^{\u} = \sum_{t \in I} f_t(\u_t)$ is the cumulative loss of any comparator sequence $\{\u_t\}_{t\in I}$, and $P_I = \sum_{t=r+1}^s \norm{\u_t - \u_{t-1}}_2$ is the path length of the comparators. }
    \label{tab:comparison}
    \renewcommand{\arraystretch}{1.5}
    \begin{tabular}{cccccc}
      \hline
      \textbf{Measure} & \textbf{Work} & \textbf{Regret Guarantees} & \textbf{\# Gradient} & \textbf{\# Active Base Learners} \\
      \hline
      \multirow{5}{*}{\makecell[c]{Interval \\ Regret}}
        & \cite{ICML'15:Daniely-adaptive} & $\O(\sqrt{{\abs{I}}}\log T)$ & $\O(\log T)$ & $\O(\log T)$ \\
        & \cite{EJS'17:coin-betting-adaptive} & $\O(\sqrt{{\abs{I}\log T}})$ & $\O(\log T)$ & $\O(\log T)$ \\
        & \cite{ICML19:Zhang-Adaptive-Smooth} & $\O(\sqrt{{F_I\log F_I\log F_T}})$ & $\O(\log F_T)$ & $\O(\log F_T)$ \\
        & \cite{NeurIPS'22:efficient} & $\O(\min \{\sqrt{{F_I\log F_I\log F_T}}, \sqrt{\abs{I}\log T}\})$ & $1$ & $\O(\log F_T)$ \\
        & \textbf{Ours (Theorem~\ref{thm:adaptive-oco})} & $\O(\min \{\sqrt{{\min\{V_I, F_I\}\log F_I\log F_T}}, \sqrt{\abs{I}\log T}\})$ & $1$ & $\O(\log F_T)$ \\
      \hline
      \multirow{4}{*}{\makecell[c]{Interval \\ Dynamic\\ Regret}}
        & \cite{AISTATS'20:Zhang} &$\Ot(\sqrt{\abs{I}(1+P_I)})$ & $\O(\log^2 T)$ & $\O(\log^2 T)$ \\
        & \cite{ICML'20:Ashok} & $\Ot(\sqrt{(F^{\u}_I + P_I)(1+P_I)})$ & $1$ & $\O(\log T)$ \\
        & \cite{arXiv'23:efficient-projection} & $\Ot(\sqrt{(F^{\u}_I + P_I)(1+P_I)})$ & $1$ & $\O(\log (\min_{\u} \{F^{\u}_T + P_T\}) \cdot \log T)$ \\
        & \textbf{Ours (Theorem~\ref{thm:interval-dynamic-regret-formal})} & $\Ot(\sqrt{(\min\{V_I, F^{\u}_I\} + P_I)(1+P_I)})$ & $1$ & $\O(\log (\min_{\u} \{F^{\u}_T + P_T^2\}) )$ \\
      \hline
    \end{tabular}
\end{table*}

Using a two-layer online ensemble structure, we propose an adaptive algorithm that simultaneously ensures an interval regret of $\O(\min\{\sqrt{ \min\{V_I, F_I\} \log F_I \log F_T}, \sqrt{\abs{I}\log T}\})$. We refer to this as a ``best-of-three-worlds'' guarantee, as it takes the minimum over three different types of bounds: the gradient-variation bound, the small-loss bound, and the optimal worst-case bound. This is the first gradient-variation interval regret bound in the literature. Table~\ref{tab:comparison} presents a detailed comparison with prior work, showing that our guarantees are the most comprehensive to date and recover previous results. Furthermore, our algorithm is computationally efficient. Despite managing an ensemble of base learners, it requires only a single gradient query per round. It also automatically adjusts the number of active base learners according to the environment. In the worst case, the number of active learners is at most $\O(\log T)$, and can be smaller based on the difficulty of the environments.

While these interval regret results characterize local non-stationarity, our algorithm's capabilities extend further. We investigate its performance against global non-stationarity, typically measured via dynamic benchmarks. A representative measure capturing both aspects is the \emph{interval dynamic regret}~\cite{ICML'20:Ashok, AISTATS'20:Zhang}, defined as:
\begin{align}
    \label{eq:def-interval-dynamic-regret}
    \textsc{ID-Reg}_{[r, s]}(\u_{r:s}) = \sum_{t=r}^s f_t(\x_t) - \sum_{t=r}^s f_t(\u_t),
\end{align}
where $[r,s]\subseteq [T]$ is an arbitrary time interval, and $\u_r, \dots, \u_s \in \X$ is any sequence of comparators. This provides a stronger guarantee by allowing the comparator sequence to change over time. We demonstrate that our algorithm, using the exact same configuration for interval regret minimization, also achieves versatile problem-dependent bounds for interval dynamic regret. These results are included in Table~\ref{tab:comparison} (details in Section~\ref{subsec:interval-dynamic-regret}), highlighting the algorithm's versatility.

In addition to providing strong performance guarantees, our algorithm is also designed to be robust to initial parameter tuning, a common challenge in non-stationary environments where function properties can vary dramatically. To achieve this, we extend the algorithm with Lipschitz and smoothness adaptivity. This means the algorithm does not require prior knowledge of key function parameters, the global Lipschitz constant $G$ or the smoothness constant $L$. Instead, it adapts to these constants on the fly while ensuring comparable guarantees, enhancing its usability for practical deployment.

We now summarize our main technical contributions, which address three key challenges in optimizing the interval regret for non-stationary online learning.
\begin{itemize}[leftmargin=2em]
    \item \textbf{The First Gradient-Variation Interval Regret Bound.}
    We establish the first interval regret bound that scales with gradient variation. Prior analyses for gradient-variation bounds often rely on leveraging the negative terms provided by the algorithm's own stability to cancel the positive terms introduced during the derivation. However, this approach is incompatible with interval regret algorithms, as the dynamic management of the expert ensemble (activating and deactivating base learners) creates an instability that prevents the use of such terms. Our key technical insight is to tackle this issue by carefully leveraging the intrinsic negative Bregman divergence terms provided by the smoothness of the loss functions, which allows us to achieve the desired cancellation and establish the favorable gradient-variation bound.

    \item \textbf{Versatility of the Resulting Guarantee.}
    The resulting gradient-variation bound is highly versatile. First, our analysis reveals the important connections between three different types of guarantees, showing how the gradient-variation bound can imply both the small-loss bound and the minimax-optimal bound, thus achieving a comprehensive result. Second, we demonstrate that under the same configuration, our algorithm's guarantees can imply stronger interval dynamic regret bounds, providing the first versatile problem-dependent bounds for this measure.

    \item \textbf{Parameter Adaptivity in Meta Algorithm Design.}
    To further improve the algorithm's robustness to parameter tuning in non-stationary environments, we extend our method to be agnostic to the Lipschitz and smoothness constants. This is primarily enabled by a new Lipschitz-adaptive meta algorithm, which is achieved by introducing a novel learning rate design. The new regret guarantee and analysis for this Lipschitz-adaptive meta algorithm may be of independent interest.
\end{itemize}

We conduct numerical experiments to validate the effectiveness of our algorithm empirically. First, using synthetic data, we evaluate the algorithm's Lipschitz adaptivity and perform an ablation study to demonstrate its robust performance in standard settings. Second, we construct a hybrid optimization scenario on a real-world dataset that combines adversarial and stochastic components. This experiment further validates the algorithm's effectiveness in adapting to gradient variation.

This paper is organized as follows. Section~\ref{sec:preliminaries} introduces the assumptions and the basic ideas of algorithms with interval regret guarantees. To better convey our ideas, in Section~\ref{sec:approach}, we present the proposed algorithm and the main result under the setting where both the Lipschitz and smoothness constants are known. Section~\ref{sec:lipschitz_adaptive} extends the algorithm to achieve Lipschitz and smoothness adaptivity. Section~\ref{sec:implications} shows the implications and applications of our results. Section~\ref{sec:exp} presents the experimental results. Section~\ref{sec:conclusions} concludes the paper. All the detailed proofs are deferred to the Appendices.

\section{Related Work}
\subsection{Non-Stationary Online Learning}
The concept of guaranteeing performance over arbitrary local intervals originates from early work on prediction with expert advice~\cite{journals/iandc/LittlestoneW94}. The notion of \emph{adaptive regret} is first formalized in the OCO framework by~\cite{journal'07:Hazan-adaptive}, which considers the goal of minimizing $\max_{I \subseteq [T]} \AReg_{I}$. For general convex loss functions,~\cite{journal'07:Hazan-adaptive} obtains a guarantee of $\max_{I \subseteq [T]} \AReg_{I} \leq \widetilde{\mathcal{O}}(\sqrt{T})$. A key characteristic of this definition is that the regret bound does not scale with the interval length $|I|$. We note, however, that~\cite{journal'07:Hazan-adaptive} does achieve guarantees that scale with $|I|$ for the strongly convex and exp-concave functions. Subsequently,~\cite{ALT'12:closer-adaptive-regret} observes that the method of~\cite{journal'07:Hazan-adaptive} is equivalent to the fixed-share algorithm~\cite{conf/nips/Cesa-BianchiGLS12} when the loss functions exhibit favorable curvature properties.

The notion of \emph{strongly adaptive regret}, which we refer to as \emph{interval regret} in this paper, requires a more fine-grained local guarantee that scales with the interval length~\cite{ICML'15:Daniely-adaptive}. This stronger notion is formalized by~\cite{ICML'15:Daniely-adaptive}, which obtains an $\O(\sqrt{\abs{I}} \log T)$ guarantee. This bound is later improved to $\O(\sqrt{|I| \log T})$ by~\cite{EJS'17:coin-betting-adaptive} using a coin-betting meta-algorithm~\cite{NIPS'16:coin-bet-OrabonaP16}. To handle more favorable environments,~\cite{ICML19:Zhang-Adaptive-Smooth} proposes a strategy that adapts the number of base learners to the small-loss value, achieving the first problem-dependent bound of $\O(\sqrt{F_I\log F_T \log F_I})$. However, their method requires multiple gradient queries per round and does not imply the optimal worst-case rate. More recently,~\cite{NeurIPS'22:efficient} proposes a more efficient method using a single gradient query to achieve a unified bound of $\O( \min \{\sqrt{F_I\log F_T \log F_I}, \sqrt{\abs{I}\log T} \})$.

Other measures also exist to characterize the non-stationarity of the environments~\cite{ICML'03:zinkvich,OR'15:dynamic-function-VT, AISTATS'15:dynamic-optimistic}. Among these, dynamic regret~\cite{ICML'03:zinkvich, TIT'12:andras,ICML'16:GyorgyS-shiftregret, NIPS'18:Zhang-Ader,ICML'22:TV-game, JMLR'24:Sword++, ICML'26:Xie-dynamic-regret} is a popular measure of robustness, benchmarking the performance against changing comparators. The general relationship between interval regret and dynamic regret remains unclear~\cite{ICML'20:Ashok}. Some works therefore consider minimizing both simultaneously~\cite{ICML'20:Ashok,AISTATS'20:Zhang, NeurIPS'22:efficient} by introducing the stronger interval dynamic regret. In Section~\ref{sec:implications}, we present a detailed introduction to interval dynamic regret and demonstrate that our interval regret guarantee can imply a dynamic regret guarantee under specific algorithmic configurations.

\subsection{Problem-Dependent Regret Bound}
In the OCO framework, for convex loss functions, the results in~\cite{ICML'03:zinkvich,conf/colt/AbernethyBRT08} establish a minimax static regret bound of $\Theta(\sqrt{T})$. Problem-dependent bounds offer a more refined analysis by exploiting the properties of the environments rather than focusing on the worst-case dependence on $T$. One well-known example is the small-loss bound, which relates regret to the cumulative loss of the best fixed comparator $F_T = \min_{\x} \sum_{t=1}^T f_t(\x)$~\cite{NIPS'10:smooth,aistats'12:exp-concave-smooth, TCS'18:SOGD}.

Another significant problem-dependent quantity is the gradient variation proposed by~\cite{COLT'12:variation-Yang}. The study of gradient-variation adaptivity has been revived since its introduction to dynamic regret minimization~\cite{NIPS'20:sword,JMLR'24:Sword++}, which has promoted dedicated algorithmic designs and new analytical techniques, and has further stimulated a growing line of research in various settings~\cite{nips'23:portfolio-data-dependent,NeurIPS'23:universal,AISTATS'24:bilevel-gradient-variation, NeurIPS'24:LocalSmooth, NeurIPS'24:universal, ICML'26:Yu-BCO-GV, COLT'26:Zhao-GV-unconstrained}. Gradient variation is recognized as a more fundamental quantity, as guarantees based on gradient variation can imply small-loss guarantees~\cite{JMLR'24:Sword++}. The importance of gradient variation also extends across online learning: it provides a path to fast convergence rates in multi-player games~\cite{ NIPS'15:fast-rate-game,ICML'22:TVgame}, helps bridge the gap between stochastic and adversarial optimization, and links online and offline accelerated methods~\cite{NIPS'19:UnixGrad, ICML'20:Bregman-divergence-negative-term,NeurIPS'25:GV-holder}.
Motivated by these insights, our work focuses on this quantity, and we present the first interval regret bound in terms of gradient variation, which can imply the small-loss bound as well.

\subsection{Lipschitz-Adaptive Algorithm}
Online learning is inherently challenging as the learner must continuously adapt to newly revealed loss functions over time. Many existing algorithms require prior knowledge of key environmental parameters, such as the global Lipschitz constant $G$ and the diameter of the feasible domain $D$, as input to mitigate uncertainty in difficult environments. In the context of online learning, algorithms that do not require the domain diameter $D$ are often referred to as parameter-free~\cite{NIPS'16:coin-bet-OrabonaP16,COLT'18:black-box-reduction, ALT'22:implicit-parameter-free-truncated}. Algorithms that do not require knowledge of $G$ are known as Lipschitz-adaptive~\cite{TCS'18:SOGD, COLT'19:ashok-cubic}. Some recent work has studied the setting where both $G$ and $D$ are unknown, but the optimality of the results is preliminary~\cite{COLT'19:Lipschitz-MetaGrad, COLT'19:ashok-cubic,COLT'22:parameter-free-omd,icml'23:unbounded,COLT'26:Zhao-GV-unconstrained}. In this paper, we focus on the setting where $D$ is known, but $G$ is unknown.

We define the Lipschitz constant as $G = \max_{t \in [T], \x\in \X} \|\nabla f_t(\x)\|_2$. In non-stationary environments, accurately estimating this value is often infeasible, and a misspecified $G$ may degrade performance~\cite{TCS'18:SOGD, COLT'19:ashok-cubic, COLT'19:Lipschitz-MetaGrad}. Prior literature has explored Lipschitz adaptivity in various settings, including unbounded domains~\cite{COLT'19:ashok-cubic, COLT'20:parameter-free-cubic,COLT'21:impossible-tuning}, and alongside adaptation to function curvature~\cite{COLT'19:Lipschitz-MetaGrad,TIT'23:wu-regret-logloss, NeurIPS'24:LocalSmooth, NeurIPS'25:parameter-free-sea}. Nevertheless, a gap remains in understanding how to achieve Lipschitz adaptivity specifically for non-stationary environments. This work addresses this gap by developing a Lipschitz-adaptive algorithm that provides gradient-variation guarantees, thereby improving its parameter tuning robustness.

\section{Preliminaries}
\label{sec:preliminaries}
In this section, we present the assumptions and describe the two-layer online ensemble framework used to design our algorithm for non-stationary environments.
\subsection{Assumptions}
\label{subsec:assumptions}
We make the following assumptions, which are standard in the OCO literature~\cite{NIPS'10:smooth, book'12:Shai-OCO, book'16:Hazan-OCO}.

\begin{myAssump}[bounded domain]
    \label{ass:bounded-domain}
    The feasible domain $\X \subseteq \R^d$ is convex and bounded, with a diameter $D$ known to the learner, i.e., $\norm{\x - \y} \leq D$ for all $\x, \y \in \X$.
\end{myAssump}

\begin{myAssump}[bounded gradient]
    \label{ass:Lipschitzness}
    The gradient norm of each loss function $f_t$ is bounded by $G$, i.e., $\norm{\nabla f_t(\x)}_2 \leq G < \infty$ for all $\x \in \X$.
\end{myAssump}

\begin{myAssump}[bounded function value]
    \label{ass:bounded-function-value}
    The function value of $f_t$ is bounded, i.e., $f_t(\x) \leq GD$ for $\x \in \X, t \in [T]$.
\end{myAssump}

\begin{myAssump}[smoothness and non-negativity]
    \label{ass:smoothness}
    Each $f_t:\X \mapsto \R_+$ is non-negative and $L$-smooth for some $L < \infty$, i.e., $\norm{\nabla f_t(\x) - \nabla f_t(\y)}_2 \leq L \norm{\x - \y}_2$ for all $\x, \y \in \X$.
\end{myAssump}

Throughout this paper, the domain diameter $D$ is known and provided as input to the algorithm. In contrast, the Lipschitz constant $G$ and the smoothness constant $L$ are not assumed to be known to the algorithm unless explicitly stated otherwise in a theorem or lemma.

\subsection{Basic Idea: Two-Layer Online Ensemble Framework}
\label{subsec:online-ensemble-framework}
The non-stationarity in interval regret, defined in~Eq.~\eqref{eq:strongly-adaptive-regret}, primarily arises from the uncertainty about which intervals an algorithm may perform poorly on. To tackle this challenge, a fundamental design principle is to construct an ensemble of base learners and combine them appropriately~\cite{JMLR'24:Sword++}, leveraging the ensemble to mitigate non-stationarity. Specifically, the algorithmic design consists of three components: base learners, a meta learner, and a schedule. Each base learner is an expert specialized for making predictions over a specific time interval $I_i \subseteq [T]$~\cite{journal'07:Hazan-adaptive,ICML'15:Daniely-adaptive}. A meta learner is then employed to aggregate the decisions $\x_{t,i} \in \X$ from each base learner using an online-updated weight $p_{t,i} \in [0, 1]$. Finally, the schedule efficiently manages the ensemble by determining which base learners are active at any given time. Concretely, the schedule is defined as a set of intervals that partitions the time horizon $[1, T]$, with each interval specifying the active period of a base learner. Fig.~\ref{fig:interval} visualizes a classical schedule defined as follows~\cite{journal'07:Hazan-adaptive, ICML19:Zhang-Adaptive-Smooth},
\begin{align}
    \label{eq:problem-independent-schedule}
    \mathcal{S} = \{[i\cdot 2^k, (i+1) \cdot 2^k - 1]: k \in\mathbb{N} , i \text{ is odd}\}.
\end{align}

At the beginning of each interval within this schedule $\mathcal{S}$, a new base learner is initialized and remains active for the duration of that interval, after which it is deactivated. When using this schedule, $\O(\log T)$ base learners must be maintained per round. This schedule is considered problem-independent, as its construction is determined purely by the time horizon rather than any properties of the environments.

To improve scheduling efficiency,~\cite{ICML19:Zhang-Adaptive-Smooth} proposes a problem-dependent schedule that activates base learners based on the algorithm's observed performance. This approach is refined by~\cite{NeurIPS'22:efficient}, which reduces the information required for the schedule's construction. The key idea is to monitor the cumulative loss of the aggregated prediction, $\sum_{t \in I} f_t(\x_t)$. When this loss exceeds a time-varying threshold, the timestamp is recorded as a marker, with $s_m \in [T]$ denoting the $m$-th marker.

This mechanism leads to the problem-dependent schedule shown in Fig.~\ref{fig:interval:CPGC}, defined as:
\begin{align}
    \label{eq:problem-dependent-schedule}
    \tilde{\mathcal{S}} = \{[s_{i\cdot 2^k}, s_{(i+1) \cdot 2^k} - 1]: k \in\mathbb{N} , i \text{ is odd}\}.
\end{align}
While the previous problem-independent schedule $\mathcal{S}$ partitions the time horizon based on a fixed number of timestamps (e.g., dyadic intervals), this schedule $\tilde{\mathcal{S}}$ creates partitions based on a fixed number of markers. For instance, an interval from the $k=1$ level in Fig.~\ref{fig:interval:CPGC} (e.g., $[s_2, s_4-1]$) spans a variable number of timestamps but is defined by $2^1=2$ markers, whereas a $k=1$ interval in Fig.~\ref{fig:interval} (e.g., $[2, 3]$) always spans a fixed $2^1=2$ timestamps. The number of markers per interval grows geometrically, creating an efficient schedule that requires maintaining at most $\mathcal{O}(\log F_T)$ base learners per round. We remark that this construction is performed on the fly, and the algorithm only needs the indices of the markers to manage the ensemble without knowing $\tilde{\mathcal{S}}$ in advance.

We now illustrate the core technique for analyzing interval regret within the online ensemble framework. The key insight is that any arbitrary interval $[r, s]$ can be partitioned into a set of smaller intervals $\{I_1, \dots, I_k\} \subseteq \mathcal{S}$ from the schedule $\mathcal{S}$.
For each interval $I_j, j \in [k]$, a single base learner (indexed by $i_j$) is active throughout. This structure allows the regret over $[r, s]$ to be decomposed as follows:
\begin{align}
    &\sum_{j \in [k]} \sum_{t \in I_j} f_t(\x_t) - f_t(\u) \leq \sum_{j \in [k]} \sum_{t \in I_j} \inner{\nabla f_t(\x_t)}{\x_t - \u} \notag \\
    &= \underbrace{\sum_{j \in [k]} \sum_{t \in I_j} \inner{\nabla f_t(\x_t)}{\x_t - \x_{t, i_j}}}_{\textsc{meta regret}} \notag \\
    &\quad + \underbrace{\sum_{j \in [k]} \sum_{t \in I_j}\inner{\nabla f_t(\x_t)}{\x_{t, i_j} - \u}}_{\textsc{base regret}}. \label{eq:regret-decomposition}
\end{align}
The first term, the meta regret, measures how well the meta learner tracks the best active base learner in each sub-interval. The second term, the base regret, measures the performance of the base learners. While analyzing the base regret is often straightforward, bounding the meta regret is typically more challenging. This is because it requires a sophisticated analysis connecting both the scheduling mechanism and the performance of the base learners.

\begin{figure*}[t]
    \centering
    \begin{tabular}{@{}c@{\hspace{0.2ex}}*{15}{@{\hspace{0.1ex}}c}@{\hspace{0.2ex}}c@{}}
    Length \textbackslash\  Rounds & 1 & 2 & 3 & 4 & 5 & 6 & 7 & 8 & 9 & 10 & 11 & 12 & 13 & 14 & 15 & $\cdots$ \\
    $2^0$ timestamps & [\quad ] &  &  [\quad ] &  & [\quad ] &  & [\quad ] & & [\quad ] &  & [\quad ] & & [\quad ] &  & [\quad ] & $\cdots$   \\
    $2^1$ timestamps&  & [\quad  \phantom{]}& \phantom{[}\quad ] &  &  & [\quad \phantom{]} & \phantom{[}\quad ] &  & & [\quad \phantom{]} & \phantom{[}\quad ] & & & [\quad \phantom{]} & \phantom{[}\quad ] & $\cdots$   \\
    $2^2$ timestamps  &  & &  & [\quad \phantom{]} & \phantom{[}\quad \phantom{]} & \phantom{[}\quad \phantom{]} & \phantom{[}\quad ] &  & & & & [\quad \phantom{]} & \phantom{[}\quad \phantom{]} & \phantom{[}\quad \phantom{]} & \phantom{[}\quad ] & $\cdots$   \\
    $2^3$ timestamps  &  & &  &  &  &  &  & [\quad \phantom{]} & \phantom{[}\quad \phantom{]} & \phantom{[}\quad \phantom{]} & \phantom{[}\quad \phantom{]} & \phantom{[}\quad \phantom{]} & \phantom{[}\quad \phantom{]} & \phantom{[}\quad \phantom{]} & \phantom{[}\quad ] & $\cdots$
    \end{tabular}
    \caption{Problem-Independent Schedule~\cite[Figure 2]{ICML19:Zhang-Adaptive-Smooth}. Each interval is denoted by $[ \quad ]$.}
    \label{fig:interval}
\end{figure*}

\begin{figure*}[t]
    \centering
    \begin{tabular}{@{}c@{\hspace{0.2ex}}*{15}{@{\hspace{0.1ex}}c}@{\hspace{0.2ex}}c@{}}
     Length \textbackslash\  Rounds & 1 & 2 & 3 & 4 & 5 & 6 & 7 & 8 & 9 & 10 &11 &12 & 13 & 14 &15 & $\cdots$ \\
     Markers & $s_1$ &  & $s_2$ & $s_3$ &  &  & $s_4$ & $s_5$ &   & $s_6$ &$s_7$ &$s_8$ &  & $s_9$ &$s_{10}$ & $\cdots$ \\
    $2^0$ markers & [\quad \phantom{]} & \phantom{[}\quad ] &  & [\quad \phantom{]} & \phantom{[}\quad \phantom{]} & \phantom{[}\quad ] &  & [\quad \phantom{]} & \phantom{[}\quad ] &   &[\quad ] & &  & [\quad ] & & $\cdots$   \\
     $2^1$ markers  &  &  &  [\quad \phantom{]} & \phantom{[}\quad \phantom{]} & \phantom{[}\quad \phantom{]} & \phantom{[}\quad ] &  &  &  &  [\quad \phantom{]} &\phantom{[}\quad ] &&  & & [\quad \phantom{]} & $\cdots$   \\
      $2^2$ markers &  &  &   &     &     &  & [\quad \phantom{]} & \phantom{[}\quad \phantom{]} & \phantom{[}\quad \phantom{]} &  \phantom{[}\quad \phantom{]} &\phantom{[}\quad ] & &  & & & $\cdots$   \\
     $2^3$ markers &  &  &      &     &     &  & &  &  &   &&[\quad \phantom{]} & \phantom{[}\quad \phantom{]} & \phantom{[}\quad \phantom{]} & \phantom{[}\quad \phantom{]} & $\cdots$
    \end{tabular}
    \caption{Problem-dependent Schedule~\cite[Figure 4]{ICML19:Zhang-Adaptive-Smooth}. Each interval is denoted by $[ \quad ]$.}
    \label{fig:interval:CPGC}
\end{figure*}

\section{Our Approach: Gradient-Variation Interval Regret Bound}
\label{sec:approach}
In this section, we first introduce the proposed algorithm based on the two-layer ensemble framework and then provide its key theoretical analysis. For clarity of presentation, we assume in this section that the Lipschitz constant $G$ and the smoothness constant $L$ are known to the learner.
\subsection{Basic Components}
\label{subsec:basic-components}
Based on the two-layer ensemble framework, our design requires three key components to achieve the interval regret bound: a base algorithm, a meta algorithm, and a schedule. We describe these components in detail below and then integrate them into the final algorithm. All three components are designed to be adaptive to the environments, and through their coordinated interaction, the overall algorithm achieves the desired guarantees.

\paragraph{Base Algorithm} Each base algorithm is responsible for the online learning task on a specific interval, which allows it to predict without handling the non-stationarity of the entire time horizon. Intuitively, each base learner can be viewed as capturing a certain level of local stability in the environments. For our design, we adopt the optimistic online gradient descent algorithm~\cite{COLT'12:variation-Yang, conf/colt/RakhlinS13}, a two-step update method formalized as:
\begin{align}
    \label{eq:base-algorithm-omd}
    \x_{t,i} = \Pi_{\X}\left[\xh_{t,i} - \eta_{t,i} M_{t} \right], \  \xh_{t+1,i} = \Pi_{\X}\left[\xh_{t,i} - \eta_{t,i}\g_{t}\right],
\end{align}
where $\{\xh_{t,i}\}_{t=1}^T$ is an intermediate sequence used to produce the submitted decision $\x_{t,i} \in \X$, $\g_{t} \in \R^d$ is the gradient of the function to be minimized~(which may be a surrogate loss, hence the general formulation), and $M_{t} \in \R^d$ can be viewed as a ``guess'' of the gradient, also known as the ``optimism''. The principle behind optimistic gradient descent is that if the optimism $M_t$ closely approximates the true gradient $\g_t$, the algorithm achieves stronger theoretical guarantees. Therefore, by selecting a proper optimism, we can leverage favorable properties of the environments to obtain problem-dependent regret at the base-learner level.

With proper step-size tuning, the base learner achieves a regret bound of $\O(\sqrt{\sum_{\tau = \text{start}_i}^t \norm{\g_\tau - M_\tau}_2^2})$ over an interval $[\text{start}_i, t]$, where $\text{start}_i$ denotes the initial timestamp of the $i$-th base learner. In Algorithm~\ref{alg:adaptive-oco}, we choose $\g_t = \nabla f_t(\x_t)$ and $M_t = \nabla f_{t-1}(\x_{t-1})$. This choice provides a quantity at the base-learner level that approximates the gradient variation, which we refer to as the \emph{empirical gradient variation}:
\begin{align}
  \label{eq:empirical-gradient-variation}
  \bar{V}_{[a,b]} = \sum_{t=a+1}^b \norm{\nabla f_t(\x_t) - \nabla f_{t-1}(\x_{t-1})}_2^2.
\end{align}
This result requires further analysis to recover the actual gradient variation. We will elaborate on this in the subsequent sections.

\paragraph{Meta Algorithm} The meta algorithm is responsible for evaluating the performance of base learners over different intervals and adjusting their aggregation weights accordingly. The main challenge is to ensure the meta algorithm remains adaptive to changing environments while managing a dynamic set of base learners. Formally, the meta algorithm produces the weights $\p_t \in \Delta_{\abs{A_t}}$ to aggregate the decisions, where $A_t$ denotes the set of active base learners at time $t$. We require our meta algorithm to support the ``sleeping expert'' paradigm~\cite{STOC'97:sleeping-expert,COLT'15:Luo-AdaNormalHedge}, which allows for a dynamically changing set of active experts. This paradigm naturally fits our two-layer ensemble framework for designing interval regret minimization algorithms. Furthermore, to derive gradient-variation bounds, we require a meta algorithm that incorporates optimism. We employ Optimistic-Adapt-ML-Prod~\cite{NIPS'16:Wei-non-stationary-expert}, which uses the following update rule:
\begin{align}
  \label{eq:meta-algorithm-update-rule-known-G}
  p_{t,i} &= \frac{\epsilon_{t,i}w_{t,i}\exp(\epsilon_{t,i}m_{t,i})}{\sum_{j \in A_t} \epsilon_{t,j}w_{t,j}\exp(\epsilon_{t,j}m_{t,j})}, \\
  w_{t+1,i} &= \left(w_{t,i}\exp\left(\epsilon_{t, i}r_{t,i} - \epsilon_{t,i}^2(r_{t,i} - m_{t,i})^2\right)\right)^{\frac{\epsilon_{t+1,i}}{\epsilon_{t,i}}}. \notag
\end{align}
In the rule above, $w_{t,i}$ are the weight parameters, $m_{t,i}$ is the optimism, and $\epsilon_{t,i}$ is the learning rate. We denote the instantaneous regret of the $i$-th base learner by $r_{t,i} = \inner{\p_t}{\ellb_t} - \ell_{t,i}$, where $\ell_{t,i}$ is the surrogate loss for that learner.

With appropriate tuning, Optimistic-Adapt-ML-Prod ensures a regret bound of $\Ot\big(\sqrt{\sum_{t = \text{start}_i}^t \norm{\r_t - \m_t}_{\infty}^2}\big)$ for any base learner $i$. When incorporating this meta algorithm into Algorithm~\ref{alg:adaptive-oco}, we define the surrogate loss as $\ell_{t,i} = \inner{\nabla f_t(\x_t)}{\x_{t,i}}$, which implies $r_{t,i} = \inner{\nabla f_t(\x_t)}{\x_t - \x_{t,i}}$. We set the optimism as $m_{t,i} = \inner{\nabla f_{t-1}(\x_{t-1})}{\x_t - \x_{t,i}}$. Under this setup, the meta-regret can be bounded by the empirical gradient variation:
\begin{align*}
  & \Ot\left(\sqrt{\sum_{t = \text{start}_i}^t \max_{i \in A_t} \inner{\nabla f_t(\x_t) - \nabla f_{t-1}(\x_{t-1})}{\x_t - \x_{t,i}}^2 }\right)\\ &\leq \Ot\left(\sqrt{D^2\sum_{t = \text{start}_i}^t \norm{\nabla f_t(\x_t) - \nabla f_{t-1}(\x_{t-1})}_2^2}\right).
\end{align*}
Similar to the analysis for the base learner, this bounds the meta regret by the empirical gradient variation. The key remaining part of the analysis, which we present later, is to convert this empirical quantity into the true gradient variation.

Finally, we address a subtle technical point. The definition of the optimism $m_{t,i}$ depends on the aggregated decision $\x_t = \sum_{i \in A_t} p_{t,i}\x_{t,i}$, which in turn depends on the weight vector $\p_t$. However, the update for $\p_t$ itself uses the optimism $m_{t,i}$, creating a circular dependency. This issue can be efficiently resolved using a fixed-point iteration procedure, which converges in an additional $\mathcal{O}(\log T)$ iterations per round, as detailed in~\cite[Section~3.3]{NIPS'16:Wei-non-stationary-expert}. We note that this bisection process can be implemented while still requiring only a single query of the gradient in each iteration.

\paragraph{Schedule} We adopt the problem-dependent schedule from~\cite{NeurIPS'22:efficient}. In addition to efficiently managing the pool of base learners, the schedule also needs to dynamically adjust the number of active base learners based on the difficulty of the environments over time. As described in Section~\ref{subsec:online-ensemble-framework}, the schedule is constructed using markers, which are generated whenever the cumulative loss exceeds a sequence of data-dependent and time-varying thresholds.

Here, we introduce the threshold generation function, $\mathcal{G}(t, i)$, which creates these thresholds on the fly. Its inputs are the current time $t$ and the marker's index $i$. This function generates the sequence of markers $s_1, s_2, \dots$ used to build the schedule via Eq.~\eqref{eq:problem-dependent-schedule}.

Formally, we first define the term ${\Gamma}_{t}$ as:
\begin{align}
      \label{eq:first-gamma-def}
      {\Gamma}_{t} = \ln \left( 1 + \frac{1}{e} \left(
        1
        + \frac{1}{2} \ln (1 + t)
      \right) \right),
\end{align}
which is bounded by $\O(\log\log T)$ and treated as a constant in our analysis, following the convention of~\cite{COLT'14:second-order-Hedge,COLT'15:Luo-AdaNormalHedge}. The threshold generation function is then defined as follows:
\begin{align}
  \label{eq:threshold-function-known-G}
  &\mathcal{G}(t, i) = 56LD^2\left(\frac{3 \ln (2i + 1) + {\Gamma}_{t}}{\sqrt{\ln (2 i + 1)}} + \frac{5}{2}\right)^2 + 5D \notag  \\
  &\quad + 2D\frac{3 \ln (2i + 1) + {\Gamma}_{t}}{\sqrt{\ln (2 i + 1)}} + 9({\Gamma}_{t} + \ln (2 i + 1))GD ,
\end{align}
In this definition, for any time $t$, $s_i$ is the largest marker such that $s_i \leq t < s_{i+1}$.

\subsection{Overall Algorithm}
Algorithm~\ref{alg:adaptive-oco} presents our novel interval regret minimization algorithm, named \underbar{G}radient-v\underbar{A}riation \underbar{I}nterval \underbar{R}egret Minimization~(GAIR). The algorithm uses optimistic online gradient descent (Eq.~\eqref{eq:base-algorithm-omd}) as the base learners and Optimistic-Adapt-ML-Prod as the meta learner. In Lines~\ref{line:schedule}~--~\ref{line:schedule-end}, GAIR employs a scheduling mechanism inspired by~\cite{NeurIPS'22:efficient}, which initializes new base learners based on the observed cumulative loss of the final decision $\x_t$.

We now explain in more detail how the problem-dependent schedule from Eq.~\eqref{eq:problem-dependent-schedule} is implemented. Let $N_t$ be the number of markers registered up to the current time $t$; this value also corresponds to the total number of base learners initialized so far. When the cumulative loss exceeds a time-varying threshold, a new marker is recorded as $s_{N_t} = t$, and the corresponding $N_t$-th base learner is then initialized. Based on the index $N_t$ and the structure of $\tilde{\mathcal{S}}$, a unique deactivation marker $s_j$ can be determined, which in turn defines this learner's active interval as $[s_{N_t}, s_j - 1]$. The algorithm does not need to know the future value of $s_j$ at time $t$; it only needs to track the sequence of markers to deactivate learners at precisely the correct time (Line~\ref{line:ref-to-schedule}). The following theorem then summarizes the algorithm's required parameter tuning and its main theoretical guarantees.
\begin{myThm}
  \label{thm:adaptive-oco}
  Under Assumptions~\ref{ass:bounded-domain}--\ref{ass:smoothness}, additionally assuming that the Lipschitz constant $G$ and the smoothness constant $L$ are known, setting the step size of OMD in Eq.~\eqref{eq:base-algorithm-omd} as $\eta_{t,i} = (2D)/\sqrt{1 + \sum_{\tau = \text{start}_i}^t \norm{\nabla f_\tau(\x_\tau) - \nabla f_{\tau - 1}(\x_{\tau - 1})}_2^2}$, tuning the learning rate of the meta algorithm in Eq.~\eqref{eq:meta-algorithm-update-rule-known-G} as
  $\epsilon_{t,i} = \min \{\frac{1}{4GD}, \sqrt{\frac{2 \ln (1 + 2 i)} { (2GD)^2 + \sum_{s=\text{start}_i}^t (r_{s, i} - m_{s, i})^2}}\},$ setting the threshold function as specialized in Eq.~\eqref{eq:threshold-function-known-G}, then for any $\u \in \X$ and any $I = [r,s ]\subseteq [T]$, GAIR achieves the following interval regret bound:
  \begin{align*}
     \O\left(\min \left\{ \sqrt{\min\{V_{I}, F_{I}\}\log F_{I} \log F_{[1,s]}}, \sqrt{\abs{I}\log T} \right\}\right),
  \end{align*}
  where $F_{[a, b]} = \min_{\x \in \X} \sum_{t=a}^b f_t(\x)$ denotes the small loss, and $V_{[a,b]} = \sum_{t=a+1}^b \sup_{\x \in \X}\norm{\nabla f_t(\x) - \nabla f_{t-1}(\x)}_2^2$ represents the gradient variation over interval $[a,b]$.
\end{myThm}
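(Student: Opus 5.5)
We follow the decomposition in Eq.~\eqref{eq:regret-decomposition}. Fix $I=[r,s]$ and $\u\in\X$. We cover $I$ by $\O(\log F_I)$ consecutive intervals $I_1,\dots,I_k$ taken from the marker schedule $\tilde{\mathcal{S}}$. As in~\cite{NeurIPS'22:efficient}, the first interval may be only partially covered and the marker structure forces geometric growth of the covered lengths. Each $I_j$ has a base learner $i_j$ that is active throughout, with index $i_j\le N_s$. We then bound meta and base regret separately on each $I_j$.

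\textbf{Base regret.} The standard optimistic OGD analysis with the adaptive step size $\eta_{t,i}$ gives
\[
\sum_{t\in I_j}\inner{\nabla f_t(\x_t)}{\x_{t,i_j}-\u}\le \O\Big(D\sqrt{1+\bar V_{I_j}}\Big)-\sum_{t\in I_j}\frac{1}{\eta_{t,i_j}}\big(\norm{\x_{t,i_j}-\xh_{t,i_j}}^2+\norm{\x_{t,i_j}-\xh_{t+1,i_j}}^2\big).
\]
We will discard the negative stability terms, because learners are switched on and off.

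\textbf{Meta regret.} The Optimistic-Adapt-ML-Prod guarantee with the stated $\epsilon_{t,i}$ gives
\[
\O\Big(\big(\ln(1+2i_j)+\Gamma_s\big)\big(\sqrt{D^2\bar V_{I_j}}+GD\big)\Big).
\]
Here $\ln i_j\le\O(\log N_s)$. Summing over $j$ and applying Cauchy--Schwarz yields
\[
\AReg_I\lesssim\sqrt{\bar V_I\log F_I\log N_s}+\text{lower order}.
\]

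\textbf{Converting $\bar V_I$ to $V_I$ (main obstacle).} We write
\[
\norm{\nabla f_t(\x_t)-\nabla f_{t-1}(\x_{t-1})}^2\le 2\sup_\x\norm{\nabla f_t(\x)-\nabla f_{t-1}(\x)}^2+2\norm{\nabla f_{t-1}(\x_t)-\nabla f_{t-1}(\x_{t-1})}^2 .
\]
The usual route bounds the last term by $L^2\norm{\x_t-\x_{t-1}}^2$ and cancels it with algorithmic stability, but that route is unavailable here. Instead we use smoothness directly through the Bregman divergence:
\[
\norm{\nabla f(\x)-\nabla f(\y)}^2\le 2L\,\mathcal{D}_f(\y,\x)\quad\text{up to a convexity correction over }\X .
\]
We then keep the negative term $-\sum_t\mathcal{D}_{f_t}(\u,\x_t)$ that comes from $f_t(\x_t)-f_t(\u)=\inner{\nabla f_t(\x_t)}{\x_t-\u}-\mathcal{D}_{f_t}(\u,\x_t)$, instead of dropping it via linearization.

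The harder step is to relate $\norm{\nabla f_{t-1}(\x_t)-\nabla f_{t-1}(\x_{t-1})}^2$ to Bregman terms. A first attempt is to insert $\u$: bound it by $4L(\mathcal{D}_{f_{t-1}}(\u,\x_t)+\mathcal{D}_{f_{t-1}}(\u,\x_{t-1}))$. This creates a mismatch of function index with $f_t$, which we absorb into $V_I$ by another triangle step. If the exact cancellation fails, a fallback is to use self-bounding: $\norm{\nabla f(\x)}^2\le 4Lf(\x)$ for nonnegative smooth $f$. We then solve the resulting inequality $R\le \sqrt{a(V_I+L R+LF)}$-type to get $R\lesssim\sqrt{V_I\cdot\log}$, plus $L\cdot\log$ terms. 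The AM--GM step $\sqrt{cL X}\le X/2+cL$ is what lets the negative Bregman terms absorb the positive ones.

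\textbf{Remaining terms.}

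\emph{Small-loss bound.} This follows from the same display using the self-bounding inequality: $\bar V_I\le 8L\sum_{t\in I}f_t(\x_t)$. This gives $R\le\sqrt{c L(R+F_I)}$, hence $R=\O(\sqrt{F_I\log F_I\log F_{[1,s]}})$.

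\emph{Bounding $\log N_s$.} The threshold $\mathcal{G}$ is chosen so that each marker epoch accumulates loss at least $\Omega(LD^2\log i+GD\log i)$. Together with the regret over $[1,s]$ being $\le\sqrt{F_{[1,s]}\cdot\mathrm{polylog}}$, this gives $N_s=\O(F_{[1,s]})$ and $\log N_s=\O(\log F_{[1,s]})$. Similarly, $k=\O(\log F_I)$.

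\emph{Worst-case bound.} The $\sqrt{|I|\log T}$ bound follows because $\bar V_{I_j}\le 4G^2|I_j|$, $N_s\le T$, and the geometric structure lets the sum over pieces collapse to $\sqrt{|I|\log T}$ rather than carrying an extra $\sqrt{\log}$ factor.
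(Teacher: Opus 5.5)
Your plan follows the paper's architecture: the same base and meta components, a cover of $I$ by marker intervals, the exact identity $f_t(\x_t)-f_t(\u)=\inner{\nabla f_t(\x_t)}{\x_t-\u}-\mathcal{D}_{f_t}(\u,\x_t)$, and self-bounding for the small-loss part. However, the step you call the main obstacle is not closed as written.

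\textbf{The gradient-variation step.} You route the triangle inequality through $\nabla f_{t-1}(\x_t)$ and then bound $\norm{\nabla f_{t-1}(\x_t)-\nabla f_{t-1}(\u)}_2^2\le 2L\,\mathcal{D}_{f_{t-1}}(\u,\x_t)$. This produces a divergence with no matching negative term, since only $-\mathcal{D}_{f_t}(\u,\x_t)$ is available at round $t$. Once you are at the Bregman level, the function index cannot be switched at quadratic cost. Indeed $\mathcal{D}_{f_{t-1}}(\u,\x_t)-\mathcal{D}_{f_t}(\u,\x_t)=\mathcal{D}_{f_{t-1}-f_t}(\u,\x_t)$ is in general only bounded by $2D\sup_{\x}\norm{\nabla f_t(\x)-\nabla f_{t-1}(\x)}_2$. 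Summed, this contributes $\O(LD\sqrt{\abs{I}V_I})$ to $\bar V_I$, not $\O(V_I)$. The triangle step must be done on gradients before invoking smoothness. The paper does this in one move by inserting $\u$ into the original difference:
\begin{align*}
\norm{\nabla f_t(\x_t)-\nabla f_{t-1}(\x_{t-1})}_2^2 &\le 3\norm{\nabla f_t(\x_t)-\nabla f_t(\u)}_2^2+3\norm{\nabla f_t(\u)-\nabla f_{t-1}(\u)}_2^2\\
&\quad+3\norm{\nabla f_{t-1}(\u)-\nabla f_{t-1}(\x_{t-1})}_2^2 .
\end{align*}
Proposition~\ref{prop:bregman-divergence-property} then yields $\mathcal{D}_{f_t}(\u,\x_t)$ and $\mathcal{D}_{f_{t-1}}(\u,\x_{t-1})$ with matching indices. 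Both are absorbed by $-\sum_{t\in[\tau,s]}\mathcal{D}_{f_t}(\u,\x_t)$ via $\sqrt{ax}-x\le a/4$.

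Your fallback does not rescue the bound. Self-bounding reintroduces $L\sum_t f_t(\x_t)$, and $R\lesssim\sqrt{a(V_I+LR+LF_I)}$ only gives $R\lesssim\sqrt{a(V_I+LF_I)}+aL$. That is of order $\sqrt{\max\{V_I,F_I\}}$, not a gradient-variation bound.

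\textbf{The worst-case step.} In $\tilde{\mathcal{S}}$ it is the marker counts $i_{k+1}-i_k$ that grow geometrically, not the time lengths $\abs{I_k}$. So passing through $\bar V_{I_k}\le 4G^2\abs{I_k}$ and summing only yields $\sqrt{\abs{I}\log\abs{I}\log T}$, for example when the pieces have comparable lengths. The paper instead proceeds as follows:
\begin{itemize}
\item It uses the small-loss form $\sqrt{\sum_{t\in I_k}f_t(\x_t)\log T}$ on each piece.
\item It bounds $\sum_{t\in I_k}f_t(\x_t)\le(i_{k+1}-i_k)(\max\mathcal{T}+GD)$ via the threshold, and sums the resulting geometric series.
\item It uses $i_v-i_{v-1}\le GD\abs{I}/\mathcal{T}_{s_{i_{v-1}}}$ together with $i_v\le 2i_{v-1}$, so the threshold's logarithm cancels.
\end{itemize}

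\textbf{Two smaller points.} First, the piece $[r,s_p-1]$ before the first marker after $r$ cannot be handled by the meta guarantee, which is derived with $w_{\cdot,i}=1$ at the expert's start time. The paper bounds this piece by $\mathcal{T}_{s_p}+GD$ using non-negativity and the threshold; this is why its key bound involves $\bar V_{[\tau,s]}$ with $\tau=s_p$. Second, the meta factor is $(2\gamma_i+\ln N_{s+1}+\Gamma_s)/\sqrt{\gamma_i}=\O(\sqrt{\ln(2i+1)}+\Gamma_s)$, using $N_{s+1}\le 2i+1$ on $[s_i,s_j-1]\in\tilde{\mathcal{S}}$. It is not $\ln(1+2i_j)$ as you display; with your factor the final bound would carry $\log F_{[1,s]}$ instead of $\sqrt{\log F_{[1,s]}}$.
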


Notably, our algorithm achieves the first gradient-variation interval regret bounds. With the same configuration, it also implies both the small-loss and worst-case bounds, matching state-of-the-art results~\cite{EJS'17:coin-betting-adaptive,ICML19:Zhang-Adaptive-Smooth,NeurIPS'22:efficient} (see Table~\ref{tab:comparison}) and demonstrating the versatility of our approach. Beyond its theoretical guarantees, our algorithm is also computationally efficient, requiring only a single gradient query per round and maintaining at most $\mathcal{O}(\log F_{[1, t]})$ base learners at time~$t$, as established in Lemma~\ref{lemma:counting-number-barv}. Furthermore, through a white-box reduction, our main result directly implies a nearly optimal guarantee for interval dynamic regret~\cite{ICML'20:Ashok, AISTATS'20:Zhang}, a strictly stronger performance measure. This result, stated in Theorem~\ref{thm:interval-dynamic-regret}, is the first gradient-variation bound for interval dynamic regret to our knowledge. Further details are provided in Section~\ref{subsec:interval-dynamic-regret}.

In Theorem~\ref{thm:adaptive-oco}, our algorithm achieves a bound that is the minimum of three different types of bounds, each matching the best-known rates: the gradient-variation bound, the small-loss bound, and the optimal worst-case bound. Consequently, we also refer to it as a ``best-of-three-worlds'' result. We now highlight two technical connections that enable this result. First, we show how the empirical gradient variation, defined in~\eqref{eq:empirical-gradient-variation}, a quantity that naturally arises in our analysis, implies the small-loss bound. The key step is to bound this empirical quantity by a sum of squared gradient norms:
\begin{align*}
  \sum_{t\in I}\norm{\nabla f_t(\x_t) - \nabla f_{t-1}(\x_{t-1})}_2^2 \leq \O\Big(\sum_{t\in I} \norm{\nabla f_t(\x_t)}_2^2\Big).
\end{align*}
Then, by applying the self-bounding property of smooth functions, i.e., $\norm{\nabla f_t(\x_t)}_2^2 \leq 2Lf_t(\x_t)$, and following standard analytical steps, we can derive the small-loss bound.
Second, we demonstrate that this small-loss bound can be used to tighten the worst-case bound. The key insight lies in our problem-dependent schedule, which uses cumulative loss to construct a sequence of geometric intervals. This construction can avoid the use of the Cauchy-Schwarz inequality, allowing us to instead leverage a geometric summation to eliminate a logarithmic factor for a tighter final bound.
\begin{myRemark}
A more desirable bound would take the form
$$\mathcal{O}\left(\min\{\sqrt{V_I \log V_I \log V_T}, \sqrt{F_I \log F_I \log F_{T}}, \sqrt{\abs{I}\log T}\}\right),$$
where the logarithmic factors also adapt to the corresponding primary term ($V_I$, $F_I$, or $\abs{I}$). This is motivated by the observation that in some cases, gradient variation can be significantly smaller than the small-loss value, and vice versa~\cite{JMLR'24:Sword++}. This raises an open question: can we fully decouple the gradient variation and small-loss quantities to achieve such a bound?

This remains a technical challenge due to the design of our problem-dependent schedule. The schedule's mechanism is based on monitoring the cumulative loss, $\sum_{t \in I} f_t(\x_t)$, a quantity that is crucial for our analysis and for tightening the worst-case bound.
Because the schedule is tied to the cumulative loss, its performance is naturally coupled with the small-loss quantity. This coupling is the reason the logarithmic factors in our final bound depend on $F_I$. The core difficulty is that there is no straightforward way to relate the cumulative loss to the gradient variation; doing so would require establishing a tight, general connection, such as $V_I = \Theta(F_I)$. We leave this problem as an open question for future research.
\end{myRemark}

\begin{algorithm}[!t]
  \caption{GAIR}
  \label{alg:adaptive-oco}
  \begin{algorithmic}[1]
    \REQUIRE Lipschitz constant $G$, diameter $D$, and the threshold function $\mathcal{G}:\mathbb{N} \mapsto \R_+$.
    \STATE \textbf{Initialize:} active learners' index set $A_0 = \emptyset$; the number of initialized base learners $N_0 = 0$; cumulative loss $L_0 = \infty$, the zeroth marker $s_0 = 0$, and by default $\ellb_0 = \boldsymbol{0}$.
    \FOR{$t=1$ {\bfseries to} $T$}
    \IF{$L_{t-1} > \mathcal{G}(t-1, N_{t-1})$} \label{line:schedule}
        \STATE Update $N_t = N_{t-1} + 1$, register a marker $s_{N_t} = t$, $L_t = 0$, and according to Eq.~\eqref{eq:problem-dependent-schedule}, remove base learners $A_t = A_{t-1} \setminus \{i: [s_i, s_{N_t} - 1] \in \tilde{\mathcal{S}}\}$; \label{line:ref-to-schedule}
        \STATE Initialize the $N_t$-th new base learner according to~Eq.~\eqref{eq:base-algorithm-omd}, whose starting time is $s_{N_t}$, update the index set $A_t = A_t \cup \{N_t\}$;
    \ELSE
        \STATE $A_t = A_{t-1}$, $N_t = N_{t-1}$;
    \ENDIF \label{line:schedule-end}
    \STATE Obtain $\x_{t,i}$ from each base learner $i \in A_t$, send $m_{t,i} = \inner{\nabla f_{t-1}(\x_{t-1})}{\x_t - \x_{t,i}} $ and $A_t$ to the meta learner;
    \STATE Obtain $\p_t \in \Delta_{\abs{A_t}}$ from the meta learner, submit $\x_t = \sum_{i \in A_t} p_{t, i}\x_{t,i}$;
    \STATE Obtain $f_t(\x_t)$ and $\nabla f_t(\x_t)$, update $L_{t+1} = L_t + f_t(\x_t)$, set $\ell_{t,i} = \inner{\nabla f_t(\x_t)}{\x_{t,i}}$ for the meta learner, and set $g_{t} = M_{t+1} = \nabla f_t(\x_t)$ for the base learners. \label{line:oco-update}
    \ENDFOR
  \end{algorithmic}
\end{algorithm}

\subsection{Key Analysis of GAIR}
\label{subsec:key-analysis}
In this section, we begin by revisiting how to obtain gradient-variation bounds when minimizing static regret, in order to convey the core ideas developed in prior work. We then discuss the challenges that arise when extending these techniques to optimize non-stationary performance measures. Finally, we present our proposed solution for establishing the interval gradient-variation regret bound.

The core idea for achieving gradient-variation bounds is to leverage algorithmic stability. Technically, algorithmic stability manifests as negative terms in the regret bound that involve the distance between successive decisions. We use optimistic online gradient descent (Eq.~\eqref{eq:base-algorithm-omd}) as an example. If a base learner sets the optimism $M_t = \nabla f_{t-1}(\x_{t-1,i})$ and uses the gradient $\g_t = \nabla f_t(\x_{t,i})$ (that is, it updates using gradients evaluated at its own decisions rather than the aggregated decision), then the regret $\sum_{t=r}^s \inner{\nabla f_t(\x_t)}{\x_t - \u}$ can be bounded as follows:
\begin{align*}
\sum_{t=r}^s \eta_{t,i}\norm{\nabla f_t(\x_{t, i}) - \nabla f_{t-1}(\x_{t-1, i})}_2^2 - \frac{1}{\eta_{t,i}} \norm{\x_{t,i} - \x_{t-1,i}}_2^2,
\end{align*}
where the negative term reflects algorithmic stability~\cite{JMLR'24:Sword++}. The key step is to decompose the gradient difference via the triangle inequality, i.e., $\norm{\nabla f_t(\x_{t, i}) - \nabla f_{t-1}(\x_{t-1, i})}_2^2 \leq \norm{\nabla f_t(\x_{t, i}) - \nabla f_{t-1}(\x_{t, i})}+ \norm{\nabla f_{t-1}(\x_{t, i}) - \nabla f_{t-1}(\x_{t-1, i})}_2^2$.
The first term captures the desired gradient variation at a fixed point, while the second term can be bounded as $ \norm{\nabla f_{t-1}(\x_{t, i}) - \nabla f_{t-1}(\x_{t-1, i})}_2^2 \leq L^2\norm{\x_{t,i} - \x_{t-1,i}}_2^2$ via smoothness. By tuning $\eta_{t,i} \approx 1/L$, this positive term is canceled by the negative stability term, leaving a bound that depends on the desired gradient variation.

The analysis above applies to static regret, which is well-studied in the literature~\cite{COLT'12:variation-Yang, conf/colt/RakhlinS13, NIPS'15:fast-rate-game}. However, new challenges arise within our two-layer ensemble structure. A key difference between our setting and prior work on gradient variation~\cite{COLT'22:parameter-free-omd, JMLR'24:Sword++,ICML'22:Zhang-simple, NeurIPS'23:universal,NeurIPS'24:universal,ICML'26:Yu-BCO-GV} lies in how base learners are managed. In previous work, base learners are typically initialized once and remain active throughout the entire process. In contrast, algorithms for interval regret must manage a dynamically changing set of base learners. This evolving structure introduces instability into the decision trajectory, making it difficult to leverage the stability between successive decisions to bound the gradient variation.

To better understand the challenges of deriving gradient-variation interval regret bounds with a two-layer ensemble structure, we first revisit the classical analysis approach from the literature~\cite{NeurIPS'23:universal,JMLR'24:Sword++}. Specifically, constructing a gradient-variation bound often requires handling the empirical gradient variation, $\bar{V}_{[r,s]} = \sum_{t=r+1}^{s} \left\|\nabla f_t(\x_t) - \nabla f_{t-1}(\x_{t-1})\right\|_2^2$, as defined in~\eqref{eq:empirical-gradient-variation}. Within the ensemble algorithm, the decision $\x_t = \sum_{i \in A_t} p_{t,i} \x_{t,i}$ is an aggregation from the base learners, which introduces a complex coupling between the meta learner and the base learners that complicates the analysis. Following an argument analogous to the single-learner case discussed above, the empirical gradient variation can be decomposed as:
\begin{align*}
  \bar{V}_{[r,s]} &\lesssim \sum_{t=r+1}^{s} \norm{\nabla f_t(\x_t) - \nabla f_{t-1}(\x_{t})}_2^2 + \sum_{t=r+1}^{s} \norm{\x_t - \x_{t-1}}_2^2\\
  &\lesssim \sum_{t=r+1}^{s} \norm{\nabla f_t(\x_t) - \nabla f_{t-1}(\x_{t})}_2^2 \notag \\
  &\quad + \sum_{t=r+1}^{s} \norm{\p_t - \p_{t-1}}^2_{1} + \sum_{i} p_{t,i} \norm{\x_{t,i} - \x_{t-1,i}}_2^2 .
\end{align*}
Pioneering work~\cite{JMLR'24:Sword++} proposes to cancel the second and third terms using negative terms that arise from the stability of the meta-learner and base learners, under the key algorithmic configuration that the dimension of $\p_t$ is fixed.

However, this analysis approach is not applicable to interval regret minimization, primarily for two reasons. First, it is unclear whether common meta algorithms for interval regret, such as AdaNormalHedge~\cite{COLT'15:Luo-AdaNormalHedge} or Optimistic-Adapt-ML-Prod~\cite{NIPS'16:Wei-non-stationary-expert}, provide the necessary negative terms of the form $-\sum_{t} \|\p_t - \p_{t-1}\|_1^2$ that are crucial for the cancellation. Second, and more fundamentally, algorithms for interval regret must manage a dynamically changing set of base learners. This causes the weight vector $\p_t$ to be unstable not only in value but also in its dimension over time, directly violating the fixed-dimension assumption. This instability makes it difficult to exploit such negative terms even if the meta-algorithm could theoretically provide them.

Inspired by recent advances in offline accelerated optimization~\cite{NIPS'19:UnixGrad, ICML'20:Bregman-divergence-negative-term} and online universal optimization~\cite{NeurIPS'24:universal}, we propose an alternative analysis technique. Our approach does not rely on the algorithmic stability terms used in prior work to bound the empirical gradient variation; instead, we leverage more refined properties of smooth functions to guide our analysis.

While traditional OCO analyses often begin by linearizing the loss function, they may overlook the important role of the negative Bregman divergence. The analysis can start from the following exact identity:
\begin{align}
  \label{eq:linearization-bregman}
  f_t(\x_t) - f_t(\u) = \inner{\nabla f_t(\x_t)}{\x_t - \u} - \D_{f_t}(\u, \x_t),
\end{align}
where $\D_f(\x, \y) = f(\x) - f(\y) - \inner{\nabla f(\y)}{\x - \y}$ is the Bregman divergence, which is central to our analysis. Notably, it is an intrinsic property of the loss function itself, arising independently of any specific algorithm structure.

To exploit the negative Bregman divergence term from~\eqref{eq:linearization-bregman}, we utilize the following property of smooth functions:
\begin{myProp}[Theorem 2.1.5 of~\cite{book'18:Nesterov-OPT}]
  \label{prop:bregman-divergence-property}
  For any $\x, \y \in \X$, an $L$-smooth function $f:\X \mapsto \R$ satisfies
  \begin{align*}
    \norm{\nabla f(\x) - \nabla f(\y)}^2_2 \leq 2L \D_f(\y, \x) \leq L^2 \norm{\x - \y}^2_2.
  \end{align*}
\end{myProp}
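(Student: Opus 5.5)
The plan is to prove the two inequalities separately, using only convexity of $f$ (the functions $f_t$ in the paper are convex by the OCO setting) and $L$-smoothness. I assume, as is implicit in citing Nesterov, that $f$ is convex and $L$-smooth on all of $\R^d$, or on an open convex set large enough for the auxiliary gradient step below. I write $\D_f(\y,\x) = f(\y) - f(\x) - \inner{\nabla f(\x)}{\y - \x}$.

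\emph{Right inequality.} I will use the fundamental theorem of calculus along the segment from $\x$ to $\y$:
\begin{align*}
\D_f(\y,\x) = \int_0^1 \inner{\nabla f(\x + \tau(\y - \x)) - \nabla f(\x)}{\y - \x}\, d\tau .
\end{align*}
By Cauchy--Schwarz and the Lipschitz property of $\nabla f$, the integrand is at most $L\tau\norm{\y-\x}_2^2$. Integrating gives $\D_f(\y,\x) \le \frac{L}{2}\norm{\x-\y}_2^2$, and multiplying by $2L$ yields $2L\,\D_f(\y,\x) \le L^2\norm{\x-\y}_2^2$.

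\emph{Left inequality (the substantive part).} I fix $\x$ and define the tilted function
\begin{align*}
\phi(\z) = f(\z) - \inner{\nabla f(\x)}{\z}.
\end{align*}
This function is convex and $L$-smooth, and $\nabla\phi(\x) = 0$, so by convexity $\x$ is a global minimizer of $\phi$. The right inequality, applied to $\phi$, gives the descent lemma $\phi(\z) \le \phi(\y) + \inner{\nabla\phi(\y)}{\z-\y} + \frac{L}{2}\norm{\z-\y}_2^2$. Taking $\z = \y - \frac{1}{L}\nabla\phi(\y)$ and using minimality of $\x$, I get
\begin{align*}
\phi(\x) \le \phi\Big(\y - \tfrac{1}{L}\nabla\phi(\y)\Big) \le \phi(\y) - \frac{1}{2L}\norm{\nabla\phi(\y)}_2^2 .
\end{align*}
Now substitute $\nabla\phi(\y) = \nabla f(\y) - \nabla f(\x)$ and the definition of $\phi$. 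This gives $\frac{1}{2L}\norm{\nabla f(\y) - \nabla f(\x)}_2^2 \le f(\y) - f(\x) - \inner{\nabla f(\x)}{\y-\x} = \D_f(\y,\x)$, which is exactly the left inequality.

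\emph{Main obstacle.} The only delicate point is domain-related. The auxiliary point $\y - \frac{1}{L}\nabla\phi(\y)$ may lie outside $\X$, so the argument needs $f$ to be convex and $L$-smooth on a neighborhood containing that point. Strictly on a constrained set, the left inequality can fail. I will therefore state that the $f_t$ are understood as convex $L$-smooth functions on $\R^d$ restricted to $\X$, which matches the setting of the cited Theorem~2.1.5 of~\cite{book'18:Nesterov-OPT}. Everything else is routine.
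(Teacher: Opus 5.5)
Your proof is correct and is essentially the argument behind the result the paper cites without proof (Nesterov's Theorem~2.1.5). The right inequality is the descent lemma obtained from the integral identity; the left inequality uses the tilted function $\phi(\z) = f(\z) - \inner{\nabla f(\x)}{\z}$, which is minimized at $\x$, together with one gradient step of length $1/L$. Your added hypotheses, convexity and smoothness on all of $\R^d$ rather than only on $\X$, are exactly those of the cited theorem. They are genuinely needed, since the statement as written omits convexity, the left inequality fails without it, and it can also fail for functions defined only on a constrained set.
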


Proposition~\ref{prop:bregman-divergence-property} reveals that Bregman divergence provides a tighter approximation of gradient deviation compared to standard norm-based measures. To take advantage of the negative Bregman divergence in Eq.~\eqref{eq:linearization-bregman}, we reconsider how the empirical gradient variation is decomposed. Instead of directly analyzing the difference between $\x_t$ and $\x_{t-1}$, which inevitably introduces algorithm-dependent quantities, we adopt the following decomposition:
\begin{align*}
  \bar{V}_{[r,s]} &\lesssim \sum_{t=r+1}^{s} \norm{\nabla f_t(\x_t) - \nabla f_{t}(\u)}_2^2 + \norm{\nabla f_{t}(\u) - \nabla f_{t-1}(\u)}_2^2 \notag \\
  &\quad + \norm{\nabla f_{t-1}(\x_{t-1}) - \nabla f_{t-1}(\u)}_2^2 \notag \\
  & \lesssim \sum_{t=r+1}^{s} \norm{\nabla f_{t}(\u) - \nabla f_{t-1}(\u)}_2^2 + L\sum_{t=r}^{s} \D_{f_t}(\u, \x_t).
\end{align*}
The first term in the final bound is the gradient variation with respect to the fixed point $\u$, while the second term, involving the sum of Bregman divergences, can now be canceled by the negative Bregman divergence terms from Eq.~\eqref{eq:linearization-bregman}. A similar decomposition appears in~\cite{NeurIPS'24:universal}, but it is used for a different purpose. While~\cite{NeurIPS'24:universal} focuses on adapting to unknown curvature and reducing algorithmic complexity, our goal is to address the instability between $\p_t$ and $\p_{t-1}$, which arises from managing a dynamically changing set of base learners, a structural challenge unique to the interval regret setting.

As shown in Section~\ref{subsec:basic-components}, both the base regret and the meta regret can be bounded by the empirical gradient variation. This analysis allows us to utilize the negative Bregman divergence terms to cancel the positive Bregman terms arising at both the base and meta levels. As a result, we obtain a gradient-variation bound of the form $\sum_{t} \norm{\nabla f_t(\u) - \nabla f_{t-1}(\u)}_2^2$. The full proof is provided in Appendix~\ref{appendix:proof-our-approach}.

\section{Towards Lipschitz-Adaptivity in Non-Stationary Online Learning}
\label{sec:lipschitz_adaptive}
In the previous section, we presented a method for obtaining a gradient-variation interval regret bound under the assumption that the parameters $D$, $L$, and $G$ are known. However, estimating the Lipschitz constant $G$ and smoothness constant $L$ in advance is often impractical in non-stationary environments due to the changing nature of the online functions $f_t(\cdot)$.
This section addresses this challenge by developing a more robust, Lipschitz-adaptive method that provides competitive guarantees \emph{without} prior knowledge of $G$ and $L$. Our approach is twofold. First, we introduce the key component for achieving this adaptivity, a redesigned meta algorithm with corresponding theoretical guarantees, which may be of independent interest. Our meta algorithm can gracefully recover the previous result when the Lipschitz constant is provided. Second, we leverage this new component to construct a Lipschitz-adaptive algorithm that achieves the desired gradient-variation interval regret bounds.

\subsection{A Graceful Optimistic Lipschitz-Adaptive Meta Algorithm}
As reviewed in Section~\ref{sec:preliminaries}, designing algorithms with interval regret guarantees requires addressing three components: the base learners, the meta learner, and the schedule. Achieving Lipschitz adaptivity for the base learners is relatively straightforward, as demonstrated in prior work~\cite{TCS'18:SOGD, NIPS'19:UnixGrad, ICML'20:Bregman-divergence-negative-term}.
The schedule can also be made parameter-free. While the problem-dependent schedule from Section~\ref{sec:approach} requires the smoothness constant $L$ to optimize logarithmic dependencies, one can instead employ the problem-independent schedule from~\eqref{eq:problem-independent-schedule}. This alternative depends only on the timestamp to manage base learners, thereby avoiding the need for prior knowledge of $L$.
The primary technical challenge, therefore, lies in designing a more adaptive meta learner. The meta learner already faces several challenging requirements: it has to exhibit adaptivity to the environments, support the ``sleeping expert'' paradigm~\cite{STOC'97:sleeping-expert,COLT'15:Luo-AdaNormalHedge} for a dynamically changing set of experts, and correctly incorporate the scheduling strategy. These complex demands are what make designing the meta learner the main bottleneck, especially when important information, the Lipschitz constant $G$, is unknown.

To this end, we propose a new meta algorithm, \underbar{L}ipschitz-Adaptiv\underbar{e} \underbar{O}ptimistic-Adapt-ML-Prod (LEO Adapt-ML-Prod), which is presented in Algorithm~\ref{alg:leo-ada-ml-prod}. Our approach is inspired by Optimistic-Adapt-ML-Prod~\cite{NIPS'16:Wei-non-stationary-expert}, a meta algorithm that uses a productive update rule~\cite{COLT'05:second-order-Hedge,COLT'14:second-order-Hedge} and supports predictions with optimism.
Our key technical innovations are twofold. First, the algorithm is Lipschitz-adaptive; it leverages a clipping technique~\cite{COLT'19:ashok-cubic} to learn the Lipschitz constant on the fly (Line~\ref{line:enroll-mt}). Second, we propose a novel learning rate update rule (Line~\ref{line:redesign-lr}) that takes the minimum of two online-estimated quantities. This new learning rate ensures that if the Lipschitz constant is provided as prior information, our algorithm gracefully recovers the minimax-optimal regret bound. Lemma~\ref{lemma:time-varying-lr} provides a refined analysis tailored to this new learning rate design.

\begin{algorithm}[!t]
  \caption{LEO Adapt-ML-Prod}
  \label{alg:leo-ada-ml-prod}
  \begin{algorithmic}[1]
    \REQUIRE prior information of the scale $B_0$.
    \STATE \textbf{Initialize:} initialize the set of active experts $A_0 = \emptyset$, the number of experts $N_0 = 0$.
    \FOR{$t=1$ {\bfseries to} $T$}
    \STATE Update $A_{t}$, $N_t = N_{t-1} + \abs{A_{t} \backslash A_{t-1}}$, set $w_{t,i} = 1$, $\gamma_{i} = \ln(2i + 1)$ and $\eta_{t,i} = \min\big\{\sqrt{\gamma_{i}/(1 + B_{t-1}^2)}, 1 / (2B_{t-1})\big\}$ for new learner $i \in A_{t}\backslash A_{t-1}$; \label{line:update-sleeping-experts}
    \STATE Receive optimism $\m_t \in \R^{\abs{A_t}}$, update $\tilde{w}_{t,i} = w_{t,i}\exp(\eta_{t,i}m_{t,i})$ for $i \in A_t$; \label{line:correction}
    \STATE Calculate decision $\p_t \in \Delta_{\abs{A_t}}$ with $p_{t, i} = \frac{\eta_{t, i} \tilde{w}_{t, i}}{\sum_{j \in A_t} \eta_{t, j} \tilde{w}_{t, j}}$ and submit it;\label{line:calculate}
    \STATE Receive regret $\r_t \in \R^{\abs{A_t}}$, where $r_{t,i} = \inner{\p_t}{\ellb_t} - \ell_{t,i}$, update $B_{t} = \max\{B_{t-1}, \norm{\r_{t} - \m_{t}}_{\infty}\}$, and build the clipped regret $\bar{r}_{t,i} = m_{t,i} + \frac{B_{t-1}}{B_{t}}(r_{t,i} - m_{t,i})$;\label{line:enroll-mt}
    \STATE For all $i \in A_t$, update the learning rate $\eta_{t+1, i}$ using:
    \begin{align}
      \label{eq:double-descent-lr}
        \min\left\{\frac{1}{2B_t}, \sqrt{\frac{\gamma_{i}}{B_t^2 + \sum_{s=\text{start}_i}^t (\bar{r}_{s,i} - m_{s,i})^2 }}\right\};
    \end{align}
    \label{line:redesign-lr}
    \STATE Update $$w_{t+1, i} = \left(w_{t,i}\exp\left(\eta_{t, i}\bar{r}_{t,i} - \eta_{t,i}^2(\bar{r}_{t,i} - m_{t,i})^2\right)\right)^{\frac{\eta_{t+1,i}}{\eta_{t,i}}},$$ for all $i \in A_t$. \label{line:weight-update}
    \ENDFOR
  \end{algorithmic}
\end{algorithm}

In detail, Algorithm~\ref{alg:leo-ada-ml-prod} follows the standard Prediction with Expert Advice (PEA) protocol~\cite{COLT'90:Vovk-aggregate,journals/iandc/LittlestoneW94}. At each round, the learner provides a probability distribution $\p_t \in \Delta_{\abs{A_t}}$ over the available experts and then receives a loss vector $\boldsymbol{\ell}_t \in \R^{\abs{A_t}}$. The instantaneous regret with respect to the $i$-th expert is defined as $ r_{t,i} = \langle \p_t, \boldsymbol{\ell}_t \rangle - \ell_{t,i} $.
The algorithm has two key adaptive features. First, it explicitly supports the ``sleeping expert'' paradigm (Line~\ref{line:update-sleeping-experts}), allowing the set of active experts $A_t$ to change at each round. Second, it is Lipschitz-adaptive, meaning it does not require the true Lipschitz constant of the loss vectors, $\max_{t}\norm{\ellb_t}_{\infty}$, only an initial estimate $B_0$. The core mechanism for this is a clipping technique (Line~\ref{line:enroll-mt}), where the algorithm performs updates using a clipped regret $\bar{\r}_t$. Because the scale of this clipped quantity is known,
\begin{align*}
  \norm{\bar{\r}_{t} - \m_t}_{\infty} = \frac{B_{t-1}}{B_t}  \max_{i \in A_t} \ \abs{r_{t,i} - m_{t,i}} \leq B_{t-1},
\end{align*}
where we use the definition $ \abs{\bar{r}_{t,i} - m_{t,i}}= \frac{B_{t-1}}{B_t} \abs{r_{t,i} - m_{t,i}}$. It allows for the proper tuning of the learning rates without knowledge of the true Lipschitz constant~\cite{COLT'19:Lipschitz-MetaGrad,COLT'21:impossible-tuning}.

However, even with the clipping technique to handle unknown Lipschitzness, designing a Lipschitz-adaptive meta algorithm suitable for our specific goal of gradient-variation interval regret remains a challenge, as existing methods are insufficient. As summarized in Table~\ref{tab:leo-ada-ml-prod}, prior work in the PEA setting does not satisfy our requirements. These approaches either do not provide a direct mechanism for incorporating optimism~\cite{COLT'19:Lipschitz-MetaGrad}, or their regret bounds contain additional logarithmic factors~\cite{COLT'21:impossible-tuning,NeurIPS'24:LocalSmooth}. The presence of these additional factors is a key consideration, as they hinder the recovery of the optimal minimax bound when the Lipschitz constant is provided as prior information. These limitations motivate the design of a new meta algorithm.
Our design is guided by an ideal regret bound of the form $\O\big(\sqrt{\sum_{t=1}^T (\ell_{t,i^*} - m_{t,i^*})^2 \log N}\big)$. While such a bound that depends directly on the loss term $(\ell_{t,i^*} - m_{t,i^*})^2$ with optimal logarithmic dependence remains an open problem, our new algorithm takes an alternative approach. It achieves a bound based on the instantaneous regret term, $(r_{t,i} - m_{t,i})^2$, which proves sufficient for deriving our final gradient-variation guarantees.

\begin{table*}[t]
  \centering
  \caption{Comparison of guarantees for Lipschitz-adaptive meta-algorithms under the standard Prediction with Expert Advice setting. $N$ denotes the number of initialized experts, $B_T = \O(\max\{G, B_0\})$ denotes the maximum value of the Lipschitz constant $G$ and the initialized guess $B_0$. Following the convention of~\cite{COLT'14:second-order-Hedge,COLT'15:Luo-AdaNormalHedge}, we treat $\log \log (TB_T)$ terms as constants.}
  \label{tab:leo-ada-ml-prod}
  \renewcommand*{\arraystretch}{1.8}
  \begin{tabular}{ccc}
    \hline
    \textbf{Work}                                    & {\textbf{Regret Bounds}}                                                                                                                        & \textbf{Optimism} \\ \hline
    \cite{COLT'19:Lipschitz-MetaGrad}                & $\O\left(\sqrt{\sum_{t=1}^T (r_{t, i_{\star}}^2 ) \cdot \log N} + B_T\log N\right)$                                                             & \xmark            \\

    \cite{COLT'21:impossible-tuning}                 & $\O\left(\sqrt{\sum_{t=1}^T (\ell_{t, i_{\star}} - m_{t, i_{\star}} )^2 \cdot \log (NT)} + B_T\log (NT)\right)$                                 & \cmark            \\
    \cite{NeurIPS'24:LocalSmooth}                    & $\O\left(\sqrt{\sum_{t=1}^T (r_{t, i_{\star}} - m_{t, i_{\star}} )^2  }\cdot \log(NB_T) + B_T\log(NB_T)\right)$                                 & \cmark            \\
    \textbf{Ours (Theorem~\ref{thm:meta-algorithm})} & $\O\left(\sqrt{\sum_{t=1}^T (r_{t, i_{\star}} - m_{t, i_{\star}} )^2  }\cdot ({\log N} + \log (B_T))/\sqrt{\log N} + B_T\log(NB_T)\right)$ & \cmark            \\
    \hline
  \end{tabular}
\end{table*}

Theorem~\ref{thm:meta-algorithm} summarizes the guarantees of our proposed meta-algorithm, LEO Adapt-ML-Prod, which may be of independent interest. As shown in Table~\ref{tab:leo-ada-ml-prod}, our result strictly improves upon the bound of~\cite{NeurIPS'24:LocalSmooth}.
In comparison to other state-of-the-art methods, our algorithm achieves a form of Pareto optimality by striking a unique balance. Specifically, while~\cite{COLT'19:Lipschitz-MetaGrad} achieves tighter logarithmic factors, it does not incorporate optimism. On the other hand, while~\cite{COLT'21:impossible-tuning} obtains a bound based on the stronger quantity $(\ell_{t,\is} - m_{t,\is})^2$, it comes with additional $\log T$ factors and is forced to restart periodically. Our algorithm strikes a balance: it supports optimistic updates while achieving a strong second-order bound, a combination that is useful for our goal of designing non-stationary OCO algorithms.
\begin{myThm}
\label{thm:meta-algorithm}
Define $m_{t,i} = \inner{\p_t}{\h_t} - h_{t,i}$ for each $i \in A_t$, where $\h_t \in \R^{\abs{A_t}}$.
With the convention that $\ell_{t-1,i} = 0$ for newly added experts (i.e., $i \in A_t \setminus A_{t-1}$),
Algorithm~\ref{alg:leo-ada-ml-prod} guarantees that, for any expert $i_\star$ active throughout an interval $[r, s] \subseteq [T]$, the regret satisfies:
\begin{align*}
&\sum_{t=r}^s \!\left(\inner{\p_t}{\ellb_t} - \ell_{t,i_\star}\right)
\le  3\big(\tilde{\Gamma}_{s} + \ln N_{s+1}\big) B_s
+ B_s - B_{r-1}\\
& \quad \quad \quad + \sqrt{\sum_{t=r}^s (r_{t,i_\star} - m_{t,i_\star})^2}\,
\frac{2 \gamma_{i_\star} + \ln N_{s+1} + \tilde{\Gamma}_{s}}{\sqrt{\gamma_{i_\star}}},
\end{align*}
where $\tilde{\Gamma}_{s}$ is defined for any interval $[r,s]$ as
\begin{align}
  \tilde{\Gamma}_{s}
  &= \ln\Bigg(
  1 + \frac{1}{e}\bigg(
  \frac{B_s^2}{B_{0}^2}
  + \frac{1}{2} \ln\!\big(1 + (s\!-\!r\!+\!1)\tfrac{B_s^2}{B_{0}^2}\big) \notag \\
  &\quad + \ln\!\big(\tfrac{B_s}{B_{0}}\big)
  + \tfrac{\max_{t \in [1,s]} |B_t - B_{t-1}|}{B_{0}}
  \bigg)\!\Bigg) \label{eq:def-gamma} \\
  &\le \O\!\Big(\log\!\big(\tfrac{B_s^2}{B_{0}^2} + \log\!\big(s \cdot \tfrac{B_s}{B_{0}}\big)\big)\Big).\notag
\end{align}
In the above analysis, $N_{s+1} = \abs{\cup_{t=1}^{s+1} A_t}$ denotes the total number of experts initialized up to round $s+1$, and $B_s = \max\{B_0, \max_{t \in [s]} \norm{\r_t - \m_t}_\infty\}$ represents the maximum scale of the algorithm's input vectors up to time $s$.
\end{myThm}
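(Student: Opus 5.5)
The argument follows the potential-function analysis of Adapt-ML-Prod / Optimistic-Adapt-ML-Prod~\cite{COLT'14:second-order-Hedge,NIPS'16:Wei-non-stationary-expert}, adapted to clipped regrets and to the new learning rate~\eqref{eq:double-descent-lr}. First, we reduce the true regret to the clipped regret. Since $r_{t,i}-\bar r_{t,i} = (1-\frac{B_{t-1}}{B_t})(r_{t,i}-m_{t,i})$ and $|r_{t,i}-m_{t,i}|\le B_t$, each round contributes at most $B_t-B_{t-1}$. Summing over $[r,s]$ and telescoping gives $\sum_{t=r}^s r_{t,i_\star}\le \sum_{t=r}^s\bar r_{t,i_\star} + B_s-B_{r-1}$, which is exactly the additive $B_s-B_{r-1}$ term in the statement. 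Similarly, $(\bar r_{t,i}-m_{t,i})^2\le (r_{t,i}-m_{t,i})^2$, so it suffices to prove the bound for $\bar r$ with $\sum(\bar r-m)^2$.

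Second, we bound the potential $W_t=\sum_{i\in\cup_{\tau\le t}A_\tau} w_{t,i}$, where sleeping experts keep their last weight frozen and new experts enter with $w=1$. The key one-step fact uses $\eta_{t,i}|\bar r_{t,i}-m_{t,i}|\le \frac{1}{2B_{t-1}}B_{t-1}=\frac12$ and the inequality $x\,e^{x-x^2}$-type bound $\exp(x-x^2)\le 1+x$ for $x\ge -1/2$, applied with $x=\eta_{t,i}(\bar r_{t,i}-m_{t,i})$. Combined with the choice of $\p_t$ (proportional to $\eta_{t,i}w_{t,i}e^{\eta_{t,i}m_{t,i}}$) and with $m_{t,i}=\inner{\p_t}{\h_t}-h_{t,i}$, this gives $\sum_i p_{t,i}\bar r_{t,i}=\frac{B_{t-1}}{B_t}\sum_i p_{t,i}(r_{t,i}-m_{t,i})+\sum_i p_{t,i}m_{t,i}=0$. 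Hence the intermediate sum $\sum_i w_{t,i}\exp(\eta_{t,i}\bar r_{t,i}-\eta_{t,i}^2(\bar r-m)^2)\le \sum_i w_{t,i}$. Because the learning rates are non-increasing, raising to the power $\eta_{t+1,i}/\eta_{t,i}\le 1$ is handled by $y^{\alpha}\le y+(1-\alpha)/... $. Concretely, we use $x^{a}\le x+ \frac{1-a}{e}$-style bounds, i.e.\ $x^{\eta'/\eta}\le x+\frac{1}{e}(\eta/\eta'-1)$ for $x\ge 0$ as in~\cite{COLT'14:second-order-Hedge}. This yields $W_{s+1}\le N_{s+1}+\frac1e\sum_{t,i}(\frac{\eta_{t,i}}{\eta_{t+1,i}}-1)$.

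Third, we control the sum $\sum_t(\eta_{t,i}/\eta_{t+1,i}-1)$, which I expect to be the main obstacle; this is where the two-branch learning rate and unknown $B_t$ interact. I would bound $\eta_{t,i}/\eta_{t+1,i}-1\le \ln(\eta_{t,i}/\eta_{t+1,i})$ plus a correction, telescope the logarithms to $\ln(\eta_{\text{start}_i,i}/\eta_{s+1,i})$, and evaluate using $\eta_{\text{start}}\ge \min\{\sqrt{\gamma_i/(1+B_0^2)},1/(2B_s)\}$ and $\eta_{s+1}\ge\min\{1/(2B_s),\sqrt{\gamma_i/(B_s^2+(s-r+1)B_s^2)}\}$. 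The resulting ratio produces exactly the terms $\frac{B_s^2}{B_0^2}$, $\frac12\ln(1+(s-r+1)B_s^2/B_0^2)$ and $\ln(B_s/B_0)$. The jumps of the branch switch produce $\max_t|B_t-B_{t-1}|/B_0$. Together these give $\ln W_{s+1}\le \ln N_{s+1}+\tilde\Gamma_s$. I would isolate this computation as a lemma, presumably Lemma~\ref{lemma:time-varying-lr}.

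Finally, $\ln w_{s+1,i_\star}\le \ln W_{s+1}$. Unrolling the weight update, $\frac{1}{\eta_{s+1,i_\star}}\ln w_{s+1,i_\star}=\sum_{t=r}^s(\bar r_{t,i_\star}-\eta_{t,i_\star}(\bar r_{t,i_\star}-m_{t,i_\star})^2)$. Hence $\sum_t\bar r_{t,i_\star}\le \frac{\ln N_{s+1}+\tilde\Gamma_s}{\eta_{s+1,i_\star}}+\sum_t\eta_{t,i_\star}(\bar r-m)^2$. The standard AdaGrad-type bound gives $\sum_t\eta_t(\bar r-m)^2\le 2\sqrt{\gamma_{i_\star}\sum(\bar r-m)^2}$, using the $B_t^2$ in the denominator to absorb the current term. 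Also $1/\eta_{s+1}\le 2B_s+\sqrt{(B_s^2+\sum(\bar r-m)^2)/\gamma_{i_\star}}$. Collecting terms yields $\sqrt{\sum(r-m)^2}\frac{2\gamma_{i_\star}+\ln N_{s+1}+\tilde\Gamma_s}{\sqrt{\gamma_{i_\star}}}$ plus $O((\tilde\Gamma_s+\ln N_{s+1})B_s)$. For the latter, $\gamma_{i_\star}\ge\ln 3$ and $\gamma_{i_\star}\le \ln(2N_{s+1}+1)$ make the constant $3$ achievable.
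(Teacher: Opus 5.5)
Apart from one step, your argument coincides with the paper's. The shared parts are the clipping reduction, the checks $\sum_i p_{t,i}\bar r_{t,i}=0$ and $\eta_{t,i}\abs{\bar r_{t,i}-m_{t,i}}\le\frac12$, the potential bound $W_{s+1}\le N_{s+1}+\frac1e\sum_i\sum_t(\eta_{t,i}/\eta_{t+1,i}-1)$, the self-confident step, and the bound on $1/\eta_{s+1,i_\star}$ using $\gamma_{i_\star}\ge\ln 3$. In the self-confident step, your use of $\abs{\bar r_{t,i}-m_{t,i}}\le B_{t-1}$ to absorb the current term is actually cleaner than the paper's.

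The gap is the step you flagged. Telescoping $\ln(\eta_{t,i}/\eta_{t+1,i})$ only controls the endpoint ratio $\eta_{r,i}/\eta_{s+1,i}$. For that you need an upper bound on the starting rate, e.g.\ $\eta_{r,i}\le 1/(2B_{r-1})$, not the lower bound you wrote. The result is at most $\ln(B_s/B_{r-1})+\frac12\ln(s-r+2)$, so the ratio cannot produce the $B_s^2/B_0^2$ term, contrary to your claim. That term and $\max_t\abs{B_t-B_{t-1}}/B_0$ would have to come from the correction $\sum_t(x_t-1-\ln x_t)$, with $x_t=\eta_{t,i}/\eta_{t+1,i}$, which you never bound. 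Endpoint information cannot bound it: for $x_t\ge 1$ one only gets $\sum_t(x_t-1)\le\prod_t x_t-1\lesssim\sqrt{s-r+2}\,B_s/B_{r-1}$. That would put a $\Theta(\log s)$ term into $\tilde\Gamma_s$ instead of the doubly-logarithmic quantity in the statement.

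The missing idea is a per-round comparison of consecutive learning rates. The paper splits into four cases according to which branch of the minimum is active at rounds $t$ and $t+1$; in the mixed cases it bounds $\eta_{t,i}$ by the other branch of its own minimum. This gives
\[
1\le\frac{\eta_{t,i}}{\eta_{t+1,i}}\le\max\Big\{\frac{B_t}{B_{t-1}},\sqrt{\frac{\Delta_{t,i}}{\Delta_{t-1,i}}}\Big\},\qquad \Delta_{t,i}=B_t^2+\sum_{\tau=r}^{t}(\bar r_{\tau,i}-m_{\tau,i})^2 .
\]
It then sums $x_t-1$ directly, with no logarithms:
\[
\sum_t(x_t-1)\le\sum_t\frac{B_t-B_{t-1}}{B_{t-1}}+\frac12\sum_t\frac{\Delta_{t,i}-\Delta_{t-1,i}}{\Delta_{t-1,i}} .
\]
Each sum has the form $\sum_t a_t/(a_0+\sum_{\tau<t}a_\tau)$, so Lemma~\ref{lemma:self-confident-int} with $f(u)=1/u$ bounds it by $\max_t a_t/a_0$ plus a logarithm.

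The boundary terms give $\max_t\abs{B_t-B_{t-1}}/B_{r-1}$ and, since $\Delta_{t,i}-\Delta_{t-1,i}\le B_t^2$, the term $B_s^2/B_{r-1}^2$. The integrals give $\ln(B_s/B_{r-1})$ and $\frac12\ln(1+(s-r+1)B_s^2/B_{r-1}^2)$. Replacing $B_{r-1}$ by $B_0$ makes the per-expert bound uniform over all experts, so $\ln\big(N_{s+1}+\frac1e\sum_i\sum_t(x_t-1)\big)\le\ln N_{s+1}+\tilde\Gamma_s$. With this per-round bound in hand, the log-telescoping route is unnecessary.
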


The key to our theoretical guarantee is the novel learning rate design presented in Line~\ref{line:redesign-lr}. The learning rate for each expert, $\eta_{t,i}$, is set as the minimum of two online-estimated quantities. The first, $1/(2B_{t-1})$, ensures stability by maintaining the condition $\eta_{t,i} \abs{\bar{r}_{t,i} - m_{t,i}} \leq 1/2$. The second is a data-dependent, ``self-confident'' tuning parameter that adapts to observed feedback to ultimately yield a problem-dependent second-order regret bound.
Achieving such second-order bounds requires careful control over the variation of the learning rate, particularly the ratio $\eta_{t,i}/\eta_{t+1,i}$. Previous approaches for Lipschitz-adaptive algorithms either rely on grid search over a fixed set of learning rates~\cite{COLT'21:impossible-tuning} or use tuning strategies that lead to suboptimal guarantees~\cite{NeurIPS'24:LocalSmooth}. Our main contribution is to generalize the learning rate schedule of the classical Prod algorithm~\cite{COLT'14:second-order-Hedge, NIPS'16:Wei-non-stationary-expert} to the fully adaptive setting, providing a rigorous analysis that achieves a comparable guarantee, as established in Lemma~\ref{lemma:time-varying-lr}.
As a special case, we note that our result can recover known bounds in the standard PEA setting. If the number of experts is fixed and the Lipschitz constant is known, we can set $\gamma_{i} = \ln N$ for all $i$ and $B_t = B_0$ for all $t$. This simplifies $\tilde{\Gamma}_{T}$ to $\mathcal{O}(\log \log T)$ and gracefully matches the results of~\cite{COLT'14:second-order-Hedge,NIPS'16:Wei-non-stationary-expert}.

In Theorem~\ref{thm:meta-algorithm}, we specify a particular form for the optimism term: $m_{t,i} = \inner{\p_t}{\h_t} - h_{t, i}$. This choice is a technical necessity for our analysis, which requires the condition $ \inner{\p_t}{\bar{\r}_t} \leq 0 $, where $\bar{\r}_t$ is the clipped regret from Line~\ref{line:enroll-mt}. Our choice of optimism ensures this holds by satisfying the sufficient condition $ \inner{\p_t}{\m_t} \leq 0 $. While not essential in the standard PEA setting~\cite{NIPS'16:Wei-non-stationary-expert}, this form becomes critical in our analysis due to the clipping operation.
This choice of optimism introduces a circular dependency: $m_{t,i}$ depends on $\p_t$, which in turn depends on $m_{t,i}$. As detailed in~\cite[Section~3.3]{NIPS'16:Wei-non-stationary-expert}, this circular dependency can be resolved efficiently via a one-dimensional binary search. This procedure adds only $\O(\log T)$ iterations per round while still maintaining the requirement of a single gradient query per round. Beyond simply resolving this dependency, this procedure yields an additional and important benefit in our setting: it enables us to derive the gradient-variation bound at the meta level. Specifically, by setting $\h_t = \ellb_{t-1}$, the term $(r_{t,i} - m_{t,i})^2$ simplifies to:
\begin{align*}
  (r_{t,i} - m_{t,i})^2 &= \left[\inner{\p_t}{\ellb_t - \ellb_{t-1}} - (\ell_{t,i} - \ell_{t-1, i})\right]^2\\
   &= \mathcal{O}(\norm{\ellb_t - \ellb_{t-1}}_{\infty}^2).
\end{align*}
This result, which connects the regret term to the variation in the loss vectors, is a key component of our algorithm's design.

Finally, we remark on the parameters $B_T$ and $B_0$. Since $B_T$ is a linear factor in the regret bounds shown in Table~\ref{tab:leo-ada-ml-prod}, the regime where $B_T = \Omega(T)$ implies a trivial linear regret. We therefore focus on the more relevant case where $B_T = o(T)$. In this regime, the term $\O(\log \log (TB_T))$ grows very slowly and can be treated as a constant, following standard convention~\cite{COLT'14:second-order-Hedge,COLT'15:Luo-AdaNormalHedge}.
Regarding the initial guess $B_0$, while \cite{COLT'19:Lipschitz-MetaGrad} proposes restart strategies to eliminate this dependency, such a mechanism is not essential for our primary goal. Our focus is on designing an adaptive method for non-stationary OCO. In our framework, setting $B_0 = \O(1/\log T)$ is sufficient to ensure that its contribution to the final regret bound is limited to negligible logarithmic factors.

\subsection{Lipschitz-Adaptive Gradient-Variation Interval Regret}
In this section, we develop a Lipschitz-adaptive gradient-variation interval regret bound for non-stationary online convex optimization. Our method is presented in Algorithm~\ref{alg:la-adaptive-oco}, named GAIR with \underbar{L}ipschitz-Adaptivity (GAIR-L). There are two major modifications compared to Algorithm~\ref{alg:adaptive-oco}.
First, in Lines~\ref{line:la-if-problem-independent} and~\ref{line:la-if-problem-dependent}, we provide two options for the scheduling strategy. The first option employs the problem-independent schedule from~\eqref{eq:problem-independent-schedule}. This choice does not require prior knowledge of the smoothness constant $L$, but it results in a slightly looser regret bound in its logarithmic terms and requires maintaining $\Theta(\log T)$ base learners per round. The second option uses a problem-dependent schedule. This choice requires $L$ (but not $G$) and achieves a tighter regret bound while maintaining only $\O(\log F_{T})$ base learners per round, where $F_T$ is the small loss quantity.
Second, we replace the meta algorithm with our newly proposed LEO Adapt-ML-Prod (Algorithm~\ref{alg:leo-ada-ml-prod}) to handle the unknown Lipschitz constant. The base algorithm remains the same as in~\eqref{eq:base-algorithm-omd}, which is already Lipschitz-adaptive.

We first provide the theoretical guarantees for GAIR-L when using the problem-independent schedule. This result is agnostic to both the Lipschitz constant $G$ and the smoothness constant $L$, making it robust to unknown function properties. The proof is provided in Appendix~\ref{appendix:proof-interval-dynamic-regret-without-L}.
\begin{myThm}[GAIR-L without Smoothness Constant]
  \label{thm:la-adaptive-oco-without-L}
  Under Assumptions~\ref{ass:bounded-domain},~\ref{ass:Lipschitzness}, and~\ref{ass:smoothness}, choosing Algorithm~\ref{alg:la-adaptive-oco} as the meta algorithm, setting the step size of OMD in Eq.~\eqref{eq:base-algorithm-omd} as in Theorem~\ref{thm:adaptive-oco}, applying the problem-independent schedule defined in Eq.~\eqref{eq:problem-independent-schedule}, then for any $\u \in \X$ and any $I = [r,s ]\subseteq [T]$, GAIR-L achieves the following interval regret bound:
  \begin{align*}
    \O\bigg(&\sqrt{\min\{V_{[r,s]}, F_{[r,s]}\} \log (s\!-\!r)}\cdot \Big(\sqrt{\log s} + \frac{\tilde{\Gamma}_{s}}{\sqrt{\log r}}\Big) \\
    &+ \tilde{\Gamma}_{s} \log (s\!-\!r) + B_s \log s \log(s\!-\!r) + \Big( \frac{\tilde{\Gamma}_{s}}{\log r}\Big)^2 \bigg),
  \end{align*}
  where $F_{[a, b]} = \min_{\x \in \X} \sum_{t=a}^b f_t(\x)$ denotes the small loss, $V_{[a,b]} = \sum_{t=a+1}^b \sup_{\x \in \X}\norm{\nabla f_t(\x) - \nabla f_{t-1}(\x)}_2^2$ is the gradient variation over interval $[a,b]$, $\tilde{\Gamma}_{s}$ is defined in Eq.~\eqref{eq:def-gamma}, and $B_{s} = \max\{\max_{t \in [1, s], i \in A_t} \abs{\inner{\nabla f_t(\x_t) - \nabla f_{t-1}(\x_{t-1})}{\x_t - \x_{t,i}}},2G_0D\}$ is the maximum input scale of meta algorithm.
\end{myThm}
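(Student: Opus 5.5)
We follow the standard interval-regret route: decompose over the problem-independent schedule, bound the meta regret with Theorem~\ref{thm:meta-algorithm}, bound the base regret with the optimistic OGD analysis, and then use the negative Bregman terms from Eq.~\eqref{eq:linearization-bregman} to turn empirical gradient variation into true gradient variation.

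\textbf{Step 1: partition the interval.} Fix $I=[r,s]$ and $\u\in\X$. Using the schedule $\mathcal{S}$ in Eq.~\eqref{eq:problem-independent-schedule}, we split $[r,s]$ into consecutive intervals $I_1,\dots,I_k$ with $k=\O(\log(s-r))$. Each $I_j$ is an initial segment of some scheduled interval, and a single learner $i_j$ is active on all of $I_j$. The lengths grow geometrically up to a midpoint and then shrink geometrically. On each $I_j$ we use the exact identity
\[
f_t(\x_t)-f_t(\u)=\inner{\nabla f_t(\x_t)}{\x_t-\u}-\D_{f_t}(\u,\x_t),
\]
and split the linear part into meta and base regret as in Eq.~\eqref{eq:regret-decomposition}.

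\textbf{Step 2: meta regret.} We apply Theorem~\ref{thm:meta-algorithm} with $\h_t=\ellb_{t-1}$, so that $m_{t,i}$ has the required form. We then bound
\[
(r_{t,i}-m_{t,i})^2\le D^2\norm{\nabla f_t(\x_t)-\nabla f_{t-1}(\x_{t-1})}_2^2 .
\]
Under this schedule, learners are indexed roughly by start time, so $\gamma_{i_j}=\Theta(\log r)$ up to constants, $\ln N_{s+1}=\O(\log s)$, and the additive scale term is $\O(B_s\log s)$. This makes the meta regret on $I_j$ at most
\[
\O\Big(\sqrt{\bar V_{I_j}+B_s^2}\,\big(\sqrt{\log s}+\tilde\Gamma_s/\sqrt{\log r}\big)+B_s(\log s+\tilde\Gamma_s)\Big),
\]
where $\bar V$ is the empirical gradient variation of Eq.~\eqref{eq:empirical-gradient-variation}. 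The first-round term $\norm{\nabla f_{t}(\x_t)}^2$ for a newly started learner, which comes from the convention $\ell_{t-1,i}=0$, is absorbed as one extra gradient-norm term per subinterval.

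\textbf{Step 3: base regret.} Optimistic OGD with the stated self-confident step size gives base regret $\O(D\sqrt{1+\bar V_{I_j}})$ on $I_j$. Note that it is Lipschitz-agnostic.

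\textbf{Step 4: sum over subintervals.} Summing over $j$ with Cauchy--Schwarz gives
\[
\sum_{j}\sqrt{\bar V_{I_j}}\le \sqrt{k\,\bar V_I}=\O\big(\sqrt{\bar V_I\log(s-r)}\big).
\]
The additive terms sum to $\O(k B_s\log s+k\tilde\Gamma_s)$, which produces the $B_s\log s\log(s-r)$ and $\tilde\Gamma_s\log(s-r)$ terms in the statement.

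\textbf{Step 5: replace $\bar V_I$ by $V_I$ (main obstacle).} We decompose through the fixed point $\u$ and apply Proposition~\ref{prop:bregman-divergence-property}:
\[
\bar V_I\lesssim \sum_t\norm{\nabla f_t(\u)-\nabla f_{t-1}(\u)}^2+L\sum_{t\in I}\D_{f_t}(\u,\x_t)
\le V_I+4L\sum_{t\in I}\D_{f_t}(\u,\x_t).
\]
Write $c=\sqrt{\log(s-r)}\,(\sqrt{\log s}+\tilde\Gamma_s/\sqrt{\log r})$. The total regret then has the form
\[
c\sqrt{V_I+4L\textstyle\sum_t\D_{f_t}(\u,\x_t)}-\sum_t\D_{f_t}(\u,\x_t)+\text{additive terms}.
\]
Applying $\sqrt{a+b}\le\sqrt a+\sqrt b$ and then AM--GM, $c\sqrt{4Lx}\le Lc^2+x$, the Bregman sum cancels. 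This leaves $\O(c\sqrt{V_I}+Lc^2)$. The $Lc^2$ term expands to something like $(\tilde\Gamma_s/\log r)^2$ plus logarithmic terms already dominated by the $\log s\log(s-r)$ terms, with $L$ treated as a constant.

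The delicate points in this step are twofold. First, the boundary terms at the start of each $I_j$ involve $\nabla f_{t-1}(\x_{t-1})$ from outside the subinterval. Second, we must check that the Bregman sums across all $k$ pieces are each charged only once.

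\textbf{Small-loss branch.} The $F_I$ version follows the same route, using $\bar V_I\le 4\sum_{t\in I}\norm{\nabla f_t(\x_t)}^2\le 8L\sum_{t\in I} f_t(\x_t)$. We then solve the resulting self-bounding inequality in $\sum_t f_t(\x_t)-F_I$, which yields $\sqrt{F_I}$ in place of $\sqrt{V_I}$ with the same extra terms. Taking the better of the two branches gives the stated $\min\{V_I,F_I\}$.
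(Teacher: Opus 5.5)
Your route is the paper's: cover $[r,s]$ by $\O(\log(s-r))$ pieces of $\mathcal{S}$, bound each piece with Theorem~\ref{thm:meta-algorithm} plus the optimistic OGD lemma (Corollary~\ref{cor:interval-dynamic-interval-in-C}), and combine by Cauchy--Schwarz (Lemma~\ref{lemma:interval-dynamic-regret-untuned-independent}). You then cancel the Bregman terms or use self-bounding, as in Theorem~\ref{thm:interval-dynamic-regret-without-L-formal} with $P_I=0$.

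\textbf{Gap in Step~2.} As written, Step~2 does not give the coefficient $\sqrt{\log s}+\tilde{\Gamma}_s/\sqrt{\log r}$. Theorem~\ref{thm:meta-algorithm} yields $(2\gamma_{i_j}+\ln N+\tilde{\Gamma}_s)/\sqrt{\gamma_{i_j}}$. With your inputs $\gamma_{i_j}=\Theta(\log r)$ and $\ln N_{s+1}=\O(\log s)$, the middle term is of order $\log s/\sqrt{\log r}$. For $r=\O(1)$ this exceeds the claimed $\sqrt{\log s}$ by a factor $\sqrt{\log s}$. Moreover, $\gamma_{i_j}=\Theta(\log r)$ is false in general, since the last piece starts at some $i_v\ge (s+1)/2$.

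The missing ingredient is the lifetime property of $\mathcal{S}$, which the paper uses in the form $N_{\tau+1}\le 2i+1$ in the proof of Lemma~\ref{lemma:interval-dynamic-interval-in-Ct}. Apply Theorem~\ref{thm:meta-algorithm} to learner $i_j$ only on its own piece $[i_j,\tau_j]$, with $N_{\tau_j+1}$ in place of $N_{s+1}$. Writing $i_j=m2^k$ with $m$ odd, this learner lives for $2^k\le i_j$ rounds, so $\tau_j\le 2i_j-1$. Since exactly one learner is created per round, $N_{\tau_j+1}=\tau_j+1\le 2i_j$, i.e.\ $\ln N_{\tau_j+1}\le\gamma_{i_j}$. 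The coefficient is then at most $3\sqrt{\gamma_{i_j}}+\tilde{\Gamma}_s/\sqrt{\gamma_{i_j}}\le 3\sqrt{\ln(2s+1)}+\tilde{\Gamma}_s/\sqrt{\ln(2r+1)}$, using only $r\le i_j\le s$. This is exactly why $\gamma_i=\ln(2i+1)$ is paired with this schedule.

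\textbf{Your two open ``delicate points'' are benign.}
\begin{itemize}
\item Because the pieces partition $[r,s]$, after summing, every $\nabla f_{t-1}(\x_{t-1})$ with $t\ge r+1$ lies inside the interval.
\item Because you apply Cauchy--Schwarz before decomposing $\bar V$, each $\D_{f_t}(\u,\x_t)$ enters the single square root at most twice (from rounds $t$ and $t+1$). That constant factor is absorbed by the AM--GM step.
\item The only leftover is round $t=r$. It contributes $\D_{f_{r-1}}(\u,\x_{r-1})\le LD^2/2$ and $\norm{\nabla f_r(\u)-\nabla f_{r-1}(\u)}_2^2\le 4G^2$, both constants.
\end{itemize}

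\textbf{No extra first-round term.} Algorithm~\ref{alg:la-adaptive-oco} gives every active learner, including a new one, $m_{t,i}=\inner{\nabla f_{t-1}(\x_{t-1})}{\x_t-\x_{t,i}}$. Hence $r_{t,i}-m_{t,i}=\inner{\nabla f_t(\x_t)-\nabla f_{t-1}(\x_{t-1})}{\x_t-\x_{t,i}}$ always, and no separate $\norm{\nabla f_t(\x_t)}_2^2$ term appears. This corresponds to $h_{t,i}=\inner{\nabla f_{t-1}(\x_{t-1})}{\x_{t,i}}$, not literally $\h_t=\ellb_{t-1}$; the latter would not give your bound on $(r_{t,i}-m_{t,i})^2$.

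\textbf{Minor.} For $\mathcal{S}$ the piece lengths at least double (with the last piece truncated) rather than grow then shrink. This does not matter, since only $k=\O(\log(s-r))$ is used.
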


When the parameters $G$ and $L$ are unknown, our algorithm can use a problem-independent schedule to achieve the guarantees in Theorem~\ref{thm:la-adaptive-oco-without-L}. This approach is robust to the functions' properties but comes with trade-offs: it introduces problem-independent logarithmic factors (e.g., $\O(\log s)$) and may incur additional computational cost by initializing a new base learner at every round.
However, if we aim for better computational adaptivity and tighter regret bounds, and the smoothness constant $L$ is known, we can design a more efficient problem-dependent strategy. This setting, where the smoothness constant $L$ is known but the Lipschitz constant $G$ is unknown, may seem counterintuitive at first, since higher-order information is often harder to obtain in practice. Nevertheless, such situations can arise; a logistic regression example is provided in Appendix~\ref{subappendix:justification} to illustrate this case. For this scenario, we design a new threshold generation function $\mathcal{G}_{\text{LA}}(t, s_i, i)$, defined as:
\begin{align}
  &\mathcal{G}_{\text{LA}}(t, s_i, i) = 56LD^2\left(\frac{3 \ln (2i + 1) + \tilde{\Gamma}_{t}}{\sqrt{\ln (2 i + 1)}} + \frac{5}{2}\right)^2 \notag\\
  &\quad \quad + 2D\frac{3 \ln (2i + 1) + \tilde{\Gamma}_{t}}{\sqrt{\ln (2 i + 1)}} \notag \\
  &\quad \quad + 5D+ 3\left(3(\tilde{\Gamma}_{t} + \ln (2i + 1))B_{t} + B_t - B_{s_i}\right), \label{eq:la-threshold-function}
\end{align}
where $\tilde{\Gamma}_{t}$ is formally defined in~\eqref{eq:def-gamma}; we assume $t \in [s_i, s_{i+1} - 1]$, where $s_{i+1}$ is the upcoming marker to be determined.

The key innovation of this function is its per-round Lipschitzness adaptivity policy. At each round $t$, our algorithm computes an online estimate of the Lipschitz constant and immediately updates the threshold using this new estimate. This marks a significant difference from prior designs like~\cite{NeurIPS'22:efficient}, where the threshold is only updated when a new base learner is initialized. Our more aggressive, fine-grained mechanism allows the algorithm to better adapt to dynamically changing environments, especially when the underlying Lipschitzness may vary over time. With this newly designed threshold generation function, we can establish the following theoretical result. The proof can be found in Appendix~\ref{appendix:proof-interval-dynamic-regret}.
\begin{myThm}[GAIR-L with Smoothness Constant]
  \label{thm:la-adaptive-oco-with-L}
  Under Assumptions~\ref{ass:bounded-domain}--\ref{ass:smoothness}, additionally assuming that the smoothness constant $L$ is known to the learner, choosing Algorithm~\ref{alg:la-adaptive-oco} as the meta algorithm, setting the step size of OMD in~\eqref{eq:base-algorithm-omd} as in Theorem~\ref{thm:adaptive-oco}, applying the problem-dependent schedule defined in Eq.~\eqref{eq:problem-dependent-schedule}, setting the threshold function as specialized in Eq.~\eqref{eq:la-threshold-function}, then for any $\u \in \X$ and any $I = [r,s ]\subseteq [T]$, GAIR-L achieves the following interval regret bound:
  \begin{align*}
   \O\bigg(&\sqrt{\min\{V_{[r,s]}, F_{[r,s]}\} \log F_I}\cdot \Big(\sqrt{\log F_{[1,s]}} + \tilde{\Gamma}_{s}\Big) \\
   &+\tilde{\Gamma}_{s}^2 + \tilde{\Gamma}_{s} (B_s + \log F_I) + B_s \log F_{[1,s]} \log F_I \bigg),
  \end{align*}
  where $F_{[a, b]} = \min_{\x \in \X} \sum_{t=a}^b f_t(\x)$ denotes the small loss, $V_{[a,b]} = \sum_{t=a+1}^b \sup_{\x \in \X}\norm{\nabla f_t(\x) - \nabla f_{t-1}(\x)}_2^2$ represents the gradient variation over interval $[a,b]$, $\tilde{\Gamma}_{s}$ is defined in Eq.~\eqref{eq:def-gamma}, and $B_{s} = \max\{\max_{t \in [1, s], i \in A_t} \abs{\inner{\nabla f_t(\x_t) - \nabla f_{t-1}(\x_{t-1})}{\x_t - \x_{t,i}}},2G_0D\}$ is the maximum input scale of meta algorithm.
\end{myThm}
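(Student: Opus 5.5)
The plan follows the proof of Theorem~\ref{thm:adaptive-oco}, replacing the known-$G$ meta bound with Theorem~\ref{thm:meta-algorithm} and the threshold with $\mathcal{G}_{\text{LA}}$ from Eq.~\eqref{eq:la-threshold-function}.

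\textbf{Step 1 (covering $I$ by schedule intervals).} Fix $I=[r,s]$ and $\u\in\X$. Using the problem-dependent schedule $\tilde{\mathcal{S}}$, I cover $I$ by intervals $I_1,\dots,I_k$. The first, $I_1$, is a partial interval $[r,s_{m}-1]$ that lies between two consecutive markers. The others are schedule intervals $[s_{i\cdot 2^j},s_{(i+1)2^j}-1]$ whose marker counts grow and then shrink geometrically. As in~\cite{NeurIPS'22:efficient}, $k=\O(\log n_I)$, where $n_I$ is the number of markers inside $I$.

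\textbf{Step 2 (meta regret per block).} On each $I_j$ I use the regret decomposition~\eqref{eq:regret-decomposition}, but start from the exact identity~\eqref{eq:linearization-bregman} so that the term $-\sum_t\D_{f_t}(\u,\x_t)$ is retained. For the meta regret on $I_j$ I invoke Theorem~\ref{thm:meta-algorithm} with $\h_t=\ellb_{t-1}$, so that $(r_{t,i}-m_{t,i})^2\le 4D^2\norm{\nabla f_t(\x_t)-\nabla f_{t-1}(\x_{t-1})}_2^2$. This gives
\begin{align*}
\sqrt{\bar V_{I_j}}\,\frac{D(2\gamma_{i_j}+\ln N_{s+1}+\tilde\Gamma_s)}{\sqrt{\gamma_{i_j}}}+3(\tilde\Gamma_s+\ln N_{s+1})B_s+B_s-B_{r_j-1}.
\end{align*}

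\textbf{Step 3 (base regret per block).} The base regret of the self-confident optimistic OGD on $I_j$ is $\O(D\sqrt{1+\bar V_{I_j}})$.

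\textbf{Step 4 (Bregman cancellation).} I decompose
\begin{align*}
\bar V_{I_j}\lesssim V_{I_j}+L\sum_{t\in I_j}\D_{f_t}(\u,\x_t),
\end{align*}
which uses Proposition~\ref{prop:bregman-divergence-property}; the single boundary round is absorbed into a $G^2$ term. I then apply $a\sqrt{V+Lx}\le a\sqrt V+a^2L/c+cx$ with $c=1/k$, or $c$ weighted per block. The $\D$ terms are then absorbed by the negative Bregman term, and each block pays an extra $LD^2(\cdot)^2$ cost.

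\textbf{Step 5 (small-loss alternative).} Separately, I bound $\bar V_{I_j}\lesssim\sum_{t\in I_j}\norm{\nabla f_t(\x_t)}^2\le 2L\sum_{t\in I_j} f_t(\x_t)$. The standard self-bounding argument then gives the $F_I$ version, and together with Step 4 this produces $\min\{V_I,F_I\}$.

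\textbf{Step 6 (logarithmic factors).} The marker index satisfies $i_j\le N_{s+1}$. I use Lemma~\ref{lemma:counting-number-barv}, adapted to $\mathcal{G}_{\text{LA}}$, to show $N_{s+1}=\O(\log F_{[1,s]})$ and $n_I=\O(\log F_I)$. This adaptation should hold because every threshold is at least $56LD^2\cdot\Omega(\ln(2i+1))$ and $L_t$ resets, so the marker count is controlled by the cumulative loss divided by a growing threshold. Summing over the $k=\O(\log F_I)$ blocks with Cauchy--Schwarz yields the leading term $\sqrt{\min\{V_I,F_I\}\log F_I}\,(\sqrt{\log F_{[1,s]}}+\tilde\Gamma_s)$; the factor $(2\gamma+\ln N+\tilde\Gamma)/\sqrt\gamma$ is bounded by $\sqrt{\log F_{[1,s]}}+\tilde\Gamma_s$. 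The additive terms $3(\tilde\Gamma_s+\ln N_{s+1})B_s$ summed over blocks give $B_s\log F_{[1,s]}\log F_I+\tilde\Gamma_sB_s$. The Step 4 costs $LD^2(\cdot)^2$ give $\tilde\Gamma_s^2+\tilde\Gamma_s\log F_I$. The $B_s-B_{r_j-1}$ pieces telescope to at most $B_s$.

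\textbf{Main obstacle: the partial block $I_1$.} On $I_1$ the tracked expert started before $r$, so I cannot apply the meta bound directly. Instead I bound $\sum_{t\in I_1}(f_t(\x_t)-f_t(\u))\le\sum_{t\in I_1}f_t(\x_t)$. Because no marker fires inside $I_1$, this sum is at most $\mathcal{G}_{\text{LA}}(t,s_i,i)+GD$. The threshold is deliberately built to equal exactly the constant-type terms above, with $B_t$ in place of $B_s$, so it is $\O(\tilde\Gamma_s^2+\tilde\Gamma_sB_s+B_s\log F_{[1,s]})$. The per-round updating of $B_t$ inside the threshold is what has to be checked. Since $B_t$ is nondecreasing, the threshold only grows, and the marker-counting argument must still hold with $B_t$ changing between markers. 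I would handle this by lower-bounding each threshold by its $L$-dependent part alone. For the final $\O(\cdot)$ I would also bound $GD$ by $B_s$, using $B_s\ge 2G_0D$ and the surrogate scale.
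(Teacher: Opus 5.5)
Your architecture matches the paper's: Lemmas~\ref{lemma:interval-dynamic-interval-in-Ct}, \ref{lemma:counting-number-barv} and \ref{lemma:interval-dynamic-regret-untuned}, specialized to $P_I=0$. However, the step that produces the leading factor fails as written.

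\textbf{The main gap: the $\ln N_{s+1}$ term in the per-block coefficient.} In Step~2 you charge every block with $\ln N_{s+1}$. In Step~6 you then assert $(2\gamma_{i_j}+\ln N_{s+1}+\tilde\Gamma_s)/\sqrt{\gamma_{i_j}}\lesssim\sqrt{\log F_{[1,s]}}+\tilde\Gamma_s$. The only relation you use, $i_j\le N_{s+1}$, controls $2\sqrt{\gamma_{i_j}}$ but not $\ln N_{s+1}/\sqrt{\gamma_{i_j}}$.

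For a counterexample, let $r$ lie just after the first marker and let $s$ be late. The first block has $\gamma_{i_1}=\ln 5$, while $\ln N_{s+1}$ can be of order $\log F_{[1,s]}$. That block's coefficient is then of order $\log F_{[1,s]}$, and bounding every block by it yields $\sqrt{\min\{V_I,F_I\}\log F_I}\cdot\log F_{[1,s]}$. Even an optimal weighted summation, using $\gamma_{i_j}\gtrsim j$ along the doubling cover, still loses a factor $\sqrt{\log\log F_I}$.

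The missing idea is the one in Lemma~\ref{lemma:interval-dynamic-interval-in-Ct}. The tracked expert of block $[s_a,s_b-1]$ is asleep after that block, so Theorem~\ref{thm:meta-algorithm} must be applied with end time $\tau_j\le s_b-1$ anyway. At that point, bound $N_{\tau_j+1}$ not by $N_{s+1}$ but via the doubling property $b\le 2a$ of $\tilde{\mathcal S}$: $N_{\tau_j+1}\le N_{s_b}\le 2a+1$, i.e.\ $\ln N_{\tau_j+1}\le\gamma_a$. This pairing of the prior $\gamma_i=\ln(2i+1)$ with the schedule makes the coefficient $D(3\sqrt{\gamma_a}+\tilde\Gamma_s/\sqrt{\gamma_a})+5D/2$. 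That is $\O(\sqrt{\log F_{[1,s]}}+\tilde\Gamma_s)$ once the counting lemma gives $\gamma_a\le\ln(2i_v+1)$. Keeping block-end quantities is also what makes $B_{\tau_j}-B_{s_a-1}$ telescope; with $B_s$ in every block, the terms do not telescope.

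\textbf{Smaller points.}
First, the counts themselves are linear, not logarithmic. Lemma~\ref{lemma:counting-number-barv} gives $N_{s+1}=\O(1+F_{[1,s]})$ and $n_I=\O(1+F_I)$; only $\ln N_{s+1}$ and $k=\O(\log n_I)$ are logarithmic.

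Second, your justification of that lemma is not the operative mechanism. Dividing the algorithm's loss by a growing threshold bounds the count by $\sum_t f_t(\x_t)$, not by $F$. The lemma instead passes the small-loss form of the block bound through Lemma~\ref{lemma:substitute-F_T}. This shows that a crossing on $[s_i,s_{i+1}-1]$ certifies $2\sum_t f_t(\u)\ge\frac12\mathcal{T}_{\text{LA},s_{i+1}}$. That certificate needs the full threshold $56L\alpha^2+2\alpha+3\beta$, with $\alpha,\beta$ the block overheads evaluated at the current $B_t,\tilde\Gamma_t$. This is why $\mathcal G_{\text{LA}}$ is refreshed every round. Keeping only its $L$-part suffices for the denominator but not for the certificate.

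Third, $(r_{t,i}-m_{t,i})^2\le D^2\norm{\nabla f_t(\x_t)-\nabla f_{t-1}(\x_{t-1})}_2^2$ holds for the optimism that Algorithm~\ref{alg:la-adaptive-oco} actually sends, $h_{t,i}=\inner{\nabla f_{t-1}(\x_{t-1})}{\x_{t,i}}$. It does not hold for $\h_t=\ellb_{t-1}$ taken literally, which leaves base-learner movement $\x_{t,i}-\x_{t-1,i}$ in the difference.

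Fourth, in Step~4 take $c=\Theta(1)$ per block. Each block carries its own negative Bregman terms, and each $\D_{f_t}(\u,\x_t)$ is charged at most twice; $c=1/k$ costs an extra factor $k$. Even so, summing over $k=\O(\log F_I)$ blocks gives:
\begin{itemize}
\item $\tilde\Gamma_sB_s\log F_I$, not your $\tilde\Gamma_sB_s$;
\item a cancellation cost of order $(\sqrt{\log F_{[1,s]}}+\tilde\Gamma_s)^2\log F_I$, not your $\tilde\Gamma_s^2+\tilde\Gamma_s\log F_I$.
\end{itemize}
The paper's own derivation incurs the same terms, e.g.\ $3v(\tilde\Gamma_s+\ln(1+2i_v))B_s$ in the proof of Lemma~\ref{lemma:interval-dynamic-regret-untuned}. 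So the displayed lower-order terms are not what this bookkeeping produces.

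Finally, $GD\le B_s$ is not available: $B_s$ can stay at $2G_0D$ when the base learners nearly agree. The $GD$ from the partial block is simply a constant inside $\O(\cdot)$.
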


Finally, we remark that the two theorems presented in this section are direct corollaries of Theorem~\ref{thm:interval-dynamic-regret-without-L-formal} and Theorem~\ref{thm:interval-dynamic-regret-formal}, which address a more general setting with changing comparators. By specializing the comparators to be fixed, we recover the results discussed here. Moreover, in both cases, the introduction of Lipschitz adaptivity results in additional logarithmic terms. For this reason, we do not further discuss worst-case optimal results for simplicity; however, the techniques developed in the previous section can still be applied if one is interested in this direction.

\begin{algorithm}[!t]
  \caption{GAIR-L}
  \label{alg:la-adaptive-oco}
  \begin{algorithmic}[1]
    \REQUIRE Lipschitz constant estimate $G_0$, domain diameter $D$, choice of the schedule, threshold function $\mathcal{G}_{\text{LA}}:\mathbb{N} \mapsto \R_+$.
    \STATE \textbf{Initialize:} active base-learner index set $A_0 = \emptyset$; number of initialized base learners $N_0 = 0$; cumulative loss $L_0 = \infty$; initial marker $s_0 = 0$; initial loss vector $\ellb_0 = \boldsymbol{0}$; initial scale $B_0 = 2G_0 D$.
    \FOR{$t = 1$ {\bfseries to} $T$}
      \IF{\texttt{using problem-independent schedule}} \label{line:la-if-problem-independent}
        \STATE Update the number of learners: $N_t = N_{t-1} + 1$, and reset $L_t = 0$;
          \STATE Remove inactive base learners $A_t = A_{t-1} \setminus \{i: [i,t-1] \in {\mathcal{S}}\}$;
          \STATE Initialize a new base learner starting from $t$ (Eq.~\eqref{eq:base-algorithm-omd}), and set $A_t = A_t \cup \{t\}$;
      \ELSIF{\texttt{using problem-dependent schedule}} \label{line:la-if-problem-dependent}
        \IF{$L_{t-1} > \mathcal{G}_{\text{LA}}(t-1, s_{N_{t-1}}, N_{t-1})$} \label{line:lagair-prob-dependent}
          \STATE Register a new marker: $N_t = N_{t-1} + 1$, $s_{N_t} = t$, and reset $L_t = 0$;
          \STATE Remove inactive base learners $A_t = A_{t-1} \setminus \{i: [s_i, s_{N_t}-1] \in \tilde{\mathcal{S}}\}$;
          \STATE Initialize a new base learner starting from $s_{N_t}$ (Eq.~\eqref{eq:base-algorithm-omd}), and set $A_t = A_t \cup \{N_t\}$;
        \ELSE
          \STATE $A_t = A_{t-1}$, $N_t = N_{t-1}$;
        \ENDIF
      \ENDIF

      \STATE Obtain $\x_{t,i}$ from each base learner $i \in A_t$, send $m_{t,i} = \inner{\nabla f_{t-1}(\x_{t-1})}{\x_t - \x_{t,i}}$ and $A_t$ to the meta learner; \label{line:lagair-meta-optimism}
      \STATE Receive $\p_t \in \Delta_{\abs{A_t}}$ from the meta learner and play $\x_t = \sum_{i \in A_t} p_{t,i}\x_{t,i}$;
      \STATE Observe $f_t(\x_t)$ and $\nabla f_t(\x_t)$; update $L_{t+1} = L_t + f_t(\x_t)$, set $\ell_{t,i} = \inner{\nabla f_t(\x_t)}{\x_{t,i}}$ for the meta learner, and $\g_t = M_{t+1} = \nabla f_t(\x_t)$ for base learners; \label{line:lagair-base-optimism}
      \STATE Obtain $B_t$ from the meta learner. \label{line:lagair-get-b-t}
    \ENDFOR
  \end{algorithmic}
\end{algorithm}

\section{Implications and Applications}
\label{sec:implications}
To further demonstrate the versatility of our proposed method, we introduce the following implications and applications. In Section~\ref{subsec:interval-dynamic-regret}, we show that GAIR-L implies the interval dynamic regret bound, a stronger measure for non-stationary environments. This result is achieved using the same algorithm configuration through a white-box reduction. In Section~\ref{subsec:sea}, we study the Stochastically Extended Adversarial (SEA) model, a framework that unifies stochastic and adversarial optimization. We demonstrate that our result can be directly applied to the SEA model, yielding the first piecewise characterization of this model over local intervals.

\subsection{Implication for Interval Dynamic Regret Bounds}
\label{subsec:interval-dynamic-regret}
In non-stationary online learning, two classical performance measures have received significant attention. The first is the interval regret, defined in Eq.~\eqref{eq:strongly-adaptive-regret}, which is the primary focus of this work. It is defined over arbitrary intervals and captures non-stationarity in a local manner. The second is the dynamic regret~\cite{ICML'03:zinkvich,NIPS'18:Zhang-Ader,NIPS'20:sword,ICML'22:TV-game}, typically defined as
\begin{align}
  \label{eq:def-dynamic-regret}
  \textsc{D-Reg}_T(\u_{1:T}) = \sum_{t=1}^{T} f_t(\x_t) - \sum_{t=1}^{T} f_t(\u_t),
\end{align}
where $\u_1, \dots, \u_T \in \X$ denotes a time-varying comparator sequence, and the regret is measured against any such sequence. This measure reflects global non-stationarity by comparing the algorithm's performance against a horizon-long sequence. Such robustness considerations are increasingly important in the broader context of building trustworthy and reliable machine learning in open environments~\cite{NSR'22:Zhou-open-environment,SCIS'25:Yuan-encryption-GFM}.

The relationship between these two measures differs by setting. Within the PEA framework, a simplified version of OCO, it is established that an interval regret guarantee can be directly converted to a dynamic regret guarantee through a black-box reduction~\cite{COLT'15:Luo-AdaNormalHedge}. Here, a black-box reduction means that the conversion is achieved without knowing the exact structure of the algorithms. However, under the standard OCO framework, the general relationship between adaptive and dynamic regret remains unclear, and converting guarantees is generally non-trivial~\cite{ICML'20:Ashok,NeurIPS'22:efficient}. Some specific connections are known; for example,~\cite{ICML'18:zhang-dynamic-adaptive} shows that an interval regret guarantee can imply a bound on the worst-case dynamic regret~\cite{OR'15:dynamic-function-VT}. This is a special case of~\eqref{eq:def-dynamic-regret} where the comparator is the sequence of per-round minimizers, $\u_t = \argmin_{\x \in \X} f_t(\x)$.
To bridge the gap between local and global non-stationarity, the literature has proposed a stronger performance measure called interval dynamic regret~\cite{ICML'20:Ashok, AISTATS'20:Zhang}, defined in Eq.~\eqref{eq:def-interval-dynamic-regret}. This measure compares the algorithm's performance against arbitrary comparators over arbitrary intervals. Minimizing the interval dynamic regret is suggested to lead to improved robustness in non-stationary environments~\cite{ICML'20:Ashok, AISTATS'20:Zhang}.

In Table~\ref{tab:comparison}, we report the corresponding results under the interval dynamic regret measure, assuming knowledge of the Lipschitz constant $G$ and the smoothness constant $L$. Our result is a direct implication of Theorem~\ref{thm:la-adaptive-oco-without-L} or Theorem~\ref{thm:la-adaptive-oco-with-L}, requiring no additional parameter tuning or modifications. To the best of our knowledge, this is the first result to simultaneously provide interval dynamic regret guarantees in terms of both gradient variation and small loss.

Compared to prior work, when knowledge of the smoothness constant $L$ is available (under the same assumptions as in previous studies), our algorithm also offers better computational advantages. The method of~\cite{AISTATS'20:Zhang} proposes a three-layer algorithmic structure, resulting in $\mathcal{O}(\log^2 T)$ base learners. While~\cite{ICML'20:Ashok} achieves $\mathcal{O}(\log T)$ base learners, their approach requires operating over a lifted domain with potentially expensive projection steps. More recently,~\cite{arXiv'23:efficient-projection} refines the structure of~\cite{AISTATS'20:Zhang} with a problem-dependent schedule, achieving a base learner complexity of $\O(\log (\min_{\u_{1}, \dots, \u_T} \{\sum_{t=1}^T f_t(\u_t) + \norm{\u_t - \u_{t-1}}_2\}) \cdot \log T)$, but this method still employs a three-layer ensemble, with the worst-case complexity remaining $\O(\log^2 T)$. In contrast, our algorithm uses a simpler two-layer structure and achieves a more efficient bound on the number of base learners, requiring only $\O(\log (\min_{\u_{1}, \dots, \u_T} \{\sum_{t=1}^T f_t(\u_t) +(\sum_{t=1}^T \norm{\u_t - \u_{t-1}}_2)^2\}))$ base learners per round asymptotically. In the worst case, this bound reduces to $\O(\log T)$ base learners, matching the efficiency of~\cite{ICML'20:Ashok}. Furthermore, our complexity bound implies that, when choosing $\u_1 = \dots = \u_T = \argmin_{\x \in \X} \sum_{t=1}^T f_t(\x)$, the number of base learners maintained is at most $\O(\log F_T)$ per round. This matches the result for algorithms designed for interval regret minimization~\cite{ICML19:Zhang-Adaptive-Smooth,arXiv'23:efficient-projection}, but our analysis provides a stronger guarantee. When $G$ and $L$ are unknown, our algorithm still ensures comparable $\O(\log T)$ base learners per round.

Our technique builds upon ideas from prior work~\cite{ICML'20:Ashok, NeurIPS'24:optimal-switching-regret}, but requires a more delicate analysis to accommodate the problem-dependent schedule and to simultaneously derive two types of problem-dependent bounds. Taking the gradient-variation bound as an example, our analysis follows a white-box approach, leveraging the fact that each base learner is an instance of the online mirror descent algorithm, a method shown to have advantages when competing with changing comparators~\cite{COLT'22:parameter-free-omd}. We first establish an intermediate bound of the form $\widetilde{\mathcal{O}}((1 + P_I)\sqrt{\bar{V}_I} + P_I)$ on any interval $I = [r,s] \subseteq [1, T]$, where $P_I = \sum_{t=r+1}^s \norm{\u_t - \u_{t-1}}_2$ denotes the path length, and $\bar{V}_I$ denotes the empirical gradient variation as defined in~\eqref{eq:empirical-gradient-variation}. Our goal is then to prove that any target interval $I^\prime = [r^\prime, s^\prime]$ can be partitioned into $k = \O(P_{I^\prime} / D)$ subintervals, $\{I_{j}^\prime\}_{j=1}^k$, such that the path length within each subinterval is bounded, i.e., $P_{I_{j}^\prime} = \O(D)$. By summing the intermediate bounds over these subintervals and applying the Cauchy–Schwarz inequality, we improve the intermediate result to $\widetilde{\mathcal{O}}(\sqrt{(1 + P_{I^\prime} )\bar{V}_{I^\prime}} + P_{I^\prime})$.

Finally, we use the negative Bregman divergence to complete the gradient-variation bound. Unlike in Section~\ref{subsec:key-analysis}, the overall regret analysis here yields a negative term of the form $-\sum_{t \in I^\prime} \D_{f_t} (\u_t, \x_t)$. We develop a careful decomposition of the empirical gradient variation, showing that
\begin{align*}
  \bar{V}_{I^\prime} \lesssim \ &\sum_{t \in I^\prime} D_{f_t}(\u_t, \x_t) + \sum_{t=r^\prime + 1}^{s^\prime} \norm{ \nabla f_t(\u_t) - \nabla f_{t-1}(\u_{t})}_2^2 \\
  &\quad + D\sum_{t=r^\prime + 1}^{s^\prime} \norm{\u_t - \u_{t-1}}_2.
\end{align*}
Then the desired gradient-variation bound is obtained by leveraging the negative Bregman term from the regret analysis to cancel the positive Bregman term arising from the decomposition of $\bar{V}_{I^\prime}$.

We now provide an informal theorem statement for the interval dynamic regret bound achieved by GAIR-L when using the problem-independent schedule. This result does not require prior knowledge of $G$ or $L$. The formal version of this theorem is presented in Appendix~\ref{subappendix:interval-dynamic-formal-without-L}. We also provide the formal version of the interval dynamic regret bound for the case when $L$ is known in Appendix~\ref{subappendix:interval-dynamic-formal-with-L}.
\begin{myThm}[informal]
    \label{thm:interval-dynamic-regret}
    Under Assumptions~\ref{ass:bounded-domain}--\ref{ass:smoothness}, choosing Algorithm~\ref{alg:la-adaptive-oco} as the meta algorithm, setting the step size of OMD in Eq.~\eqref{eq:base-algorithm-omd} as in Theorem~\ref{thm:adaptive-oco}, applying the problem-independent schedule defined in Eq.~\eqref{eq:problem-independent-schedule}, then for any comparators $\u_r, \dots, \u_s \in \X$ and any interval $I = [r,s ]\subseteq [T]$, GAIR-L achieves the interval dynamic regret bound:
    \begin{align*}
      \Ot\left(\sqrt{\min\{V_{[r,s]}, F_{[r,s]}^{\u}\}\cdot(1+P_I) } + P_I  \right),
    \end{align*}
  where $F_{[a, b]}^{\u} = \sum_{t=a}^b f_t(\u_t)$ denotes the cumulative loss of the comparator sequence, $V_{[a,b]} = \sum_{t=a+1}^b \sup_{\x \in \X}\norm{\nabla f_t(\x) - \nabla f_{t-1}(\x)}_2^2$ represents the gradient variation over $[a,b]$, $P_{[a,b]} = \sum_{t=a+1}^b \norm{\u_t - \u_{t-1}}_2$ is the path length. $\Ot(\cdot)$ hides logarithmic factors in $s$ and $B_s$.
\end{myThm}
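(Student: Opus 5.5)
We would prove Theorem~\ref{thm:interval-dynamic-regret} by a white-box argument in three stages, following the outline sketched above. Throughout we work with the linearized loss plus the exact Bregman identity~\eqref{eq:linearization-bregman} applied to the moving comparator:
\begin{align*}
f_t(\x_t) - f_t(\u_t) = \inner{\nabla f_t(\x_t)}{\x_t - \u_t} - \D_{f_t}(\u_t, \x_t).
\end{align*}
We keep the negative Bregman sum aside until the final step.

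\textbf{Stage 1: a covered interval.} Fix an interval $J = [a,b]$ of the problem-independent schedule $\mathcal{S}$ on which a single base learner $i$ is active throughout. We split $\sum_{t\in J}\inner{\nabla f_t(\x_t)}{\x_t-\u_t}$ into a meta part $\sum_t r_{t,i}$ and a base part $\sum_t \inner{\nabla f_t(\x_t)}{\x_{t,i}-\u_t}$.

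For the meta part we invoke Theorem~\ref{thm:meta-algorithm} with $\h_t = \ellb_{t-1}$. Then $(r_{t,i}-m_{t,i})^2 \le 4D^2\norm{\nabla f_t(\x_t)-\nabla f_{t-1}(\x_{t-1})}_2^2$, so the meta regret is $\Ot(\sqrt{\bar V_J}+B_b)$. Here $\gamma_i=\ln(2i+1)$ and $N_{b+1}\le b+1$, which produce the $\log s$ and $\tilde\Gamma_s$ factors.

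For the base part we use the standard dynamic analysis of optimistic OMD with the self-confident step size $\eta_{t,i}$. Telescoping $\norm{\xh_{t,i}-\u_t}^2-\norm{\xh_{t+1,i}-\u_t}^2$ with a drifting comparator costs $\frac{1}{\eta_{b,i}}(D^2 + 2D P_J)$. Together with $\sum_t\eta_{t,i}\norm{\g_t-M_t}^2 \lesssim D\sqrt{1+\bar V_J}$, this gives $\O((1+P_J/D)D\sqrt{1+\bar V_J})$.

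\textbf{Stage 2: a general interval.} An arbitrary $[r,s]$ is covered by $\O(\log(s-r))$ consecutive schedule intervals. We further cut $[r,s]$ into $k=\O(1+P_I/D)$ pieces $I'_j$, each with path length $\O(D)$, and refine by the schedule intervals, which yields $\O((1+P_I/D)\log(s-r))$ pieces in total. On each piece the Stage~1 bound reads $\Ot(\sqrt{\bar V_{I'_j}}+B_s)$. Summing and applying Cauchy--Schwarz gives
\begin{align*}
\sum_{t=r}^s \inner{\nabla f_t(\x_t)}{\x_t-\u_t} \le \Ot\Big(\sqrt{(1+P_I)\bar V_I} + (1+P_I)B_s\Big).
\end{align*}
Since $B_s\le 4GD$ is a constant scale, the last term becomes $\Ot(P_I)$ once the $\Ot$ absorbs $B_s$.

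\textbf{Stage 3: from $\bar V_I$ to $V_I$ or $F^{\u}_I$.} This is the main obstacle. For the gradient-variation bound we decompose each term of $\bar V_I$ through $\nabla f_t(\u_t)$, $\nabla f_{t-1}(\u_t)$, $\nabla f_{t-1}(\u_{t-1})$ and $\nabla f_{t-1}(\x_{t-1})$. Proposition~\ref{prop:bregman-divergence-property} controls the two end terms by $2L\D_{f_t}(\u_t,\x_t)$. The middle term $\norm{\nabla f_{t-1}(\u_t)-\nabla f_{t-1}(\u_{t-1})}^2$ is bounded by $L^2 D\norm{\u_t-\u_{t-1}}$, using $\norm{\u_t-\u_{t-1}}\le D$. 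This gives
\begin{align*}
\bar V_I \lesssim L\sum_{t\in I}\D_{f_t}(\u_t,\x_t) + V_I + L^2DP_I.
\end{align*}
Substituting into $\sqrt{(1+P_I)\bar V_I}$ and applying AM--GM with a parameter $\lambda$, we obtain
\begin{align*}
\sqrt{(1+P_I)L\textstyle\sum\D} \le \lambda \textstyle\sum\D + \frac{(1+P_I)L}{4\lambda}\cdot\Ot(1).
\end{align*}
We choose $\lambda=1$ so that $\lambda\sum\D$ is cancelled by the reserved negative Bregman sum, leaving $\Ot(\sqrt{(1+P_I)V_I}+P_I)$.

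The delicate point is that the logarithmic multipliers from Stages~1--2 multiply $\sqrt{\bar V_I}$. The AM--GM constant must therefore absorb them, so the leftover additive term scales as $\log^2$ times $(1+P_I)$. This is acceptable under $\Ot$.

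For the small-loss alternative we use $\norm{\nabla f_t(\x_t)}^2\le 2Lf_t(\x_t)$, which gives $\bar V_I\lesssim L\sum_t f_t(\x_t)$. This yields $\sum(f_t(\x_t)-f_t(\u_t)) \le \Ot(\sqrt{(1+P_I)\sum_t f_t(\x_t)}+P_I)$. Solving this self-bounding inequality, $x-F\le c\sqrt{x}+P$ implies $x-F\le c\sqrt F + c^2+P$, gives $\Ot(\sqrt{(1+P_I)F^{\u}_I}+P_I)$. Taking the minimum of the two bounds completes the proof.
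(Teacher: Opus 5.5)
Your proposal is correct and follows essentially the paper's route. The ingredients match: the covered-interval bound from Theorem~\ref{thm:meta-algorithm} and Lemma~\ref{thm:two-step-algorithm} (Corollary~\ref{cor:interval-dynamic-interval-in-C}), a logarithmic cover via Lemma~\ref{lemma:cgc-number}, and a split of $I$ into $\O(1+P_I/D)$ pieces of path length at most $D$ followed by Cauchy--Schwarz. The finish is also the same: Bregman cancellation, or self-bounding plus Lemma~\ref{lemma:substitute-F_T}. The only cosmetic difference is that you merge the paper's two nested Cauchy--Schwarz steps into one; the paper applies Lemma~\ref{lemma:interval-dynamic-regret-untuned-independent} on each piece and then sums over the pieces.

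Two small points of care. First, your ``refinement'' must cover each $D$-piece afresh from its own left endpoint, which is possible because the problem-independent schedule starts a learner every round. The Stage~1 bound uses $w_{r,i}=1$ and an OMD step size accumulated from the piece's start, so it does not apply to a piece that begins in the middle of a learner's lifespan. Second, the optimism that actually yields $\abs{r_{t,i}-m_{t,i}}\le D\norm{\nabla f_t(\x_t)-\nabla f_{t-1}(\x_{t-1})}_2$ is $h_{t,i}=\inner{\nabla f_{t-1}(\x_{t-1})}{\x_{t,i}}$, as used in Algorithm~\ref{alg:la-adaptive-oco}, not literally $\h_t=\ellb_{t-1}$.
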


The proofs for interval dynamic regret are provided in two appendices: Appendix~\ref{appendix:proof-interval-dynamic-regret} addresses the case where $L$ is known but $G$ is unknown, while Appendix~\ref{appendix:proof-interval-dynamic-regret-without-L} addresses the case where both $G$ and $L$ are unknown. We note that these proofs follow a top-down structure, and all preceding results can be viewed as their special cases. For example, setting $P_I = 0$ recovers the result in Section~\ref{sec:lipschitz_adaptive}, and assuming both $G$ and $L$ are known recovers the result in Section~\ref{sec:approach}.

\subsection{Application to Stochastically Extended Adversarial Model}
\label{subsec:sea}
The Stochastically Extended Adversarial (SEA) model provides a unified framework that interpolates between adversarial and stochastic online optimization~\cite{nips'22:sea,JMLR'24:OMD4SEA}. In this model, the stochastic component is that at each time step $t$, the loss function $f_t$ is drawn from a distribution $\mathfrak{D}_t$. The adversarial nature of the environments is captured by shifts in this distribution $\mathfrak{D}_t$ over time.
To formally characterize the environments, the SEA model introduces two key quantities:
\begin{align*}
    \Sigma_{1:T}^2 &= \mathbb{E}\left[\sum_{t=2}^T \sup_{\x\in\X} \norm{\nabla F_t(\x) - \nabla F_{t-1}(\x)}_2^2\right],\\
    \sigma_{1:T}^2 &= \sum_{t=1}^T \sup_{\x\in\X}\mathbb{E}[\norm{\nabla f_t(\x) - \nabla F_t(\x)}_2^2].
\end{align*}
Here, $F_t(\x) = \mathbb{E}_{f_t\sim \mathfrak{D}_t}[f_t(\x)]$ denotes the expected loss function. The term $\Sigma_{1:T}^2$ measures the cumulative distributional shift, representing the adversarial component, while $\sigma_{1:T}^2$ measures the cumulative stochastic variance around the mean, representing the stochastic component.
For convex functions, previous work establishes a static regret bound of $\O(\sqrt{\sigma_{1:T}^2} + \sqrt{\Sigma_{1:T}^2})$~\cite{nips'22:sea}, and a dynamic regret bound of $\O(\sqrt{(\sigma_{1:T}^2 + \Sigma_{1:T}^2 )(1+P_T)}+ P_T)$~\cite{JMLR'24:OMD4SEA}.

However, these results analyze the environments' properties globally. Our work offers a more refined and localized perspective. Specifically, the interval dynamic guarantee of our algorithm, GAIR-L, as established in Theorem~\ref{thm:interval-dynamic-regret}, directly implies the first piecewise characterization of performance under the SEA model over arbitrary intervals. This contribution is summarized in the following corollary.
\begin{myCor}
    \label{cor:sea}
    Under Assumptions~\ref{ass:bounded-domain} and~\ref{ass:smoothness}, and adopting the same configurations as in Theorem~\ref{thm:interval-dynamic-regret-without-L-formal}, additionally assume that the gradient norms of the individual functions are uniformly bounded by $G$, i.e., $\max_{\x \in \X} \norm{\nabla f_t(\x)} \le G$ for all $t \in [T]$.
    Then, for any sequence of comparators $\u_r, \dots, \u_s \in \X$ and any interval $I = [r, s] \subseteq [T]$, Algorithm~\ref{alg:la-adaptive-oco} (GAIR-L) achieves the interval dynamic regret bound under the SEA model:
    \begin{align*}
        &\E\left[\sum_{t=r}^s f_t(\x_t) -  \sum_{t=r}^s f_t(\u_t)\right]\\
        &\leq \Ot\left(\sqrt{(\sigma_{[r,s]}^2 + \Sigma_{[r,s]}^2)(1+P_{[r,s]})} + P_{[r,s]} \right)
    \end{align*}
    where $\sigma_{[r,s]}^2 = \sum_{t=r}^s \sup_{\x\in\X}\mathbb{E}[\norm{\nabla f_t(\x) - \nabla F_t(\x)}_2^2]$ and $\Sigma_{[r,s]}^2 = \mathbb{E}[\sum_{t=r+1}^s \sup_{\x\in\X}\norm{\nabla F_t(\x) - \nabla F_{t-1}(\x)}_2^2]$ are the interval versions of the stochastic variance and distributional shift, respectively.
\end{myCor}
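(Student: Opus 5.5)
We will obtain Corollary~\ref{cor:sea} from the realized interval dynamic regret bound of Theorem~\ref{thm:interval-dynamic-regret-without-L-formal} (the formal version of Theorem~\ref{thm:interval-dynamic-regret}). We only need its gradient-variation branch.

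\textbf{Step 1 (realized bound).} For every realization of $f_r,\dots,f_s$, the formal theorem gives
\begin{align*}
\sum_{t=r}^s f_t(\x_t) - \sum_{t=r}^s f_t(\u_t) \le \Ot\Big(\sqrt{V_{[r,s]}(1+P_I)} + P_I\Big).
\end{align*}
The hidden factors are logarithmic in $s$ and $B_s$. Under the uniform gradient bound $G$, we have $B_s \le \max\{4GD, 2G_0D\}$ deterministically. So these factors are bounded by deterministic $\O(\log)$ quantities and can be pulled outside the expectation.

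\textbf{Step 2 (take expectations).} Taking expectations and using Jensen's inequality for the concave map $x\mapsto\sqrt{x}$ gives
\begin{align*}
\E\Big[\sqrt{V_{[r,s]}(1+P_I)}\Big]\le\sqrt{\E[V_{[r,s]}](1+P_I)}.
\end{align*}
This step requires the comparators $\u_t$ to be fixed, i.e., not dependent on the random losses. That is the reading of the corollary, matching the SEA literature~\cite{JMLR'24:OMD4SEA}. If random comparators were allowed, $P_I$ would also sit inside the expectation, and Cauchy--Schwarz would give $\sqrt{\E[V]\,\E[1+P_I]}$ instead.

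\textbf{Step 3 (decompose the gradient variation).} For each $t\in[r+1,s]$ and each $\x$, split
\begin{align*}
\nabla f_t(\x)-\nabla f_{t-1}(\x) = \big(\nabla f_t(\x)-\nabla F_t(\x)\big) + \big(\nabla F_t(\x)-\nabla F_{t-1}(\x)\big) + \big(\nabla F_{t-1}(\x)-\nabla f_{t-1}(\x)\big).
\end{align*}
Squaring and using $\norm{a+b+c}^2\le 3(\norm a^2+\norm b^2+\norm c^2)$ bounds $V_{[r,s]}$ by three times
\begin{align*}
\sum_{t}\sup_{\x}\norm{\nabla f_t(\x)-\nabla F_t(\x)}^2 + \sum_{t}\sup_{\x}\norm{\nabla f_{t-1}(\x)-\nabla F_{t-1}(\x)}^2 + \sum_{t}\sup_{\x}\norm{\nabla F_t(\x)-\nabla F_{t-1}(\x)}^2.
\end{align*}
The last sum has expectation exactly $\Sigma^2_{[r,s]}$.

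\textbf{Main obstacle: the stochastic terms.} The first two sums contain $\E[\sup_{\x}\norm{\cdot}^2]$, while $\sigma^2_{[r,s]}$ is defined with $\sup_{\x}\E[\norm{\cdot}^2]$, and the supremum cannot be moved outside the expectation for free. I would avoid this exchange by not passing through the supremum at all. Instead, work with the empirical gradient variation $\bar V_{[r,s]}$ and the Bregman-cancellation route from Section~\ref{subsec:key-analysis}, which evaluates gradient differences only at the comparator $\u_t$ plus Bregman terms. The Bregman terms are cancelled by the negative terms in the regret. The leftover terms are $\norm{\nabla f_t(\u_t)-\nabla f_{t-1}(\u_t)}^2$ at fixed, deterministic points. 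Their expectations split as above into $\E\norm{\nabla f_t(\u_t)-\nabla F_t(\u_t)}^2\le \sup_{\x}\E[\cdot]$, the analogous term at $t-1$ (including the $D\norm{\u_t-\u_{t-1}}$ correction), and the $\Sigma^2$ term.

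The cancellation with the negative Bregman terms holds pathwise, before taking expectations. Since it occurs inside a square root, I would first apply the AM--GM step pathwise, $\sqrt{a\,x}\le \lambda x + a/(4\lambda)$. Then I would take expectations of the remaining linear quantities and optimize $\lambda$ afterwards; this is the method of~\cite{nips'22:sea}.

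\textbf{Conclusion.} This yields $\E[\cdot]\le\Ot(\sqrt{(\sigma^2_{[r,s]}+\Sigma^2_{[r,s]})(1+P_{[r,s]})}+P_{[r,s]})$. Boundary terms at $t=r$ are absorbed as $\O(G^2)$ constants.
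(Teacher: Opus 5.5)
Your proposal is correct and, without the detour, is the paper's argument: decompose the gradient variation at the deterministic comparator points $\u_t$ into stochastic-noise and distribution-shift pieces, take expectations, and apply Jensen to the square root. Theorem~\ref{thm:interval-dynamic-regret-without-L-formal} is already stated in terms of $V_I^{\u}=\sum_{t=r+1}^s\norm{\nabla f_t(\u_t)-\nabla f_{t-1}(\u_t)}_2^2$, with the Bregman cancellation done pathwise inside its proof, so the paper starts from it directly and needs neither your $\sup$-based Step~3 nor the $\lambda$-optimization; your remark that $B_s$ and $\tilde{\Gamma}_{s}$ are deterministically bounded under the uniform-$G$ assumption, so the hidden factors can be pulled out of the expectation, is a detail the paper leaves implicit.
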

The detailed proof is provided in Appendix~\ref{appendix:proof-sea}. In Section~\ref{subsec:exp-sea}, we design a SEA setting based on real-world data, and validate the effectiveness of our method under this setting.

\section{Experiments}
\label{sec:exp}
In this section, we evaluate the effectiveness of our algorithm through regression tasks on synthetic data and classification experiments on the real-world MNIST dataset~\cite{deng2012mnist}.
We compare our method, GAIR-L (Algorithm~\ref{alg:la-adaptive-oco}), against several interval regret minimization baselines: SAOL~\cite{ICML'15:Daniely-adaptive}; SACS and SACSPP, which are the methods from~\cite{ICML19:Zhang-Adaptive-Smooth} with problem-independent and problem-dependent schedules, respectively; and NIPS22~\cite{NeurIPS'22:efficient}. The experimental details and results are presented as follows.

\subsection{Synthetic Data}
We first evaluate GAIR-L on a synthetic online regression task, with a focus on verifying its Lipschitz-adaptivity. The experiment is set in a time horizon of $T = 2000$. At each round $t$, the learner outputs a model parameter $\x_t$ from a feasible domain $\X$, which is a Euclidean ball of radius $R = 1$ in dimension $d = 5$. The learner then receives a data sample $(\z_t, y_t)$ and incurs the squared loss $f_t(\x_t) = \frac{1}{2} \cdot \text{scale} \cdot (\x_t^\top \z_t - y_t)^2$. To simulate a non-stationary environment, we gradually shift the underlying model's optimal parameters over time.

To validate Lipschitz-adaptivity, we design the loss functions such that their Lipschitz constant increases gradually from an initial value $G_0 = 5$ to a final value $G = 50$ by adjusting the scale factor. For comparison, baseline algorithms must tune their step sizes using the worst-case estimate $G = 50$, whereas GAIR-L can use the more favorable initial estimate $G_0 = 5$ (Figure~\ref{fig:la_results}, Left). Additionally, we conduct an ablation study to isolate the impact of Lipschitz-adaptivity. In this setting, we fix $G_0 = G = 5$, so the Lipschitz constant remains static. This allows us to evaluate the performance of GAIR (our algorithm from Section~\ref{sec:approach} without Lipschitz-adaptivity) as a baseline in this environment (Figure~\ref{fig:la_results}, Right).

\begin{figure}[!t]
\centering
\subfloat{\includegraphics[width=0.48\linewidth]{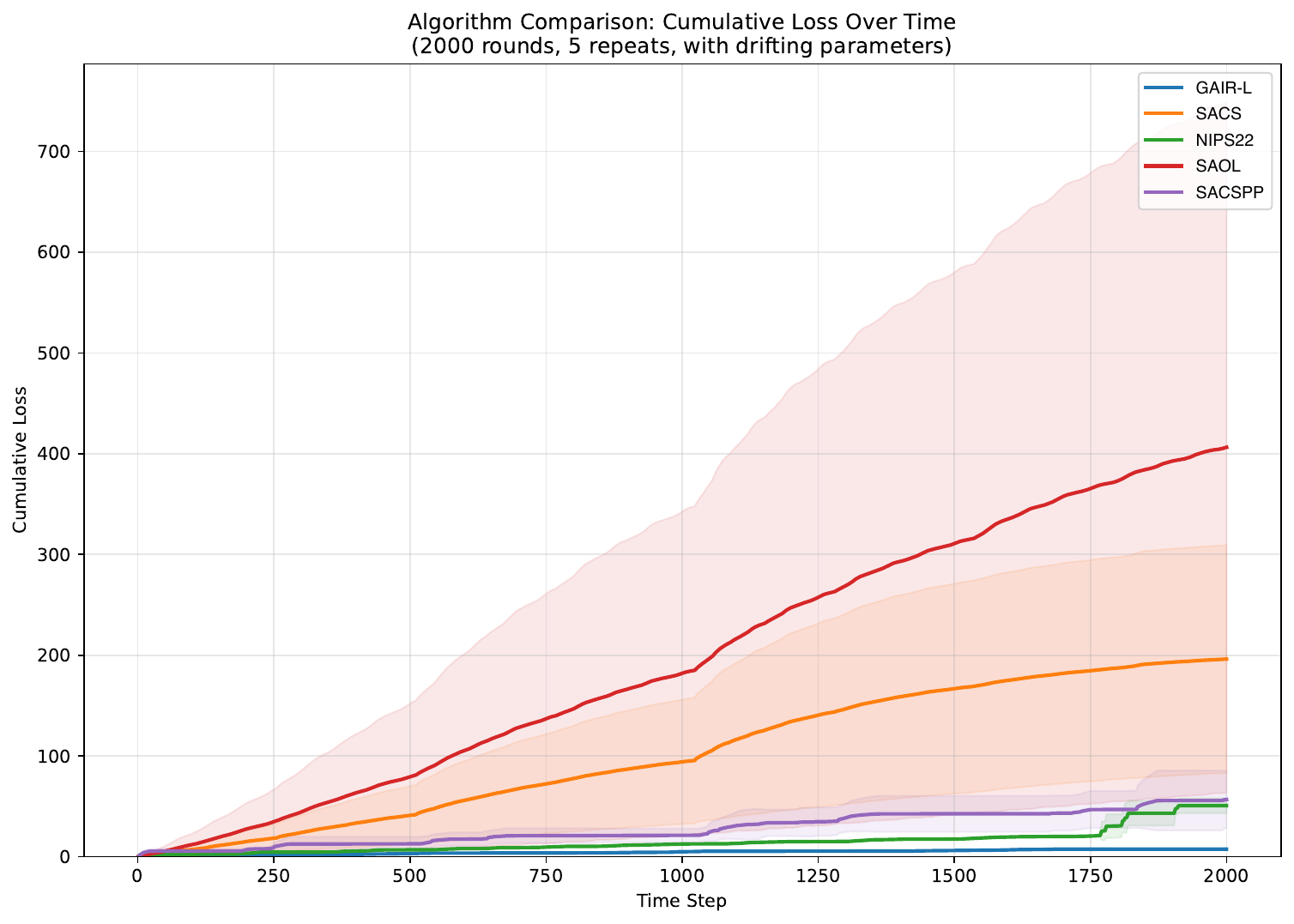}}
\hfill
\subfloat{\includegraphics[width=0.48\linewidth]{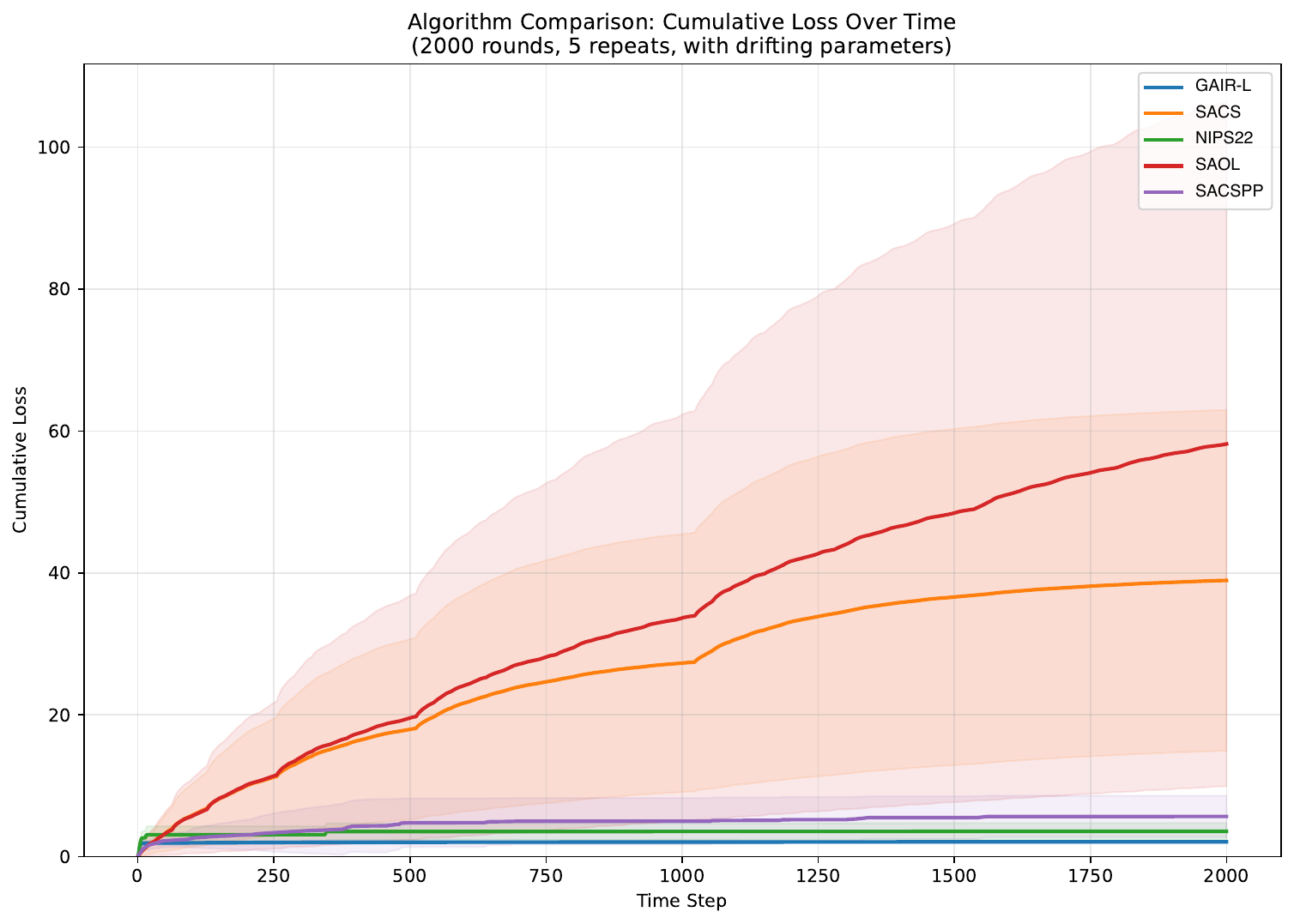}}
\caption{(Left) Performance comparison between GAIR-L and baseline algorithms with $G_0 = 5$ and $G = 50$. (Right) Ablation study under fixed Lipschitz constant $G_0 = G = 5$. Even without adaptivity to Lipschitzness, GAIR maintains strong performance.}
\label{fig:la_results}
\end{figure}

For the Lipschitz-adaptivity evaluation (Figure~\ref{fig:la_results}, Left), when the Lipschitz constant of the loss functions gradually increases from $G_0 = 5$ to $G = 50$, GAIR-L consistently achieves the lowest cumulative loss among all competing methods. This result supports the effectiveness of our Lipschitz-adaptive design, as other methods, relying on step sizes tuned by the worst-case $G = 50$, potentially incur higher loss. For the ablation study (Figure~\ref{fig:la_results}, Right), we observe that GAIR-L, even without leveraging Lipschitz adaptivity, remains competitive with or outperforms other baselines. This suggests that the core algorithmic design of GAIR-L, the gradient-variation adaptivity, contributes meaningfully to performance, offering stronger robustness.
\subsection{Application to Real-World Data with SEA Model}
\label{subsec:exp-sea}
To validate the theoretical guarantees of our methods, particularly the novel gradient-variation bound, we conduct an online multi-class classification experiment on the MNIST dataset. The experimental setup is designed to emulate the SEA model (see Section~\ref{subsec:sea}), which provides a unified framework for stochastic and adversarial optimization.

We simulate the key properties of the SEA model by introducing a non-stationary data generation process.
\begin{itemize}
    \item Distributional Shift ($\Sigma_I^2$): To emulate the adversarial shift of the underlying data distribution ($\mathfrak{D}_t$), we create a non-stationary stream of MNIST digits. The class distribution is designed to change over time: one ``dominant class'' has a significantly higher sampling probability, and this dominant class cycles sequentially from $0$ through $9$. The rate of this change is controlled by a ``transition\_speed'' parameter, which generates the distributional shift component.
    \item Stochastic Behavior ($\sigma_I^2$): The stochastic component of the environment arises naturally from the data sampling. Even when the dominant class is fixed for a period, a different image from that class is randomly sampled from the MNIST dataset at each step. This randomness introduces variability in the loss functions, contributing to the stochastic component.
\end{itemize}
The experiment, an online multi-class logistic regression task, runs for a total of $T=2000$ rounds, with results averaged over $5$ independent runs. We compare the algorithm's performance against the same set of baseline algorithms described previously; our method is the only one that theoretically provides gradient-variation regret bounds. Performance is evaluated based on both the cumulative loss (Figure~\ref{fig:mnist_results}, Left) and the online classification accuracy (Figure~\ref{fig:mnist_results}, Right).

\begin{figure}[!t]
    \centering
    \includegraphics[width=0.98\linewidth]{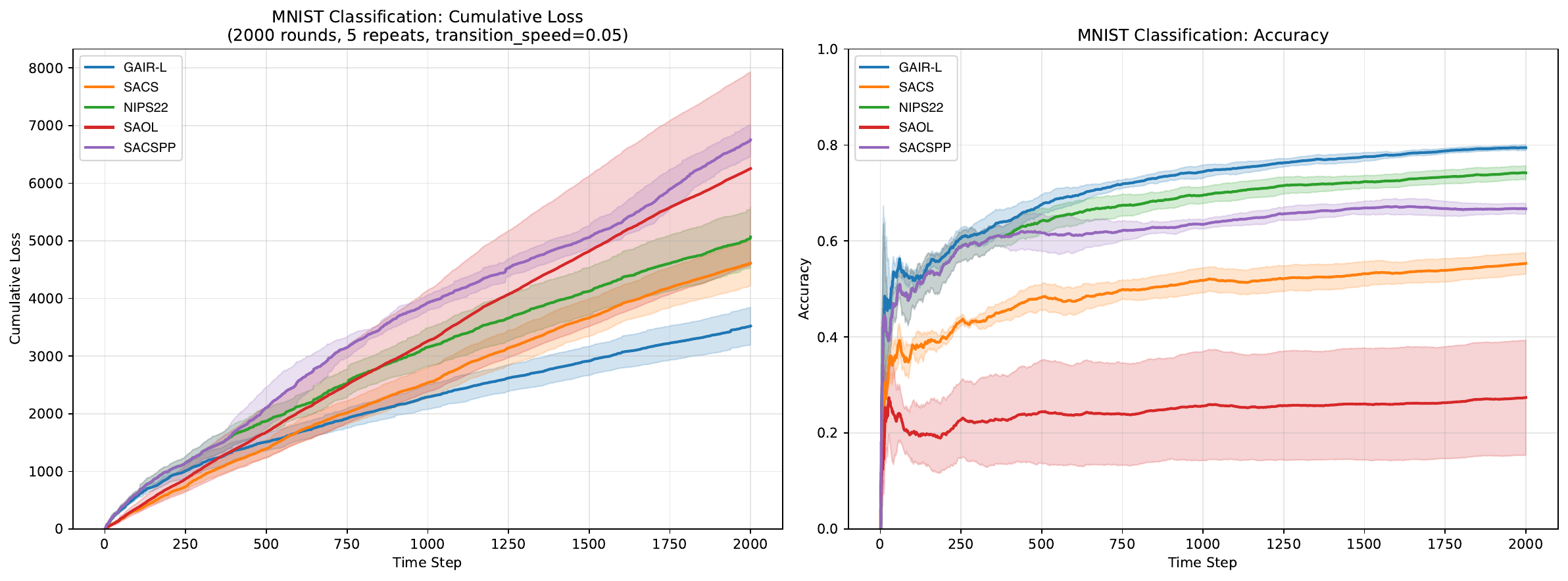}
    \caption{(Left) Cumulative loss comparison between GAIR-L and baseline algorithms. (Right) Online classification accuracy comparison between GAIR-L and baseline algorithms.}
    \label{fig:mnist_results}
\end{figure}
The empirical results demonstrate the robust performance of the proposed GAIR-L algorithm in the non-stationary MNIST classification task. As illustrated in Figure~\ref{fig:mnist_results} (Right), GAIR-L rapidly converges to the highest classification accuracy and maintains this level of performance throughout the experiment. The cumulative loss plot in Figure~\ref{fig:mnist_results} (Left) shows a corresponding trend: while GAIR-L does not have the lowest loss in the early phase, it achieves the lowest cumulative loss as learning progresses into the middle and later stages. This initial higher loss, paired with its strong accuracy, suggests that the algorithm's initial adaptation phase, during which it calibrates the ensemble weights, leads to temporarily higher loss but ultimately better long-term performance. This behavior is consistent with our design goal of attaining strong predictive accuracy in non-stationary environments.

\section{Conclusions}
\label{sec:conclusions}
In this paper, we develop a versatile online algorithm for interval regret minimization, named GAIR-L (Algorithm~\ref{alg:la-adaptive-oco}). Beyond handling non-stationarity, our method has two key advantages: it requires minimal parameter tuning, and it unifies several important results through the gradient variation quantity and algorithmic designs. For future work, one promising direction is to develop regret bounds that depend more comprehensively on gradient variation. Another is to explore the extension of our results to the setting of game optimization.

\bibliographystyle{IEEEtran}
\bibliography{online_learning}
\newpage
\onecolumn
{ \appendices
  \section{Proof of Section~\ref{sec:approach}}
\label{appendix:proof-our-approach}
This section proves the main results stated in Section~\ref{sec:approach}. The central step is to establish the gradient-variation interval regret bound in Theorem~\ref{thm:adaptive-oco} under the assumption that the Lipschitz constant is known. We then show that this bound implies a small-loss guarantee and yields a tighter worst-case interval bound. We begin the proof by first providing several key lemmas.

\subsection{Theoretical Guarantee of Base Algorithm}
The following lemma demonstrates that the online optimistic gradient descent algorithm in~\eqref{eq:base-algorithm-omd} adapts to empirical gradient variation under changing comparators, while remaining agnostic to the smoothness constant $L$.
\begin{myLemma}
    \label{thm:two-step-algorithm}
     Under Assumptions~\ref{ass:bounded-domain} and~\ref{ass:Lipschitzness}, by setting $\eta_t = (2D)/\sqrt{1 + \sum_{s=1}^{t-1} \norm{\g_s - M_s}^2}$, the two-step algorithm in~\eqref{eq:base-algorithm-omd} ensures the following dynamic regret bound for any $\u_1, \dots, \u_T \in \X$:
    \begin{align*}
        \sum_{t=1}^T \inner{\g_t}{\x_t - \u_t} \leq \left(P_T + \frac{5D}{2}\right) \sqrt{1 + \sum_{t=1}^T \norm{\g_t - M_t}_2^2}.
    \end{align*}
    where $P_T = \sum_{t=2}^T \norm{\u_t - \u_{t-1}}_2$ denotes the path length.
\end{myLemma}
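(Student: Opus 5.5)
The plan is the standard optimistic-OMD analysis with Euclidean regularizer, extended to changing comparators and a time-varying step size. First, for each round I split
\[
\inner{\g_t}{\x_t - \u_t} = \inner{\g_t - M_t}{\x_t - \xh_{t+1}} + \inner{M_t}{\x_t - \xh_{t+1}} + \inner{\g_t}{\xh_{t+1} - \u_t}.
\]
The projection (first-order optimality) inequalities for the two steps in~\eqref{eq:base-algorithm-omd} then give
\[
\inner{M_t}{\x_t - \xh_{t+1}} \le \tfrac{1}{2\eta_t}\big(\norm{\xh_t-\xh_{t+1}}^2 - \norm{\xh_t-\x_t}^2 - \norm{\x_t-\xh_{t+1}}^2\big)
\]
and
\[
\inner{\g_t}{\xh_{t+1} - \u_t} \le \tfrac{1}{2\eta_t}\big(\norm{\xh_t-\u_t}^2 - \norm{\xh_{t+1}-\u_t}^2 - \norm{\xh_t-\xh_{t+1}}^2\big).
\]
Adding these, the $\norm{\xh_t-\xh_{t+1}}^2$ terms cancel. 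For the first term I use $\inner{\g_t - M_t}{\x_t - \xh_{t+1}} \le \norm{\g_t - M_t}\,\norm{\x_t - \xh_{t+1}}$ together with stability of projection, $\norm{\x_t - \xh_{t+1}} \le \eta_t\norm{\g_t - M_t}$. This bounds the first term by $\eta_t\norm{\g_t-M_t}^2$, and I simply drop the remaining negative terms, since no cancellation via smoothness is needed here.

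Second, I handle the telescoping with changing comparators and varying $\eta_t$. I write
\[
\norm{\xh_t-\u_t}^2 - \norm{\xh_{t+1}-\u_t}^2 = \big(\norm{\xh_t-\u_t}^2 - \norm{\xh_{t+1}-\u_{t+1}}^2\big) + \big(\norm{\xh_{t+1}-\u_{t+1}}^2 - \norm{\xh_{t+1}-\u_t}^2\big).
\]
The second bracket is at most $2D\norm{\u_{t+1}-\u_t}$ by the bounded domain. Summing $\frac{1}{2\eta_t}$ times the first bracket by Abel summation, with $1/\eta_t$ nondecreasing and each squared distance at most $D^2$, gives at most $\frac{D^2}{2\eta_T}$. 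The path terms contribute $\sum_t \frac{D}{\eta_t}\norm{\u_{t+1}-\u_t} \le \frac{D}{\eta_T}P_T$. Since $\frac{D}{\eta_T} = \frac12\sqrt{1+\sum_{s<T}\norm{\g_s-M_s}^2}$, this is $\frac{P_T}{2}\sqrt{\cdot}$, which fits within the stated $P_T\sqrt{\cdot}$. The $D^2/(2\eta_T)$ term contributes $\frac{D}{4}\sqrt{\cdot}$.

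Third, I bound $\sum_t \eta_t\norm{\g_t-M_t}^2$. The step size $\eta_t$ uses only the sum up to $t-1$, so the usual lemma $\sum a_t/\sqrt{1+\sum_{s\le t}a_s}\le 2\sqrt{1+\sum a_s}$ does not apply directly. I will use $a_t=\norm{\g_t-M_t}^2\le 4G^2$, or more crudely bound the lag by comparing $\sqrt{1+\sum_{s<t}a_s}$ with $\sqrt{1+\sum_{s\le t}a_s}$. This lag mismatch is the main obstacle, since the constant $5D/2$ must absorb it without $G$ appearing.

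I expect the intended route avoids the lag issue altogether. One would keep the negative term $-\frac{1}{2\eta_t}\norm{\x_t-\xh_{t+1}}^2$ and use the alternative bound $\norm{\g_t-M_t}\norm{\x_t-\xh_{t+1}} \le \min\{2D\norm{\g_t-M_t},\,\eta_t\norm{\g_t-M_t}^2\}$. Then $\min\{2D\sqrt{a_t}, 2D\,a_t/\sqrt{1+\sum_{s<t}a_s}\}$ is bounded by $O(D)\,a_t/\sqrt{1+\sum_{s\le t}a_s}$, using that either $a_t\le 1+\sum_{s<t}a_s$ or the first branch is smaller. Summing with the standard lemma yields $4D\sqrt{1+\sum a_t}$-type constants. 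The final step is to collect constants to $(P_T+5D/2)\sqrt{1+\sum_t\norm{\g_t-M_t}^2}$, adjusting how much of the self-confident sum is assigned at each step if the constant comes out slightly larger.
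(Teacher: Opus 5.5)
Your three-point decomposition (the term $\inner{\g_t-M_t}{\x_t-\xh_{t+1}}$, the comparator telescoping, and the negative terms $-\frac{1}{2\eta_t}(\norm{\xh_t-\x_t}^2+\norm{\x_t-\xh_{t+1}}^2)$) is the same as the paper's. Your comparator telescoping is also correct, giving $\frac{P_T}{2}\sqrt{\cdot}$ plus $\frac{D}{4}\sqrt{\cdot}$.

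The gap is in the first term, and it is not cosmetic. The lemma asserts the explicit constant $\frac{5D}{2}$. That constant enters $\alpha_I$, and through $\alpha_I$ the thresholds $\mathcal{G}$ and $\mathcal{G}_{\text{LA}}$. Their calibration $\frac12\mathcal{T}_{\text{LA},s_{i+1}}=28L\alpha_I^2+\alpha_I+\frac32\beta_I$ in Lemma~\ref{lemma:counting-number-barv} depends on it.

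Write $a_t=\norm{\g_t-M_t}_2^2$. Your primary route (bound by $\eta_t a_t$ and drop the negative terms) is, as you suspected, not $G$-free. Its first summand is already $2Da_1$, which can be of order $DG^2\gg D\sqrt{1+a_1}$.

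Your fallback $\min\{2D\sqrt{a_t},\eta_t a_t\}\le 2\sqrt2\,D\,a_t/\sqrt{1+\sum_{s\le t}a_s}$ is valid. However, after the self-confident lemma it gives $4\sqrt2\,D\sqrt{1+\sum_t a_t}$ for this term alone. Even with the sharper branch $D\sqrt{a_t}$ and an optimized case split, it is about $2\sqrt5\,D$. Only $\frac{9D}{4}$ remains once the telescoping's $\frac D4$ is paid. Since $P_T$ can be $0$, the slack in $\frac{P_T}{2}$ cannot compensate, so ``adjusting constants'' does not close the gap.

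The missing idea is to apply Young's inequality with the \emph{next} step size and to keep the negative term you discarded:
$\inner{\g_t-M_t}{\x_t-\xh_{t+1}}\le\frac{\eta_{t+1}}{2}a_t+\frac{1}{2\eta_{t+1}}\norm{\x_t-\xh_{t+1}}^2$.
\begin{itemize}
\item Since $\eta_{t+1}=2D/\sqrt{1+\sum_{s\le t}a_s}$ already contains $a_t$, the first part sums to $\sum_t Da_t/\sqrt{1+\sum_{s\le t}a_s}\le 2D\sqrt{1+\sum_t a_t}$ by Lemma~\ref{lemma:self-confident-tuning}. There is no lag mismatch at all.
\item The second part is cancelled by $-\frac{1}{2\eta_t}\norm{\x_t-\xh_{t+1}}^2$, up to $\sum_t(\frac{1}{2\eta_{t+1}}-\frac{1}{2\eta_t})D^2\le\frac{D^2}{2\eta_{T+1}}=\frac D4\sqrt{1+\sum_{t\le T}a_t}$. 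This uses monotonicity of $\eta_t$ and the diameter bound.
\end{itemize}
Adding the $\frac D4$ from the comparator telescoping gives exactly $2D+\frac D4+\frac D4=\frac{5D}{2}$. The paper charges the path term as $P_T$ rather than your $\frac{P_T}{2}$, which is harmless.
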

\begin{proof}
    Following the classic analysis of the optimistic OMD~\cite[Theorem 1]{JMLR'24:Sword++}, we have:
    \begin{align*}
        \sum_{t=1}^T \inner{\g_t}{\x_t - \u_t} & \leq \underbrace{\sum_{t=1}^T \inner{\g_t - M_t}{\x_t - \xh_{t+1}}}_{\textsc{Term-A}} + \underbrace{\sum_{t=1}^T \frac{1}{2\eta_t} \left(\norm{\u_t - \xh_t}_2^2 - \norm{\u_t - \xh_{t+1}}_2^2\right)}_{\textsc{Term-B}} \\
                                               & \quad - \underbrace{\sum_{t=1}^T \frac{1}{2\eta_t}\left(\norm{\x_t - \xh_{t+1} }^2 + \norm{\x_t - \xh_t}^2\right)}_{\textsc{Term-C}}.
    \end{align*}
    For \textsc{Term-A}, by applying H\"{o}lder's inequality and the AM-GM inequality, we attain:
    \begin{align*}
        \textsc{Term-A} & \leq \sum_{t=1}^T \frac{\eta_{t+1}}{2} \norm{\g_t - M_t}^2 + \frac{1}{2\eta_{t+1}}\norm{\x_t - \xh_{t+1}}^2                                                                          \\
                        & = \sum_{t=1}^T \frac{D\cdot \norm{\g_t - M_t}^2}{\sqrt{1 + \sum_{s=1}^{t} \norm{\g_s - M_s}^2}} + \frac{1}{2\eta_{t+1}}\norm{\x_t - \xh_{t+1}}^2                                     \\
                        & \leq 2D\sqrt{1 + \sum_{t=1}^T \norm{\g_t - M_t}^2} + \underbrace{\frac{1}{2\eta_{t+1}}\norm{\x_t - \xh_{t+1}}^2}_{\textsc{Term-D}}, \tag*{(Lemma~\ref{lemma:self-confident-tuning})}
    \end{align*}
    where the remaining \textsc{Term-D} will be canceled by \textsc{Term-C}.

    For \textsc{Term-B}, observe that for any $\x,\y,\z \in \X$, under the bounded-domain assumption, the squared Euclidean distance is 4D-Lipschitz on $\X$, i.e., $\bigl|\|\x-\z\|_2^{2}-\|\y-\z\|_2^{2}\bigr| \le 4D\,\|\x-\y\|_2$. Consequently, with a non-increasing step size tuning such that $\eta_{t+1} \leq \eta_t$, we attain
    \begin{align*}
        \textsc{Term-B} & \leq  \frac{1}{2\eta_1}\norm{\u_1 - \xh_1}_2^2 + \sum_{t=2}^T  \frac{1}{2\eta_t}\norm{\u_t - \xh_t}_2^2 - \frac{1}{2\eta_{t-1}}\norm{\u_{t-1} - \xh_t}_2^2                     \\
                        & = \frac{1}{2\eta_1}\norm{\u_1 - \xh_1}_2^2 + \sum_{t=2}^T  \left(\frac{1}{2\eta_t}\norm{\u_t - \xh_t}_2^2 -\frac{1}{2\eta_{t}}\norm{\u_{t-1} - \xh_t}_2^2\right)               \\
                        & \quad \quad + \sum_{t=2}^T\left(\frac{1}{2\eta_{t}}\norm{\u_{t-1} - \xh_t}_2^2- \frac{1}{2\eta_{t-1}}\norm{\u_{t-1} - \xh_t}_2^2\right)                                        \\
                        & \leq \frac{1}{2\eta_1}\norm{\u_1 - \xh_1}_2^2 +  \sum_{t=2}^T \frac{2D}{\eta_t}\norm{\u_t - \u_{t-1}}_2 + D^2 \sum_{t=2}^T \big(\frac{1}{2\eta_t} - \frac{1}{2\eta_{t-1}}\big) \\
                        & \leq \frac{2DP_T}{\eta_T} + \frac{D^2}{2\eta_T} = \frac{4DP_T + D^2}{2\eta_T} \leq \left(P_T + \frac{D}{4}\right) \sqrt{1 + \sum_{t=1}^T \norm{\g_t - M_t}_2^2}.
    \end{align*}
    It remains to analyze \textsc{Term-D} with \textsc{Term-C}, where again the bounded domain assumption is applied:
    \begin{align*}
        \textsc{Term-D} - \textsc{Term-C} \leq \sum_{t=1}^T \left(\frac{1}{2\eta_{t+1}} - \frac{1}{2\eta_t}  \right) \norm{\x_{t} - \xh_{t+1}}_2^2 \leq \frac{D^2}{2\eta_{T+1}} = \frac{D}{4}\sqrt{1 + \sum_{t=1}^T \norm{\g_t - M_t}_2^2}.
    \end{align*}
    Combining the above results, we attain the desired bound:
    \begin{align*}
        \sum_{t=1}^T \inner{\g_t}{\x_t - \u_t} \leq \left(P_T + \frac{5D}{2}\right) \sqrt{1 + \sum_{t=1}^T \norm{\g_t - M_t}_2^2}.
    \end{align*}
\end{proof}

\subsection{Proof of Algorithm~\ref{alg:adaptive-oco}}
\label{subappendix:proof-adaptive-oco}
In this part, we provide the proof of our method under the standard assumptions of the OCO framework, specifically that the Lipschitz and smoothness constants are known to the learner. We focus on the interval regret~\cite{ICML'15:Daniely-adaptive}, which requires the algorithm to achieve nearly optimal performance on every interval when compared against any fixed comparator. We will prove Theorem~\ref{thm:adaptive-oco}, the guarantee of Algorithm~\ref{alg:adaptive-oco}.
It is worth noting that in Appendix~\ref{appendix:proof-interval-dynamic-regret}, we provide guarantees for Algorithm~\ref{alg:la-adaptive-oco} under a more general setting in which the Lipschitz constant is unknown a priori and the comparator sequences may vary over time arbitrarily. When restricted to the standard setting, Algorithm~\ref{alg:la-adaptive-oco} reduces exactly to Algorithm~\ref{alg:adaptive-oco}, which is the focus of this section. Moreover, in Lemma~\ref{lemma:interval-dynamic-regret-untuned} we establish an important intermediate result. The subsequent proof builds upon this result, but is presented in the simpler, standard setting for clarity. First, we provide a key lemma that is derived from Lemma~\ref{lemma:interval-dynamic-regret-untuned}.

\begin{myLemma}
    \label{lemma:adaptive-regret-untuned}
     Under the same assumptions as in Theorem~\ref{thm:adaptive-oco}, for any interval $[r, s] \subseteq [1, T]$, there exists a time $\tau \in [r, s]$ such that Algorithm~\ref{alg:adaptive-oco} achieves the following interval regret bound for any comparator $\u \in \X$:
    \begin{align*}
        \sum_{t=r}^s f_t(\x_t) -  \sum_{t=r}^s f_t(\u)
        &\leq \left[ D \cdot \left(3\sqrt{\ln \left(3 + \frac{4}{\mathcal{T}_{s_1}}F_{[1, s]}\right)} + {\Gamma}_{s}\right) + \frac{5D}{2}\right] \cdot \sqrt{1 + \log_2\left(1 + \frac{4}{\mathcal{T}_{s_1}}F_{[r, s]}\right)}\cdot \sqrt{\bar{V}_{[\tau, s]}} \notag\\
        &\quad + \left[ D \cdot \left(3\sqrt{\ln \left(3 + \frac{4}{\mathcal{T}_{s_1}}F_{[1, s]} \right)} + {\Gamma}_{s}\right) + \frac{5D}{2} + 6{\Gamma}_{s}GD \right] \cdot \log_2\left(1 + \frac{4}{\mathcal{T}_{s_1}}F_{[r, s]}\right) \\
        &\quad +6GD\log_2\left(1 + \frac{4}{\mathcal{T}_{s_1}}F_{[r, s]}\right)\ln \left(3 + \frac{4}{\mathcal{T}_{s_1}}F_{[1, s]}  \right) + 3GD - \sum_{t\in[\tau, s]}\D_{f_t} (\u, \x_t)\\
        &\quad + \mathcal{T}_{\text{marker before } r} \\
        &= \O\left(\sqrt{\bar{V}_{[\tau,s]}\cdot \log F_{[1,s]}\cdot \log F_{[r,s]}} - \sum_{t\in[\tau, s]}\D_{f_t} (\u, \x_t)\right),
    \end{align*}
    where $\bar{V}_{[a,b]}$ is the empirical gradient variation on the interval $[a,b]$, defined in Eq.~\eqref{eq:empirical-gradient-variation}, $F_{[a,b]} = \min_{\x \in \X} \sum_{t=a}^b f_t(\x)$ denotes the small loss on the interval $[a,b]$, ${\Gamma}_{s} = \O(\log\log T)$ is defined in~\eqref{eq:first-gamma-def}, $\mathcal{T}_{s_1} = \mathcal{G}(0, 0)$ is the threshold value when registering the first marker $s_1$ with the threshold generation function defined in~\eqref{eq:threshold-function-known-G}.
    We use $\mathcal{T}_{\text{marker before } r}$ to denote the threshold value when registering the marker just before time $r$. This value can be upper bounded as:
    \begin{align*}
        \mathcal{T}_{\text{marker before } r}
        &\leq 56LD^2\left[ 3\sqrt{\ln\left(3 + \frac{4}{\mathcal{T}_{s_1}}F_{[1, s]}\right)} + {\Gamma}_{s} + \frac{5}{2} \right]^2 \\
        &\quad + 2D\left[ 3\sqrt{\ln\left(3 + \frac{4}{\mathcal{T}_{s_1}}F_{[1, s]}\right)} + {\Gamma}_{s} \right]+ 5D + 3\left( 3 \cdot GD \left( {\Gamma}_{s} + \ln\left(3 + \frac{4}{\mathcal{T}_{s_1}}F_{[1, s]}\right) \right) \right) \\
        &= \O\left( \log F_{[1,s]} + \Gamma_{s}^2 \right).
    \end{align*}
\end{myLemma}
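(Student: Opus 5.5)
The plan is to prove Lemma~\ref{lemma:adaptive-regret-untuned} by covering $[r,s]$ with schedule intervals and bounding meta and base regret on each piece. First, use the identity~\eqref{eq:linearization-bregman} instead of plain convexity:
\begin{align*}
f_t(\x_t) - f_t(\u) = \inner{\nabla f_t(\x_t)}{\x_t - \u} - \D_{f_t}(\u, \x_t).
\end{align*}
This keeps the negative Bregman term $-\sum_t \D_{f_t}(\u,\x_t)$, which is needed later to turn $\bar{V}$ into $V$.

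Next, let $\tau$ be the first marker at or after $r$. The segment $[r,\tau-1]$ lies inside a single marker period, so by the threshold rule its cumulative loss $\sum_{t=r}^{\tau-1} f_t(\x_t)$ is at most the threshold in force at that time. This gives the additive $\mathcal{T}_{\text{marker before } r}$. On $[r,\tau-1]$ I drop the Bregman terms, using $f_t(\u) \ge 0$. For $[\tau,s]$, I use the geometric structure of $\tilde{\mathcal S}$ in~\eqref{eq:problem-dependent-schedule}. As in~\cite{ICML19:Zhang-Adaptive-Smooth,NeurIPS'22:efficient}, $[\tau,s]$ splits into consecutive intervals $I_1,\dots,I_k\in\tilde{\mathcal S}$, each run entirely by one base learner $i_j$, whose marker counts at least double up to a midpoint and then halve. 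This yields $k \le 2\log_2(\#\text{markers in }[r,s])$. Each completed marker period has cumulative loss at least the threshold, which is at least $\mathcal{T}_{s_1}$. The learner's total loss is at most $F_{[r,s]}$ plus regret; after solving that self-bounding relation, the marker count is $\O(1+\tfrac{4}{\mathcal T_{s_1}}F_{[r,s]})$. The same argument on $[1,s]$ bounds the largest index $N_s$, so $\ln(2i+1)\lesssim\ln(3+\tfrac{4}{\mathcal T_{s_1}}F_{[1,s]})$ for every active learner.

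On each $I_j$, decompose as in~\eqref{eq:regret-decomposition}. The base regret comes from Lemma~\ref{thm:two-step-algorithm} with $P=0$, which gives $\tfrac{5D}{2}\sqrt{1+\bar V_{I_j}}$ (the learner starts exactly at the beginning of $I_j$). The meta regret comes from the Optimistic-Adapt-ML-Prod guarantee; this is the specialization of Theorem~\ref{thm:meta-algorithm} with $B_t\equiv 2GD$, so $\tilde\Gamma_s$ reduces to $\Gamma_s$. It gives
\begin{align*}
\sqrt{\textstyle\sum_{t\in I_j}(r_{t,i_j}-m_{t,i_j})^2}\cdot\frac{2\gamma_{i_j}+\ln N_{s+1}+\Gamma_s}{\sqrt{\gamma_{i_j}}}+6(\Gamma_s+\ln N_{s+1})GD.
\end{align*}
Here $|r_{t,i}-m_{t,i}| \le D\norm{\nabla f_t(\x_t)-\nabla f_{t-1}(\x_{t-1})}_2$. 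Since $\gamma_i\ge\ln 3$, the factor is at most $D(3\sqrt{\ln(\cdot)}+\Gamma_s)$, which explains the first bracket. Summing over the $k$ pieces, I apply Cauchy--Schwarz, $\sum_j\sqrt{\bar V_{I_j}}\le\sqrt{k\,\bar V_{[\tau,s]}}$. The cross-boundary gradient differences only make $\bar V_{[\tau,s]}$ larger, so they are harmless. The additive $+1$ inside each square root and the $GD$ constants, multiplied by $k$, give the $\log_2(\cdot)$ terms. The final $\O(\cdot)$ form follows by collapsing $\Gamma_s$ and the logarithms of $1/\mathcal{T}_{s_1}$.

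The main obstacle is the counting step. It bounds the number of markers in $[r,s]$ by the comparator's small loss $F_{[r,s]}$ rather than by the learner's own loss. This is circular: the learner's loss is controlled by the very regret being bounded. I would first establish a crude regret bound on each marker period using $\bar V\le 4G^2\cdot$length, or better the self-bounding $\bar V\lesssim L\sum f_t(\x_t)$. The thresholds $\mathcal G$ are built with the $56LD^2(\cdot)^2$ term exactly so that per-period regret is at most a constant fraction of the threshold. Then each marker period forces the comparator to accumulate loss of order $\mathcal T_{s_1}/4$, which gives the $\tfrac{4}{\mathcal T_{s_1}}F$ scaling. I would verify this constant-fraction inequality carefully, since the same inequality also justifies the bound on $\mathcal{T}_{\text{marker before } r}$.
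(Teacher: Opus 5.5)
Your architecture is the paper's. The paper obtains this lemma from Lemma~\ref{lemma:interval-dynamic-regret-untuned} with $\u_t\equiv\u$ and $B_t\equiv 2GD$, and that proof runs through exactly your steps:
\begin{itemize}
\item the Bregman identity;
\item peeling off the partial marker period via the threshold;
\item covering by schedule intervals, with Lemma~\ref{thm:two-step-algorithm} for the base regret and Theorem~\ref{thm:meta-algorithm} for the meta regret;
\item Cauchy--Schwarz;
\item the per-marker-period self-bounding argument, which forces the comparator to pay about $\mathcal{T}_{s_1}/4$ per period.
\end{itemize}
Your plan for the last step, including how you resolve the circularity, is exactly Lemma~\ref{lemma:counting-number-barv}. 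However, one step fails as written.

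\textbf{The gap is the meta-regret factor.} You plug $\ln N_{s+1}$ into Theorem~\ref{thm:meta-algorithm} and then claim that, because $\gamma_i\ge\ln 3$, we have $\frac{2\gamma_{i_j}+\ln N_{s+1}+\Gamma_s}{\sqrt{\gamma_{i_j}}}\le 3\sqrt{\ln(\cdot)}+\Gamma_s$. The fact $\gamma_i\ge\ln 3$ only handles the $\Gamma_s/\sqrt{\gamma_{i_j}}$ part. The $\ln N/\sqrt{\gamma_{i_j}}$ part needs $\ln N\le\gamma_{i_j}$, which is false for $N_{s+1}$ when $i_j$ is small.

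For example, take $r=1$. The first piece has $i_1=1$ and $\gamma_1=\ln 3$, while $\ln N_{s+1}$ can be of order $\log F_{[1,s]}$. That piece's factor is then $\Theta(\log F_{[1,s]})$, and you end with $\O(\log F_{[1,s]}\sqrt{\bar V_{[\tau,s]}\log F_{[r,s]}})$. This is a $\sqrt{\log F_{[1,s]}}$ factor worse than the statement.

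\textbf{The fix is a missing observation.} Apply Theorem~\ref{thm:meta-algorithm} on each piece $I_j=[s_{i_j},b_j]$ separately, so the relevant count is the number of experts initialized by the end of that piece, $N_{b_j+1}$, not $N_{s+1}$. Every $[s_a,s_b-1]\in\tilde{\mathcal{S}}$ satisfies $b\le 2a$, so $N_{b_j+1}\le 2i_j+1$, i.e.\ $\ln N_{b_j+1}\le\gamma_{i_j}$. This is precisely why the prior is chosen as $\gamma_i=\ln(2i+1)$. It gives the per-piece factor $3\sqrt{\gamma_{i_j}}+\Gamma_s/\sqrt{\gamma_{i_j}}\le 3\sqrt{\gamma_{i_j}}+\Gamma_s$. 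Only the largest index then needs the global bound $i\le 1+\frac{4}{\mathcal{T}_{s_1}}F_{[1,s]}$; this is how Lemma~\ref{lemma:interval-dynamic-interval-in-Ct} is proved.

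\textbf{Two smaller points affect only the explicit constants.}
\begin{itemize}
\item Your doubling-then-halving exact cover gives $k\le 2\log_2(\cdot)$, which does not reproduce the stated $\sqrt{1+\log_2(\cdot)}$. Both the base and meta guarantees are anytime, so you can use the grow-only cover of Lemma~\ref{lemma:pcgc-number} and truncate the last interval at $s$. This gives $v\le\lceil\log_2(q-p+2)\rceil$ with no halving phase.
\item For the peeled segment, the threshold only controls the loss through round $s_p-2$. Round $s_p-1$ can add up to $GD$, so that segment is bounded by the threshold plus $GD$, not by the threshold alone.
\end{itemize}
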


\begin{proof}
    This proof follows from Lemma~\ref{lemma:interval-dynamic-regret-untuned} in Appendix~\ref{appendix:proof-interval-dynamic-regret}.

    We verify that when the Lipschitz constant is known, Algorithm~\ref{alg:la-adaptive-oco} reduces to Algorithm~\ref{alg:adaptive-oco}.
    Specifically, under the assumptions of Theorem~\ref{thm:adaptive-oco} and the configurations of Algorithm~\ref{alg:adaptive-oco}, one can verify that $\max_{t \in [T]} \norm{\r_t - \m_t}_{\infty} \leq 2GD$. Because we have prior knowledge of the Lipschitz constant $G$, we can set $B_0 = 2GD$ in Algorithm~\ref{alg:la-adaptive-oco}, and $B_0 = B_1 = \dots = B_T$ remain fixed. Therefore, the quantity $\tilde{\Gamma}_{t}$ reduces to ${\Gamma}_{t}$, and the threshold generation function in~\eqref{eq:la-threshold-function} reduces to:
    \begin{align*}
        \mathcal{G}(t, i) = 56LD^2\left(\frac{3 \ln (2i + 1) + {\Gamma}_{t}}{\sqrt{\ln (2 i + 1)}} + \frac{5}{2}\right)^2 + 2D\frac{3 \ln (2i + 1) + {\Gamma}_{t}}{\sqrt{\ln (2 i + 1)}} + 5D+ 9({\Gamma}_{t} + \ln (2 i + 1))GD,
    \end{align*}
    where, for any time $t \in [T]$, we define $s_i$ as the largest marker not exceeding $t$, i.e., $s_i \leq t < s_{i+1}$. This generation function is exactly the same as that in Algorithm~\ref{alg:adaptive-oco}.

    Secondly, it is straightforward to verify that the meta algorithm used in Algorithm~\ref{alg:la-adaptive-oco} reduces exactly to the Optimistic Adapt-ML-Prod algorithm used in Algorithm~\ref{alg:adaptive-oco}.

    Finally, we consider benchmarking Algorithm~\ref{alg:adaptive-oco} against any fixed comparator $\u \in \X$ on any interval $I = [r,s] \subseteq [T]$. In this case, we can choose $\u_r = \u_{r+1} = \dots = \u_s = \u$, and thus the path length $P_{[r,s]} = 0$.
    Hence, we can apply Lemma~\ref{lemma:interval-dynamic-regret-untuned} to this setting, and we complete the proof.
\end{proof}

\begin{proof}[Proof of Theorem~\ref{thm:adaptive-oco}]
Based on Lemma~\ref{lemma:adaptive-regret-untuned}, we can prove Theorem~\ref{thm:adaptive-oco} as follows.

\paragraph{Small-loss bound} We show that the empirical gradient variation $\bar{V}_{[\tau, s]}$ can be bounded by the sum of gradient norms, which further implies the small-loss bound. By Lemma~\ref{lemma:adaptive-regret-untuned}, we have:
\begin{align}
     \sum_{t=r}^s f_t(\x_t) -  \sum_{t=r}^s f_t(\u) &\leq \O\left(\sqrt{\bar{V}_{[\tau,s]}\cdot \log F_{[1,s]}\cdot \log F_{[r,s]}} - \sum_{t\in[\tau, s]}\D_{f_t} (\u, \x_t)\right) \notag \\
     &\leq \O\left(\sqrt{\bar{V}_{[r,s]}\cdot \log F_{[1,s]}\cdot \log F_{[r,s]}}\right) \notag\\
     &\leq \O\left(\sqrt{\Big(\sum_{t=r}^s \norm{\nabla f_t(\x_t)}_2^2\Big)\cdot \log F_{[1,s]}\cdot \log F_{[r,s]}}\right) \notag \\
     &\leq \O\left(\sqrt{\Big(\sum_{t=r}^s f_t(\x_t)\Big)\cdot \log F_{[1,s]}\cdot \log F_{[r,s]}}\right), \label{eq:oco-small-loss-proof}
\end{align}
where in the third step we use the fact that
\begin{align*}
    \bar{V}_{[r,s]} = \sum_{t = r+1}^s \norm{\nabla f_t(\x_t) - \nabla f_{t-1}(\x_{t-1})}_2^2 \leq 2\sum_{t=r}^s \norm{\nabla f_t(\x_t)}_2^2 + 2\sum_{t=r}^{s-1} \norm{\nabla f_t(\x_{t})}_2^2 \leq 4\sum_{t=r}^s \norm{\nabla f_t(\x_t)}_2^2,
\end{align*}
and in the last step we use the property of smooth functions by Lemma~\ref{lemma:self-bounded}. Then based on Eq.~\eqref{eq:oco-small-loss-proof}, we can derive the following inequality by applying Lemma~\ref{lemma:substitute-F_T}:
\begin{align*}
    \sum_{t=r}^s f_t(\x_t) -  \sum_{t=r}^s f_t(\u) &\leq \O\left(\sqrt{\Big(\sum_{t=r}^s f_t(\u)\Big)\cdot \log F_{[1,s]}\cdot \log F_{[r,s]}} + \log F_{[1,s]}\cdot \log F_{[r,s]}\right),
\end{align*}
which finishes the proof of the small-loss bound by following the convention of~\cite{ICML19:Zhang-Adaptive-Smooth,NeurIPS'22:efficient} to hide the logarithmic terms in the big-O notation.

\paragraph{Gradient-variation bound} We now prove that by carefully using the negative Bregman divergence terms, we can derive the gradient-variation interval bound. Notice that the empirical gradient variation $\bar{V}_{[\tau, s]}$ can be bounded in the following way:
\begin{align*}
    \bar{V}_{[\tau, s]} & = \sum_{t=\tau+1}^s \norm{\nabla f_t(\x_t) - \nabla f_{t-1}(\x_{t-1})}_2^2\\
    &\leq 3\sum_{t=\tau+1}^s\left(\norm{\nabla f_t(\x_t) - \nabla f_{t}(\u)}_2^2 + \norm{\nabla f_{t}(\u) - \nabla f_{t-1}(\u)}_2^2 + \norm{\nabla f_{t-1}(\x_{t-1}) - \nabla f_{t-1}(\u)}_2^2\right)\\
    &\leq 3\sum_{t=\tau+1}^s\left(2L\D_{f_t}(\u,\x_t) + \norm{\nabla f_{t}(\u) - \nabla f_{t-1}(\u)}_2^2 + 2L\D_{f_{t-1}}(\u,\x_{t-1})\right) \tag*{(Proposition~\ref{prop:bregman-divergence-property})}\\
    &\leq 3\sum_{t=\tau+1}^s\left(2L\D_{f_t}(\u,\x_t) + \sup_{\x \in \X}\norm{\nabla f_{t}(\x) - \nabla f_{t-1}(\x)}_2^2 + 2L\D_{f_{t-1}}(\u,\x_{t-1})\right).
\end{align*}
Therefore, again by Lemma~\ref{lemma:adaptive-regret-untuned}, we have:
\begin{align*}
     \sum_{t=r}^s f_t(\x_t) - \sum_{t=r}^s f_t(\u) &\leq \O\left(\sqrt{\bar{V}_{[\tau,s]}\cdot \log F_{[1,s]}\cdot \log F_{[r,s]}} - \sum_{t=\tau}^s\D_{f_t} (\u, \x_t)\right)\\
     &\leq \O\left(\sqrt{V_{[\tau,s]}\cdot \log F_{[1,s]}\cdot \log F_{[r,s]}} + \sqrt{\left(\sum_{t = \tau}^s \D_{f_t}(\u,\x_t)\right) \cdot \log F_{[1,s]}\cdot \log F_{[r,s]}} - \sum_{t=\tau}^s\D_{f_t} (\u, \x_t)\right)\\
     & \leq \O\left(\sqrt{V_{[r,s]}\cdot \log F_{[1,s]}\cdot \log F_{[r,s]}} +\log F_{[1,s]}\cdot \log F_{[r,s]}\right),
\end{align*}
where the last inequality follows from the fact that $\sqrt{ax} - x \leq \frac{a}{4}$.

\paragraph{Worst-case bound} From Lemma~\ref{lemma:interval-dynamic-interval-in-Ct}, for any interval $[s_i, s_j - 1] \in \tilde{\mathcal{S}}$, when the comparator sequence is fixed to $\u_t = \u$ and the Lipschitz constant $G$ is known, for any stopping time $\tau \in [s_i, s_j - 1]$, we have:
\begin{align*}
    \sum_{t=s_i}^{\tau} f_t(\x_t) - \sum_{t=s_i}^{\tau} f_t(\u) \leq \sum_{t=s_i}^{\tau} \inner{\nabla f_t(\x_t)}{\x_t - \u} \leq \O\left(\sqrt{\bar{V}_{[s_i, \tau]}\cdot \log T} + \log T \right).
\end{align*}
Applying the same analysis as in the derivation of Eq.~\eqref{eq:oco-small-loss-proof}, we obtain:
\begin{align}
    \label{eq:worst-case-adaptive-proof}
     \sum_{t=s_i}^{\tau} f_t(\x_t) - \sum_{t=s_i}^{\tau} f_t(\u) \leq \O\left(\sqrt{\sum_{t=s_i}^{\tau} f_t(\x_t) \cdot \log T} + \log T\right).
\end{align}
Next, we conduct a standard argument to decompose the target interval $[r,s]$ into a series of consecutive intervals in $\tilde{\mathcal{S}}$. Let $s_p$ denote the smallest marker satisfying $s_p > r$, and $s_q$ the largest marker such that $s_q \le s$. Hence, $s_{p-1} \le r < s_p$ and $s_q \le s < s_{q+1}$. By Lemma~\ref{lemma:pcgc-number}, $I$ can be covered by at most $v$ consecutive intervals from the problem-dependent schedule $\tilde{\mathcal{S}}$ (defined in~\eqref{eq:problem-dependent-schedule}), where $v \le \lceil \log_2 (q - p + 2) \rceil$. Furthermore, since at most one marker can be registered at each time step, it follows that
\begin{align}
    \label{eq:worst-case-v-bound}
    q - p \le s - r , \quad v \leq \lceil \log_2 (q - p + 2) \rceil \leq \O\left(\log (s-r)\right).
\end{align}

Formally, we have
\begin{align*}
I_1 = [s_{i_1}, s_{i_2} - 1],
I_2 = [s_{i_2}, s_{i_3} - 1],
\ldots,
I_v = [s_{i_v}, s_{i_{v+1}} - 1],
\end{align*}
with $i_1 = p$ and $i_v \le q < i_{v+1}$, which together yield
$I \subseteq [s_{p-1}, s_p - 1] \cup \bigcup_{j=1}^v I_j$.

Therefore the regret on $[r, s]$ can be decomposed into three parts: $[r, s_{p} - 1]$, $[s_{i_1}, s_{i_{v}} - 1]$, and $[s_{i_v}, s]$. For the first part, we have:
\begin{align}
    \label{eq:worst-case-t-gd}
    \sum_{t=r}^{s_{p} - 1} f_t(\x_t) - \sum_{t=r}^{s_{p} - 1} f_t(\u) &\leq \sum_{t=r}^{s_{p} - 1} f_t(\x_t) \leq \sum_{t=s_{p - 1}}^{s_{i_1} - 1} f_t(\x_t) = \sum_{t=s_{p - 1}}^{s_{i_1} - 2} f_t(\x_t) + f_{s_{i_1} - 1}(\x_{s_{i_1} - 1}) \leq \mathcal{T}_{s_{p}} + G D ,
\end{align}
where the first two inequalities follow from the non-negativity of the loss functions. The last inequality follows from the thresholding mechanism in Algorithm~\ref{alg:adaptive-oco}, which monitors the cumulative loss, and the assumption on the loss function values. For notational simplicity, we denote the threshold value used to register the marker $s_{i+1}$ by $\mathcal{T}_{s_{i+1}}$, i.e., $$\mathcal{T}_{s_{i+1}} \triangleq \mathcal{G}(s_{i+1} - 1, i).$$

For the second part, corresponding to the interval $[s_{i_1}, s_{i_v} - 1]$, we consider the regret on each sub-interval $I_k = [s_{i_k}, s_{i_{k+1}} - 1] \in \tilde{\mathcal{S}}$, where $k \in [v]$. Each interval $I_k$ contains $\abs{i_{k+1} - i_k}$ markers, allowing us to apply the marker–threshold mechanism to upper bound the cumulative loss $\sum_{t \in I_k} f_t(\x_t)$ of the final decisions at each round, and consequently, to bound the regret as in Eq.~\eqref{eq:worst-case-adaptive-proof}. Specifically, we have:
\begin{align}
    \label{eq:worst-case-interval-bound-t-g}
    \sum_{t=s_{i_k}}^{s_{i_{k+1}} - 1} f_t(\x_t) = \sum_{a = i_k}^{i_{k+1} - 1} \sum_{t=s_a}^{s_{a+1} - 1} f_t(\x_t) \leq \sum_{a = i_k}^{i_{k+1} - 1} \left(\mathcal{T}_{s_{a+1}} + GD\right) \leq (i_{k+1} - i_k) \left(\max_{a \in [s_{i_k}, s_{i_{k+1}}]} \mathcal{T}_{a} + GD\right),
\end{align}
where the last inequality follows from the same reasoning as in~\eqref{eq:worst-case-t-gd} and the monotonicity of the threshold values.

By Lemma~\ref{lemma:pcgc-number}, for consecutive intervals in $\tilde{\mathcal{S}}$, we have $i_k - i_{k-1} \leq \frac{1}{2}(i_{k+1} - i_{k})$ for any $k \geq 2$. Therefore, by applying Eq.~\eqref{eq:worst-case-adaptive-proof} to each interval $I_k$ and applying~\eqref{eq:worst-case-interval-bound-t-g}, we obtain:
\begin{align*}
    \sum_{t=s_{i_1}}^{s_{i_v} - 1} f_t(\x_t) - \sum_{t=s_{i_1}}^{s_{i_v} - 1} f_t(\u) &\leq \O\left(\sum_{k=1}^{v-1} \sqrt{\sum_{t=s_{i_k}}^{s_{i_{k+1}} - 1} f_t(\x_t) \cdot \log T} + \log T\right) \\
    &\leq  \O\left(\sum_{k=1}^{v-1} \sqrt{(GD+\max_{a \in [s_{i_1}, s_{i_{v}}]} \mathcal{T}_{a}) \cdot (i_{k+1} - i_k)\cdot \log T} + \log T\right)\\
    &\leq  \O\left(\sum_{k=1}^{v-1} \sqrt{(GD+\max_{a \in [s_{i_1}, s_{i_{v}}]} \mathcal{T}_{a}) \cdot 2^{-(v-k-1)} (i_{v} - i_{v-1})\cdot \log T} + \log T\right) \tag*{(Lemma~\ref{lemma:pcgc-number})}\\
    &\leq \O\left( \sqrt{(GD+\max_{a \in [s_{i_1}, s_{i_{v}}]} \mathcal{T}_{a}) \cdot \log T}\cdot\left(\sum_{k=-\infty}^{v-1}\sqrt{2^{-(v-k-1)}\cdot (i_{v} - i_{v-1})}\right) + v\log T\right)\\
    &=\O\left( \sqrt{\max_{a \in [s_{i_1}, s_{i_{v}}]} \mathcal{T}_{a} \cdot \log T \cdot (i_{v} - i_{v-1})} + \log\abs{I}\log T\right),
\end{align*}
where the last step follows from the summation of a geometric series.

Ignoring the doubly-logarithmic terms, the threshold function in Eq.~\eqref{eq:threshold-function-known-G} implies that $\mathcal{T}_{s_{i+1}} = \Theta(\log (1 + i) + 1)$. Assuming that $\max_{t \in [T]} \max_{\x \in \X} f_t(\x) \leq GD$, during $[s_{i_1}, s_{i_v} - 1]$, the number of generated markers is at most:
\begin{align*}
    i_v - i_{v-1}  \leq \frac{\sum_{t=r}^s f_t(\x_t)}{\mathcal{T}_{s_{i_{v-1}}}} \leq \frac{GD\abs{I}}{\mathcal{T}_{s_{i_{v-1}}}} = \Theta \left(\frac{GD\abs{I}}{\log (1 + i_{v-1}) + 1}\right),
\end{align*}
i.e., the cumulative loss divided by the threshold value at the beginning of the interval. By noticing that: $\max_{a \in [s_{i_1}, s_{i_{v}}]} \mathcal{T}_{a} = \Theta(\log (1 + i_v) + 1)$, we can conclude that:
\begin{align*}
     \sum_{t=s_{i_1}}^{s_{i_v} - 1} f_t(\x_t) - \sum_{t=s_{i_1}}^{s_{i_v} - 1} f_t(\u) &\leq \O\left( \sqrt{\max_{a \in [s_{i_1}, s_{i_{v}}]} \mathcal{T}_{a} \cdot \log T \cdot (i_{v} - i_{v-1})} + \log\abs{I}\log T\right)\\
     &= \O\left(\sqrt{\abs{I}\cdot \frac{\log (1 + i_v) + 1}{\log (1 + i_{v-1}) + 1}\log T} + \log\abs{I}\log T\right)\\
     &\leq \O\left(\sqrt{\abs{I}\cdot \frac{\log (1 + 2i_{v-1}) + 1}{\log (1 + i_{v-1}) + 1}\log T} + \log\abs{I}\log T\right)\\
     &=\Theta\left(\sqrt{\abs{I}\cdot \log T} + \log\abs{I}\log T\right),
\end{align*}
where in the third line we use the fact of the schedule that $i_v \leq 2i_{v-1}$.

For the third part, corresponding to the interval $[s_{i_v}, s]$, by applying Eq.~\eqref{eq:worst-case-adaptive-proof}, we have:
\begin{align*}
    \sum_{t=s_{i_v}}^{s } f_t(\x_t) - \sum_{t=s_{i_v}}^{s} f_t(\u) &\leq \O\left(\sqrt{\sum_{t=s_{i_v}}^{s} f_t(\x_t) \cdot \log T} + \log T\right) \leq \O\left(\sqrt{\abs{I} \cdot \log T} + \log T\right).
\end{align*}

Combining the above three parts, we obtain the worst-case regret bound:
\begin{align*}
    \sum_{t=r}^{s} f_t(\x_t) - \sum_{t=r}^{s} f_t(\u) &\leq \O\left(\sqrt{\abs{I}\cdot \log T} + \log\abs{I}\log T\right) = \O\left(\sqrt{(\abs{I} + \log^2 \abs{I}\log T) \cdot \log T}\right).
\end{align*}

It remains to discuss whether the additional $\O(\log^2 \abs{I}\log T)$ term can be absorbed into the dominant term $\O(\abs{I})$ asymptotically. To ensure the interval regret is meaningful, we restrict attention to non-trivial intervals, i.e., those with $\abs{I} = \Omega(\log T)$. Otherwise, the above bound does not imply the no-regret guarantee.

For $\abs{I} = \Omega(\log T)$, we consider the following two cases:
\begin{enumerate}[label=(\roman*)]
    \item When $|I| = \Theta(T^\alpha)$ with $\alpha \in (0, 1]$, we have $\log|I| = \Theta(\log T)$. Thus,
    $$
        \frac{\log T \cdot \log^2 |I|}{|I|} = \frac{\Theta(\log T) \cdot \Theta(\log^2 T)}{\Theta(T^\alpha)} = \Theta\left(\frac{\log^3 T}{T^\alpha}\right) = o(1).
    $$
   This ratio tends to zero as $T \to \infty$.

    \item When $|I| = \Theta(\log^\beta T)$ with $\beta > 1$, we have $\log|I| = \Theta(\log\log T)$. Thus,
    $$\frac{\log T \cdot \log^2 |I|}{|I|} = \frac{\Theta(\log T) \cdot \Theta((\log\log T)^2)}{\Theta(\log^\beta T)} = \Theta\left(\frac{(\log\log T)^2}{\log^{\beta-1} T}\right).$$
    This ratio tends to zero for any $\beta > 1$.
\end{enumerate}

In both cases, we have shown that $\log T \cdot \log^2 |I| = o(|I|)$. This implies $|I| + \log T \cdot \log^2 |I| = \Theta(|I|)$. Substituting this back into the original expression gives the desired worst-case bound.

\end{proof}

  \section{Proof of Meta Algorithm in Section~\ref{sec:lipschitz_adaptive}}
\label{appendix:proof-la}
In this section, we establish theoretical guarantees for the proposed Lipschitz-adaptive meta-algorithm. Beyond the adaptive-regret results, these guarantees may be of independent interest. To further enhance the theorem's practicality and generality, we prove guarantees under the sleeping-expert paradigm.
\subsection{Key Lemmas of Lipschitz-Adaptive Meta Algorithm}
The following lemma clarifies the extra analysis required to apply the classical Prod algorithm's learning rates~\cite{COLT'14:second-order-Hedge,NIPS'16:Wei-non-stationary-expert} in the Lipschitz-adaptive setting.
\begin{myLemma}
  \label{lemma:time-varying-lr}
  For any $i \in [N]$, let the corresponding active interval be $[r,s]$. Then, with the learning rate tuned as
  \begin{align*}
    \eta_{t+1, i} = \min\left\{\frac{1}{2B_t}, \sqrt{\frac{\gamma_{i}}{B_t^2 + \sum_{\tau=r}^t (\bar{r}_{\tau,i} - m_{\tau,i})^2 }}\right\},
  \end{align*}
  the cumulative deviation of the learning rates is bounded by:
  \begin{align*}
    \sum_{t=r}^{s} \left(\frac{\eta_{t,i}}{\eta_{t+1,i}} - 1\right) & \leq  \frac{B_s^2}{B_{r-1}^2} + \frac{1}{2} \ln \left(1 + (s-r+1)\frac{B_s^2}{B_{r-1}^2}\right) + \ln\left(\frac{B_s}{B_{r-1}}\right) + \frac{\max_{t \in [r, s]} \abs{B_t - B_{t-1}}}{B_{r-1}} \\
                                                                    & \leq \O \left(\frac{B_s^2 }{B_{r-1}^2} +  \log \left(s\frac{B_s}{B_{r-1}}\right) +  \frac{\max_{t \in [r, s]} \abs{B_t - B_{t-1}}}{B_{r-1}}  \right).
  \end{align*}
\end{myLemma}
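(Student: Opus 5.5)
Write $\eta_{t} = \eta_{t,i}$, $S_t = \sum_{\tau=r}^t (\bar r_{\tau,i} - m_{\tau,i})^2$, and set $a_t = 1/(2B_t)$, $b_t = \sqrt{\gamma_i/(B_t^2 + S_t)}$, so that $\eta_{t+1} = \min\{a_t, b_t\}$. The initial rate $\eta_r = \min\{\sqrt{\gamma_i/(1+B_{r-1}^2)}, 1/(2B_{r-1})\}$ comes from Line~\ref{line:update-sleeping-experts} of Algorithm~\ref{alg:leo-ada-ml-prod}. Both $a_t$ and $b_t$ are non-increasing in $t$, because $B_t$ and $S_t$ are non-decreasing; hence $\eta_t/\eta_{t+1} \ge 1$.

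The plan is to split each ratio by which branch of the minimum is active at $t+1$. If $\eta_{t+1} = a_t$, then $\eta_t \le a_{t-1}$ gives
\[
\frac{\eta_t}{\eta_{t+1}} \le \frac{a_{t-1}}{a_t} = \frac{B_t}{B_{t-1}} .
\]
If $\eta_{t+1} = b_t$, then $\eta_t \le b_{t-1}$ gives
\[
\frac{\eta_t}{\eta_{t+1}} \le \sqrt{\frac{B_t^2 + S_t}{B_{t-1}^2 + S_{t-1}}} .
\]
In both cases, $x-1 \le \ln x + (x-1)^2$ type estimates, or simply $x - 1 \le \ln x$ plus a correction, let us telescope. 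The clean route uses $\sqrt{y}-1 \le \tfrac12(y-1)$ and $y - 1 \le \ln y + (y-1)^2/\ldots$; more concretely, I would use $x - 1 = \ln x + (x - 1 - \ln x)$ with $x-1-\ln x \le (x-1)^2$ for $x \ge 1$. Summing $\ln$ terms telescopes, since each is dominated by $\ln(B_t/B_{t-1}) + \tfrac12 \ln\frac{B_t^2+S_t}{B_{t-1}^2+S_{t-1}}$. This produces
\[
\ln\frac{B_s}{B_{r-1}} + \frac12 \ln\frac{B_s^2 + S_s}{B_{r-1}^2},
\]
where I treat the first step using $B_{r-1}^2 \ge \min(1,\cdot)$, with care for the ``$1+$'' in the initial rate. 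Since $|\bar r_{\tau,i} - m_{\tau,i}| \le B_{\tau-1} \le B_s$, we have $S_s \le (s-r+1)B_s^2$, which gives the term $\tfrac12\ln\bigl(1+(s-r+1)B_s^2/B_{r-1}^2\bigr)$.

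The quadratic remainders need a separate argument, and this is the main obstacle. For the $a$-branch, $(B_t/B_{t-1}-1)^2 \le \frac{|B_t-B_{t-1}|}{B_{r-1}}\cdot\frac{B_t - B_{t-1}}{B_{t-1}}$. Summing gives at most $\frac{\max_t |B_t-B_{t-1}|}{B_{r-1}} \cdot \frac{B_s - B_{r-1}}{B_{r-1}}$, which I would bound by something like $\frac{\max|B_t - B_{t-1}|}{B_{r-1}} + \frac{B_s^2}{B_{r-1}^2}$ via AM-GM. For the $b$-branch, the relative increment of $B_t^2+S_t$ is at most $\frac{(B_t^2 - B_{t-1}^2) + B_{t-1}^2}{B_{t-1}^2+S_{t-1}}$. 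The $S$-part contributes at most $1$ per step, so it cannot be squared away naively. Instead I would bound the squared excess by the linear excess times its maximum, and sum the linear excesses using $\sum_t \frac{y_t - y_{t-1}}{y_{t-1}} \le \frac{y_s}{y_{r-1}}$-type estimates. This yields the $B_s^2/B_{r-1}^2$ term. If the constant bookkeeping fails, I would fall back to handling the $S$-increments by the standard Prod argument of~\cite{COLT'14:second-order-Hedge}, where $\sqrt{1+u}-1 \le \tfrac12\ln(1+u)+\ldots$ when $u \le 1$, since $(\bar r - m)^2 \le B_{t-1}^2 \le B_{t-1}^2 + S_{t-1}$. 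This keeps $u \le 1$, so $\sqrt{1+u} - 1 \le \frac{1}{2}\cdot\frac{u}{1}$ and $\ln(1+u) \ge u/2 \cdot$const. The $B$-growth parts are then charged to $B_s^2/B_{r-1}^2$ and to $\max_t |B_t - B_{t-1}|/B_{r-1}$. Collecting the terms gives the stated bound; the $\O(\cdot)$ form follows immediately.
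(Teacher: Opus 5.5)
Your case split on which branch of the minimum is active at $t+1$ matches the paper's. The ``$1+$'' in $\eta_{r,i}$ needs no special care: it only makes $\eta_{r,i}$ smaller, so the case bounds hold with $\Delta_{r-1}=B_{r-1}^2$, where $\Delta_t=B_t^2+S_t$.

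The gap is in summing the $b$-branch excesses. Your main route bounds the remainder $\sum_t (X_t-1)^2$, with $X_t=\sqrt{\Delta_t/\Delta_{t-1}}$, by $\max_t(X_t-1)\cdot\sum_t(X_t-1)$, and then uses $\sum_t \frac{y_t-y_{t-1}}{y_{t-1}}\le \frac{y_s}{y_{r-1}}$. For $y=\Delta$ this is of order $(B_s^2+S_s)/B_{r-1}^2$. Here $S_s$ really can be $(s-r+1)B_s^2$: take $B_t\equiv B$ and $(\bar r_{t,i}-m_{t,i})^2=B^2$ every round. So you get a bound linear in $s-r$, not the $B_s^2/B_{r-1}^2$ term you claim.

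The fallback does not close this either. The chain $\sqrt{1+u}-1\le \frac{u}{2}\le \frac{\ln(1+u)}{2\ln 2}$ requires $u\le 1$. But $u_t=(\Delta_t-\Delta_{t-1})/\Delta_{t-1}$ contains $(B_t^2-B_{t-1}^2)/\Delta_{t-1}$, which is unbounded when $B$ jumps, and you never say how this part is split off. It can be repaired, e.g.\ via $\Delta_t/\Delta_{t-1}\le \frac{B_t^2}{B_{t-1}^2}(1+v_t)$ with $v_t=(\bar r_{t,i}-m_{t,i})^2/\Delta_{t-1}\le 1$. Even then, several constants are off:
the logarithm carries $\frac{1}{2\ln 2}$ instead of $\frac12$; the $B$-ratio sum picks up an extra factor; and your $a$-branch remainder adds up to $(B_s-B_{r-1})^2/B_{r-1}^2$. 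So this route gives at best the $\O(\cdot)$ form, not the first displayed inequality.

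The missing idea is that no $\ln$-plus-remainder split is needed. Since $\sqrt{\Delta_t/\Delta_{t-1}}-1\le \frac12\cdot\frac{\Delta_t-\Delta_{t-1}}{\Delta_{t-1}}$, the excess is already linear in the increments. The paper bounds these linear sums directly with Lemma~\ref{lemma:self-confident-int}: $\sum_t c_t/(c_0+\sum_{\tau<t}c_\tau)\le \frac{\max_t c_t}{c_0}+\ln\frac{c_0+\sum_t c_t}{c_0}$. Its additive $\max_t c_t/c_0$ term is exactly what pays for increments that are large relative to the running sum, which is the difficulty that pushed you toward $u\le 1$. After bounding the maximum of the two branch ratios by their sum:
\begin{itemize}
\item Taking $c_0=B_{r-1}$ and $c_t=B_t-B_{t-1}$ gives $\frac{\max_t|B_t-B_{t-1}|}{B_{r-1}}+\ln\frac{B_s}{B_{r-1}}$.
\item Taking $c_0=B_{r-1}^2$ and $c_t=\Delta_t-\Delta_{t-1}=(\bar r_{t,i}-m_{t,i})^2+B_t^2-B_{t-1}^2\le B_t^2\le B_s^2$ (because $|\bar r_{t,i}-m_{t,i}|\le B_{t-1}$) gives $\frac{B_s^2}{2B_{r-1}^2}+\frac12\ln\frac{B_s^2+S_s}{B_{r-1}^2}$.
\end{itemize}
With $S_s\le (s-r+1)B_s^2$, these two bounds yield the stated inequality.
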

\begin{proof}
  For simplicity, we introduce $\Delta_{t,i} = B_t^2 + \sum_{\tau=r}^t (\bar{r}_{\tau,i} - m_{\tau,i})^2 $, and the learning rate is $\eta_{t+1, i} = \min\left\{\frac{1}{2B_{t}}, \sqrt{\frac{\gamma_{i}}{\Delta_{t, i}  }}\right\}$. We bound the ratio $\eta_{t,i}/\eta_{t+1,i}$ by case analysis:
  \begin{enumerate}[label=(\roman*)]
    \item $\eta_{t+1,i} = \frac{1}{2B_t}$, $\eta_{t,i} = \frac{1}{2B_{t-1}}$. In this case,
          \begin{align*}
            \frac{\eta_{t,i}}{\eta_{t+1,i}} = \frac{B_t}{B_{t-1}} .
          \end{align*}
    \item  $\eta_{t+1,i} = \frac{1}{2B_t}$, $\eta_{t,i} = \sqrt{\frac{\gamma_i}{\Delta_{t-1, i}  }}$. In this case, $ \eta_{t,i} \leq \frac{1}{2B_{t-1}}$, which implies:
          \begin{align*}
            \frac{\eta_{t,i}}{\eta_{t+1,i}} \leq \frac{B_t}{B_{t-1}}.
          \end{align*}
    \item $\eta_{t+1,i} = \sqrt{\frac{\gamma_i}{\Delta_{t, i}   }}$, $\eta_{t,i} = \frac{1}{2B_{t-1}}$. In this case, $\eta_{t,i} \leq  \sqrt{\frac{\gamma_i}{\Delta_{t-1, i}  }}$. Then we have,
          \begin{align*}
            \frac{\eta_{t,i}}{\eta_{t+1,i}} \leq \sqrt{\frac{\Delta_{t,i}   }{\Delta_{t-1,i}  }}.
          \end{align*}
    \item $\eta_{t+1,i} = \sqrt{\frac{\gamma_i}{\Delta_{t, i}   }}$, $\eta_{t,i} = \sqrt{\frac{\gamma_i}{\Delta_{t-1, i}  }}$. Directly,
          \begin{align*}
            \frac{\eta_{t,i}}{\eta_{t+1,i}} = \sqrt{\frac{\Delta_{t,i}   }{\Delta_{t-1,i}  }}.
          \end{align*}
  \end{enumerate}
  With the aforementioned discussion and the non-increasing property of the learning rates, we can conclude that:
  \begin{align*}
    0 \leq \frac{\eta_{t,i}}{\eta_{t+1,i}} - 1 \leq \max\left\{\frac{B_t}{B_{t-1}} - 1, \sqrt{\frac{\Delta_{t,i}   }{\Delta_{t-1,i}  }}  - 1 \right\},
  \end{align*}
  which further implies,
  \begin{align*}
    \sum_{t=r}^s \frac{\eta_{t,i}}{\eta_{t+1,i}} - 1 \leq \sum_{t=r}^s\left(\frac{B_t}{B_{t-1}} - 1\right) + \sum_{t=r}^s \left(\sqrt{\frac{\Delta_{t,i}   }{\Delta_{t-1,i}  }}  - 1\right).
  \end{align*}
  For the first term on the right-hand side,
  \begin{align*}
    \sum_{t=r}^s\left(\frac{B_t}{B_{t-1}} - 1\right) & = \sum_{t=r}^s\frac{B_t - B_{t-1}}{B_{t-1}} = \sum_{t=r}^s \frac{B_t - B_{t-1}}{B_{r-1}+\sum_{\tau=1}^{t-1} (B_\tau - B_{\tau-1})}                         \\
                                                     & \leq \frac{\max_{t \in [r, s]} \abs{B_t - B_{t-1}}}{B_{r-1}} + \ln\left(\frac{B_s}{B_{r-1}}\right). \tag*{(Lemma~\ref{lemma:self-confident-int})}
  \end{align*}
  For the second term, following the standard analysis of~\cite{COLT'14:second-order-Hedge}, we have:
  \begin{align}
    \sum_{t=r}^s  \left(\sqrt{\frac{\Delta_{t,i}}{\Delta_{t-1,i}}} - 1\right)
     & =\sum_{t=r}^s \left(\sqrt{\frac{ B_t^2 + \sum_{\tau=r}^{t} (\bar{r}_{\tau,i} - m_{\tau,i})^2 }{B_{t-1}^2 + \sum_{\tau=r}^{t-1}  (\bar{r}_{\tau,i} - m_{\tau,i})^2} } - 1\right)\notag                                                            \\
     & = \sum_{t=r}^s \left(\sqrt{1 +  \frac{ (\bar{r}_{t,i} - m_{t,i})^2 + B_t^2 - B_{t-1}^2 }{B_{r-1}^2 +\sum_{\tau=r}^{t-1}  (\bar{r}_{\tau,i} - m_{\tau,i})^2 +  B_\tau^2 - B_{\tau-1}^2} } - 1\right)\notag                                                   \\
     & \leq \sum_{t=r}^s \left(\frac{1}{2}\cdot \frac{  (\bar{r}_{t,i} - m_{t,i})^2 + B_t^2 - B_{t-1}^2}{B_{r-1}^2 + \sum_{\tau=r}^{t-1}  (\bar{r}_{\tau,i} - m_{\tau,i})^2 + B_\tau^2 - B_{\tau-1}^2} +  1 - 1 \right) \tag*{($\sqrt{1+x} \leq 1 + \frac{x}{2}$)} \\
     & \leq \frac{B_s^2}{B_{r-1}^2} + \frac{1}{2} \ln\left(B_{r-1}^2 + \sum_{t=r}^s (\bar{r}_{t,i} - m_{t,i})^2\right) - \frac{1}{2}\ln \left(B_{r-1}^2\right)\tag*{(Lemma~\ref{lemma:self-confident-int})}                                             \\
     & \leq \frac{B_s^2}{B_{r-1}^2} + \frac{1}{2} \ln \left(1 + (s-r+1)\frac{B_s^2}{B_{r-1}^2}\right). \label{eq:where-we-can-improve}
  \end{align}

  Combining the above, we conclude the result.
\end{proof}

The following lemma serves as an intermediate step in regret analysis, employing a potential-based argument following~\cite{NIPS'16:Wei-non-stationary-expert}.
\begin{myLemma}
  \label{lemma:regret-upper-bound}
  Under the same assumptions as in Theorem~\ref{thm:meta-algorithm}, given the assumption that $\sum_{i \in A_t} p_{t,i}\bar{r}_{t,i} \leq 0$ and $\eta_{t,i} \abs{\bar{r}_{t,i} - m_{t,i}} \leq 1/2$, then for any expert $i_\star \in \bigcup_{t=1}^s A_t$ with active interval $[r,s]$, the clipped regret satisfies:
  \begin{align*}
    \sum_{t=r}^s \bar{r}_{t,i_\star} \leq \frac{1}{\eta_{s+1, i_\star}}\ln \frac{1}{w_{r,i_\star}} + \sum_{t=r}^s \eta_{t,i_\star}(\bar{r}_{t,i_\star} - m_{t,i_\star})^2+ \frac{1}{\eta_{s+1, i_\star}}\ln \left( N_{s+1} + \frac{1}{e}\sum_{i\in \bigcup_{t=1}^s A_t}\sum_{t: i \in A_t }\left(\frac{\eta_{t, i}}{\eta_{t+1, i}}-1\right)\right),
  \end{align*}
  where $N_{s+1} = \abs{\bigcup_{t=1}^{s+1} A_t}$ denotes the number of experts initialized up to time $s+1$.
\end{myLemma}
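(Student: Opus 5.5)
The proof is a potential-function argument in the style of Adapt-ML-Prod. The potential is
\[
\Phi_t = \sum_{i \in \bigcup_{\tau \le t} A_\tau} w_{t,i},
\]
where for experts that are asleep or not yet born we use the convention that the weight is frozen at its last value (or at $1$ before birth). Newborn experts enter with $w=1$, so the total potential increase from births up to round $s+1$ is at most $N_{s+1}$. The aim is to show
\[
\Phi_{s+1} \le N_{s+1} + \frac{1}{e}\sum_i \sum_{t: i \in A_t}\Bigl(\frac{\eta_{t,i}}{\eta_{t+1,i}} - 1\Bigr).
\]

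\textbf{One-round estimate.} For a fixed active $i$, apply $\exp(x - x^2) \le 1 + x$ for $x \ge -1/2$ with $x = \eta_{t,i}(\bar r_{t,i} - m_{t,i})$, which is allowed by the assumption $\eta_{t,i}\abs{\bar r_{t,i} - m_{t,i}} \le 1/2$. Multiplying by $\exp(\eta_{t,i} m_{t,i})$ gives
\[
w_{t,i}\exp\bigl(\eta_{t,i}\bar r_{t,i} - \eta_{t,i}^2(\bar r_{t,i} - m_{t,i})^2\bigr) \le \tilde w_{t,i}\bigl(1 + \eta_{t,i}(\bar r_{t,i} - m_{t,i})\bigr) \le \tilde w_{t,i}\,\mathrm{stuff}.
\]
To make this summable, I would instead use the standard route: $\exp(\eta\bar r - \eta^2(\bar r - m)^2) \le \exp(\eta m)(1 + \eta(\bar r - m))$, and then handle $\exp(\eta m)\,\eta m$ through the choice of $p_t$. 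Concretely, the intended form is
\[
\sum_i \tilde w_{t,i}\,\eta_{t,i}(\bar r_{t,i} - m_{t,i}) \;\propto\; \langle \p_t, \bar{\r}_t - \m_t\rangle \cdot \mathrm{const},
\]
because $p_{t,i} \propto \eta_{t,i}\tilde w_{t,i}$. Combined with the hypothesis $\langle \p_t, \bar{\r}_t\rangle \le 0$ and the fact $\langle \p_t, \m_t\rangle = 0$ (which holds when $m_{t,i} = \langle \p_t, \h_t\rangle - h_{t,i}$), this yields $\sum_i \tilde w_{t,i}(1 + \eta_{t,i}(\bar r_{t,i} - m_{t,i})) \le \sum_i \tilde w_{t,i}$.

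\textbf{Main obstacle.} The bound above still contains $\tilde w$, not $w$. Following Wei et al., I would apply the inequality differently: use $e^{x - x^2} \le 1 + x$ only on the factor with $\bar r - m$, and bound the optimistic factor $\exp(\eta m)$ by comparing $\Phi$ defined over $\tilde w$ across rounds. The cleanest fix is probably to track the potential $\sum_i \tilde w_{t+1,i}$-like quantities, where the next round's $m_{t+1}$ enters the weights. I expect this matching of $w$ versus $\tilde w$ to be the delicate step, along with the requirement that the exponent rescaling $\eta_{t+1,i}/\eta_{t,i} \le 1$ is handled correctly.

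\textbf{Exponent rescaling.} Use $x^{\alpha} \le x + (1-\alpha)/(e\alpha)$ for $\alpha = \eta_{t+1,i}/\eta_{t,i} \in (0,1]$ and $x > 0$. This produces the additive term $\frac{1}{e}\bigl(\eta_{t,i}/\eta_{t+1,i} - 1\bigr)$ each round, and summing over rounds and experts gives the claimed bound on $\Phi_{s+1}$.

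\textbf{Unrolling for $i_\star$.} Since $w_{s+1,i_\star} \le \Phi_{s+1}$, I would unroll the recursion for the single expert $i_\star$:
\[
\ln w_{s+1,i_\star} = \eta_{s+1,i_\star}\Bigl[\frac{\ln w_{r,i_\star}}{\eta_{r,i_\star}} + \sum_{t=r}^{s}\bigl(\bar r_{t,i_\star} - \eta_{t,i_\star}(\bar r_{t,i_\star} - m_{t,i_\star})^2\bigr)\Bigr].
\]
Since $w_{r,i_\star} = 1$, the term $\ln w_{r,i_\star} = 0$, matching the $\ln(1/w_{r,i_\star})$ term in the statement. Taking logarithms of the bound $w_{s+1,i_\star} \le \Phi_{s+1}$, dividing by $\eta_{s+1,i_\star}$, and rearranging yields the stated inequality.
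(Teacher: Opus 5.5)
Your overall architecture matches the paper: a potential over all initialized experts with frozen weights for sleeping ones, births contributing at most $N_{s+1}$, the rescaling inequality producing $\frac{1}{e}(\eta_{t,i}/\eta_{t+1,i}-1)$, and the telescoping of $\ln w_{t,i_\star}/\eta_{t,i_\star}$. The gap is the step you yourself flag as the ``main obstacle''. After grouping $\bar{r}_{t,i}-m_{t,i}$ against $\p_t$, you only show that the pre-rescaling weights of active experts sum to at most $\sum_{i\in A_t}\tilde{w}_{t,i}=\sum_{i\in A_t} w_{t,i}e^{\eta_{t,i}m_{t,i}}$. That does not give $\Phi_{t+1}\le\Phi_t+(\text{rescaling terms})$, so your recursion never closes. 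Your suggested remedy of tracking ``$\tilde{w}_{t+1}$-like'' potentials is not worked out. It also only shifts the problem to the equally uncontrolled factor $e^{\eta_{t+1,i}m_{t+1,i}}$.

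The missing idea is the elementary inequality $1-x\le e^{-x}$, applied coordinate-wise. Split $\tilde{w}_{t,i}\bigl(1+\eta_{t,i}(\bar{r}_{t,i}-m_{t,i})\bigr)=\tilde{w}_{t,i}(1-\eta_{t,i}m_{t,i})+\eta_{t,i}\tilde{w}_{t,i}\bar{r}_{t,i}$. Bound the first piece by $\tilde{w}_{t,i}e^{-\eta_{t,i}m_{t,i}}=w_{t,i}$. After summing over $i\in A_t$, the second piece equals $\bigl(\sum_{j\in A_t}\eta_{t,j}\tilde{w}_{t,j}\bigr)\inner{\p_t}{\bar{\r}_t}\le 0$, because $p_{t,i}\propto\eta_{t,i}\tilde{w}_{t,i}$. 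This gives $\sum_{i\in A_t}(\text{pre-rescaling weight})\le\sum_{i\in A_t}w_{t,i}$ directly. It uses only the lemma's hypothesis $\inner{\p_t}{\bar{\r}_t}\le 0$, not $\inner{\p_t}{\m_t}=0$; this is exactly how the paper proceeds. The same inequality would also rescue your own grouping: $\sum_i w_{t,i}=\sum_i\tilde{w}_{t,i}e^{-\eta_{t,i}m_{t,i}}\ge\sum_i\tilde{w}_{t,i}(1-\eta_{t,i}m_{t,i})=\sum_i\tilde{w}_{t,i}$ when $\inner{\p_t}{\m_t}=0$. That route, however, additionally relies on the specific form of the optimism. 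With this step supplied, the rest of your argument goes through as in the paper.
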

\begin{proof}
  We introduce a potential
  \begin{align*}
    W_{s+1} = \sum_{i \in \bigcup_{t=1}^{s+1} A_t} w_{s+1,i}
  \end{align*}
  to represent the cumulation of weights for all initialized experts up to time $s+1$. Recall that we use $A_t$ to denote the indices of experts active at time $t$. We first consider an upper bound for $W_{s+1}$.

  By the inequality $x \leq x^{\alpha} + (\alpha -1)/e$ for $x > 0, \alpha > 0$, for any expert $i \in A_{s+1}$, we have:
  \begin{align}
    w_{s+1,i} \leq (w_{s+1,i})^{\frac{\eta_{s,i}}{\eta_{s+1,i}}} + \frac{1}{e} \left(\frac{\eta_{s,i}}{\eta_{s+1,i}} - 1\right). \label{eq:proof-meta-weight-process-1}
  \end{align}
  With the update formulation at Line~\ref{line:weight-update}, the first term above can be further bounded as:
  \begin{align}
    (w_{s+1,i})^{\frac{\eta_{s,i}}{\eta_{s+1,i}}} & = w_{s,i}\exp\left(\eta_{s,i}\bar{r}_{s, i} - \eta_{s,i}^2 (\bar{r}_{s,i} - m_{s,i})^2\right) \notag                   \\
                                                  & =\tilde{w}_{s,i}\exp\left(\eta_{s,i}(\bar{r}_{s, i} -m_{s,i} )- \eta_{s,i}^2 (\bar{r}_{s,i} - m_{s,i})^2\right) \notag \\
                                                  & \leq \tilde{w}_{s,i} (1 + \eta_{s,i}(\bar{r}_{s,i} - m_{s,i})), \label{eq:proof-meta-weight-process-2}
  \end{align}
  where the last inequality is by $\exp(x - x^2) \leq 1 + x$ for $x \geq -1/2$. This criterion is satisfied by the assumption in the lemma statement. Combining Eq.~\eqref{eq:proof-meta-weight-process-1} and Eq.~\eqref{eq:proof-meta-weight-process-2}, we can further analyze the weights for experts active at time $s$:
  \begin{align}
    \sum_{i \in A_{s}} w_{s+1,i} & \leq \sum_{i \in A_{s}}\tilde{w}_{s, i}\left(1+\eta_{s, i}\left(\bar{r}_{s, i}-m_{s, i}\right)\right) + \sum_{i \in A_{s}} \frac{1}{e}\left(\frac{\eta_{s, i}}{\eta_{s+1, i}}-1\right) \notag                                  \\
                                 & = \sum_{i \in A_{s}}\tilde{w}_{s, i}\left(1-\eta_{s, i}m_{s, i}\right) + \sum_{i \in A_{s}} \eta_{s, i} \tilde{w}_{s, i}\bar{r}_{s, i} + \sum_{i \in A_{s}} \frac{1}{e}\left(\frac{\eta_{s, i}}{\eta_{s+1, i}}-1\right) \notag \\
                                 & \leq \sum_{i \in A_{s}}\tilde{w}_{s, i}\exp(-\eta_{s, i}m_{s, i}) + \sum_{i \in A_{s}} \eta_{s, i} \tilde{w}_{s, i}\bar{r}_{s, i} + \sum_{i \in A_{s}} \frac{1}{e}\left(\frac{\eta_{s, i}}{\eta_{s+1, i}}-1\right)  \notag     \\
                                 & =  \sum_{i \in A_{s}}w_{s, i} + \Big(\sum_{j \in A_s} \eta_{s, j}\tilde{w}_{s, j}\Big) \sum_{i \in A_s} p_{s,i}\bar{r}_{s,i} + \sum_{i \in A_{s}} \frac{1}{e}\left(\frac{\eta_{s, i}}{\eta_{s+1, i}}-1\right)  \notag          \\
                                 & \leq \sum_{i \in A_{s}}w_{s, i} + \sum_{i \in A_{s}} \frac{1}{e}\left(\frac{\eta_{s, i}}{\eta_{s+1, i}}-1\right), \label{eq:proof-sum-weights-two-round}
  \end{align}
  where we apply $1 - x \leq \exp(-x)$ for any $x \in \R$ and the last inequality is by the assumption in the theorem statement that $\sum_{i \in A_t} p_{t,i}\bar{r}_{t,i} \leq 0$ for any $t\in[T]$. Next, we decompose $W_{s+1}$ into three parts: (i) weights of new experts initialized right at time $s+1$; (ii) weights of experts active at time $s$; (iii) weights of experts that fell asleep before time $s$:
  \begin{align}
     & W_{s+1} = \sum_{i \in A_{s+1}\setminus A_s} w_{s+1, i}+ \sum_{i \in A_s} w_{s+1, i} + \sum_{i\in A_{1:s+1}\setminus A_{s:s+1} }w_{s+1, i}\notag                                                                                                                                                                                             \\
     & \leq \abs{A_{s+1}\setminus A_s} + \sum_{i \in A_{s}}w_{s, i} + \sum_{i \in A_{s}} \frac{1}{e}\left(\frac{\eta_{s, i}}{\eta_{s+1, i}}-1\right) + \sum_{i\in A_{1:s+1}\setminus A_{s:s+1} }w_{s, i}\notag                                                                                                                                     \\
     & =\abs{A_{s+1}\setminus A_s} +\sum_{i\in \bigcup_{t=1}^s A_t} w_{s, i} + \sum_{i \in A_{s}} \frac{1}{e}\left(\frac{\eta_{s, i}}{\eta_{s+1, i}}-1\right) \leq N_{s+1} + \sum_{i\in \bigcup_{t=1}^s A_t}\sum_{t: i \in A_t }\frac{1}{e}\left(\frac{\eta_{t, i}}{\eta_{t+1, i}}-1\right),\label{eq:proof-meta-algorithm-upper-bound-W}
  \end{align}
  where the first inequality follows from Eq.~\eqref{eq:proof-sum-weights-two-round} and the newly initialized weights are set to $1$, and the last from induction. Therefore, we finish upper bounding the potential $W_{s+1}$. Then, we consider a lower bound for this potential. By noticing that $\ln W_{s+1} \geq \ln w_{s+1, i_\star}$ and with an induction argument:
  \begin{align*}
    \frac{1}{\eta_{s+1, i_\star}}\ln w_{s+1, i_\star} & = \frac{1}{\eta_{s, i_\star}}\left( \ln w_{s, i_\star} + \eta_{s, i_\star} \bar{r}_{s,i_\star} - \eta_{s,i_\star}^2(\bar{r}_{s,i_\star} - m_{s,i_\star})^2\right) \\
                                                      & =\frac{1}{\eta_{s, i_\star}}\ln w_{s, i_\star} -\eta_{s,i_\star}(\bar{r}_{s,i_\star} - m_{s,i_\star})^2 + \bar{r}_{s,i_\star}                                           \\
                                                      & =\frac{1}{\eta_{r, i_\star}}\ln w_{r, i_\star} - \sum_{t=r}^s \eta_{t,i_\star}(\bar{r}_{t,i_\star} - m_{t,i_\star})^2 + \sum_{t=r}^s \bar{r}_{t,i_\star}          \\
                                                      & = - \sum_{t=r}^s \eta_{t,i_\star}(\bar{r}_{t,i_\star} - m_{t,i_\star})^2 + \sum_{t=r}^s \bar{r}_{t,i_\star}. \tag*{ $(w_{r, i_\star} = 1)$}.
  \end{align*}
  Rearranging the above equality and using Eq.~\eqref{eq:proof-meta-algorithm-upper-bound-W}, we conclude:
  \begin{align*}
    \sum_{t=r}^s \bar{r}_{t,i_\star} \leq \frac{1}{\eta_{s+1, i_\star}}\ln w_{s+1, i_\star} + \sum_{t=r}^s \eta_{t,i_\star}(\bar{r}_{t,i_\star} - m_{t,i_\star})^2  + \frac{1}{\eta_{s+1, i_\star}}\ln \left( N_{s+1} + \frac{1}{e}\sum_{i\in \bigcup_{t=1}^s A_t}\sum_{t: i \in A_t }\left(\frac{\eta_{t, i}}{\eta_{t+1, i}}-1\right)\right).
  \end{align*}
\end{proof}

\subsection{Proof of Theorem~\ref{thm:meta-algorithm}}
\begin{proof}
  First, we prove the clipping technique~\citep{COLT'21:impossible-tuning,COLT'19:ashok-cubic} does not hurt the regret bound, apart from a constant factor:
  \begin{align}
    \sum_{t\in[r,s]} r_{t,\is} - \bar{r}_{t, \is} & = \sum_{t\in[r,s]} r_{t,\is}-m_{t, \is} - \frac{B_{t-1}}{B_{t}}(r_{t,\is}-m_{t, \is})\notag  =\sum_{t\in[r,s]} \frac{B_{t} - B_{t-1}}{B_{t}}(r_{t,\is}-m_{t, \is})\notag                                               \\
                                                  & \leq \sum_{t\in[r,s]} \frac{B_{t} - B_{t-1}}{B_{t}} \abs{r_{t,\is}-m_{t, \is}} \leq \sum_{t\in[r,s]} \frac{B_{t} - B_{t-1}}{B_{t}} \norm{\r_{t}-\m_{t}}_\infty \leq B_{s} - B_{r-1}.\label{eq:proof-clipping-constant}
  \end{align}
  In what follows, we focus on the analysis of the clipped regret $\rb_{t, \is}$. In the statement of Theorem~\ref{thm:meta-algorithm}, we set the optimism as $m_{t,i} = \inner{\p_t}{\h_t} - h_{t,i}$, which indicates that the first condition to apply Lemma~\ref{lemma:regret-upper-bound} is satisfied:
  \begin{align*}
    \sum_{i \in A_t} p_{t,i}\bar{r}_{t,i} & = \left(1 - \frac{B_{t-1}}{B_t}\right) \sum_{i \in A_t} p_{t,i} m_{t,i} + \frac{B_{t-1}}{B_t} \sum_{i\in A_t} p_{t,i}r_{t,i} = 0.
  \end{align*}
  As for the second condition, by the learning rate proposed in Line~\ref{line:redesign-lr}, for any $t$ and $i \in A_t$, we have
  \begin{align*}
    \eta_{t,i}\abs{\bar{r}_{t,i} - m_{t,i}} = \eta_{t,i} \frac{B_{t-1}}{B_t} \abs{r_{t,i} - m_{t,i}} \leq \frac{1}{2B_{t-1}} \cdot \frac{B_{t-1}}{B_t} \cdot B_t = \frac{1}{2},
  \end{align*}
  which meets the condition to apply Lemma~\ref{lemma:regret-upper-bound}. Therefore, the clipped regret can be bounded by,
  \begin{align}
    \label{eq:proof-after-applying-lemma-3}
    \sum_{t=r}^s \bar{r}_{t,i_\star} \leq \frac{1}{\eta_{s+1, i_\star}}\ln \frac{1}{w_{r,i_\star}} + \sum_{t=r}^s \eta_{t,i_\star}(\bar{r}_{t,i_\star} - m_{t,i_\star})^2+ \frac{1}{\eta_{s+1, i_\star}}\ln \left( N_{s+1} + \frac{1}{e}\sum_{i\in \bigcup_{t=1}^s A_t}\sum_{t: i \in A_t }\left(\frac{\eta_{t, i}}{\eta_{t+1, i}}-1\right)\right).
  \end{align}
  The notation $\bigcup_{t=r}^s A_t$ denotes the set of base learners that are active at least once in the interval $[r,s]$. Since we set $\w_{r,\is} = 1$, the first term becomes $0$.

  The second term can be further bounded by applying the self-confident tuning lemma~(Lemma~\ref{lemma:self-confident-int}):
  \begin{align}
    \sum_{t=r}^s\eta_{t,\is} (\bar{r}_{t,\is} - m_{t,\is})^2 & = \sum_{t=r}^s \sqrt{\frac{\gamma_{\is}}{ B_{t-1}^2 + \sum_{\tau=r}^{t-1} (\bar{r}_{\tau,\is} - m_{\tau,\is})^2}} \cdot (\bar{r}_{t,\is} - m_{t,\is})^2 \notag \\
                                                             & \leq  \sum_{t=r}^s \sqrt{\frac{\gamma_{\is}}{ \sum_{\tau=r}^{t-1} (\bar{r}_{\tau,\is} - m_{\tau,\is})^2}} \cdot (\bar{r}_{t,\is} - m_{t,\is})^2 \notag \\
                                                             & \leq 2\sqrt{\gamma_{\is} \cdot \sum_{t=r}^{s} (\bar{r}_{t,\is} - m_{t,\is})^2}. \label{eq:proof-meta-self-confident}
  \end{align}

  By Lemma~\ref{lemma:time-varying-lr}, the third term on the right-hand side of~\eqref{eq:proof-after-applying-lemma-3} can be bounded as
  \begin{align}
     & \ln \left( N_{s+1} + \frac{1}{e}\sum_{i \in \bigcup_{t=1}^s A_t}\sum_{t: i\in A_t}^s\left(\frac{\eta_{t, i}}{\eta_{t+1, i}}-1\right)\right) \notag                                                                                                                                                           \\
     & \leq \ln N_{s+1} + \ln \Bigg(1 + \frac{1}{e} \Bigg(\frac{B_s^2}{B_{0}^2} + \frac{1}{2} \ln \left(1 + s \cdot \frac{B_s^2}{B_{0}^2}\right) + \ln\left(\frac{B_s}{B_{0}}\right)  + \frac{\max_{t \in [1, s]} \abs{B_t - B_{t-1}}}{B_{0}}\Bigg) \Bigg) \notag \\
     & \triangleq \ln N_{s+1} + \tilde{\Gamma}_{s}. \label{eq:def-gamma-log-term}
  \end{align}
  In the above expression, we introduce the notation $\tilde{\Gamma}_{s}$. This quantity depends on both the actual execution of the algorithm and the environment.
  With the introduced notation $\tilde{\Gamma}_{s}$, the last term in~\eqref{eq:proof-after-applying-lemma-3} can be bounded by considering two cases of the learning rate $\eta_{s+1, \is}$:
  \begin{align}
    &\frac{1}{\eta_{s+1, \is}} \ln \left( N_{s+1} + \frac{1}{e}\sum_{i\in \bigcup_{t=1}^s A_t}\sum_{t: i \in A_t }\left(\frac{\eta_{t, i}}{\eta_{t+1, i}}-1\right)\right) \notag \\
    &\leq \frac{\ln N_{s+1} + \tilde{\Gamma}_{s}}{\eta_{s+1, \is}} \notag \\
    & \leq \frac{\ln N_{s+1} + \tilde{\Gamma}_{s}}{\sqrt{\gamma_{\is}}} \cdot \sqrt{\sum_{t=r}^{s} (\bar{r}_{t,\is} - m_{t,\is})^2 + B_{s}^2} + 2(\tilde{\Gamma}_{s} + \ln N_{s+1}) \cdot B_s \notag \\
    &\leq \frac{\ln N_{s+1} + \tilde{\Gamma}_{s}}{\sqrt{\gamma_{\is}}} \cdot \sqrt{\sum_{t=r}^{s} (\bar{r}_{t,\is} - m_{t,\is})^2 } + 3(\tilde{\Gamma}_{s} + \ln N_{s+1}) \cdot B_s. \label{eq:proof-meta-two-cases}
  \end{align}

  Combining Eq.~\eqref{eq:proof-clipping-constant} through Eq.~\eqref{eq:proof-meta-two-cases}, and noting that
  \begin{align*}
     \abs{\bar{r}_{t, \is} - m_{t, \is}} = \abs{\frac{B_{t-1}}{B_t}(r_{t,i} - m_{t,i})} \leq \abs{r_{t,i} - m_{t,i}},
  \end{align*}
  we complete the proof.
\end{proof}

  \section{Proof of Interval Dynamic Regret with Smoothness Constant $L$}
\label{appendix:proof-interval-dynamic-regret}
In this section, we show that Algorithm~\ref{alg:la-adaptive-oco} achieves a Lipschitz-adaptive interval dynamic regret guarantee. We begin by establishing a seemingly sub-optimal bound: for any interval $I$ and any comparator sequence, we obtain
$\tilde{\mathcal{O}}\left(P_I \sqrt{\bar{V}_I}\right)$,
where $\bar{V}_{I}$ denotes the empirical gradient variation defined in~\eqref{eq:empirical-gradient-variation}. We then demonstrate that once this result is established, it can be improved to
$\tilde{\mathcal{O}}\left(\sqrt{P_I \cdot \bar{V}_{I}}\right)$
through a carefully designed interval partitioning.

\subsection{Justification for Known $L$ and Unknown $G$}
\label{subappendix:justification}
We focus on the setting where the smoothness constant $L$ is known but the Lipschitz constant $G$ is unknown. One motivation for studying this case is that the resulting guarantee directly implies the scenario where both $G$ and $L$ are known. This allows for a fair comparison with previous work, which typically assumes knowledge of both constants to achieve computationally efficient results.

In addition, we provide a justification for the ``known $L$, unknown $G$'' setting using the canonical example of online linear regression. At each round $t$, the learner selects a parameter vector $\x_t \in \mathcal{X}$ from a bounded domain. The environment then reveals a data point $(\z_t, y_t)$, and the learner incurs the loss $f_t(\x) = \frac{1}{2} (\x^\top \z_t - y_t)^2$.

The smoothness constant $L$ is the upper bound on the spectral norm of the Hessian matrix, $\nabla^2 f_t(\x) = \z_t \z_t^\top$. In our online setting, the algorithm at round $t$ only requires a smoothness parameter that depends on past data. We can therefore use an online estimate based on all data seen up to time $t-1$:
$$ \hat{L}_{t-1} = \max_{s \in [t-1]} \norm{\nabla^2 f_s(\x)}_2 =  \max_{s \in [t-1]} \norm{\z_s \z_s^\top}_2 =  \max_{s \in [t-1]} \norm{\z_s}_2^2. $$
At the beginning of round $t$, $\hat{L}_{t-1}$ is a known constant computed from past data. As noted in Algorithm~\ref{alg:la-adaptive-oco}, the smoothness parameter required at time $t$ is indeed only dependent on data up to time $t-1$.

In contrast, a bound on the Lipschitz constant $G$ is required at time $t$ \emph{before} the loss function $f_t(\x)$ is revealed. The gradient is $\nabla f_t(\x) = (\x^\top \z_t - y_t) \z_t$. A tight upper bound on the gradient norm must account for the current data:
$$\hat{G}_t = \max_{s \in [t], \x \in \mathcal{X}} \norm{\nabla f_s(\x)}_2 = \sup_{s \in [t], \x \in \X} \norm{(\x^\top \z_s - y_s) \z_s}.$$
The magnitude of the gradient $\nabla f_t(\x)$ directly depends on the target value $y_t$. In a non-stationary online setting, $y_t$ cannot be predicted in advance. Therefore, any tight bound $\hat{G}_t$ that includes the $t$-th term is unknown at the start of round $t$, and the Lipschitz constant must be learned in an online fashion.

\subsection{Key Lemmas}
\begin{myLemma}
    \label{lemma:interval-dynamic-interval-in-Ct}
    Under the same assumptions and algorithmic configurations as in Theorem~\ref{thm:la-adaptive-oco-with-L}, consider any interval $I = [s_i, s_j - 1] \in \tilde{\mathcal{S}}$ and any time $\tau \in [s_i, s_j - 1]$.
    For any sequence of comparators $\u_{s_i}, \dots, \u_{\tau}$, Algorithm~\ref{alg:la-adaptive-oco} guarantees the following anytime regret bound:
    \begin{align*}
        \sum_{t=s_i}^{\tau} \inner{\nabla f_t(\x_t)}{\x_t - \u_t} \leq \sqrt{1 + \bar{V}_{[s_i, \tau]}} \cdot \left(P_{[s_i, \tau]} + D\cdot \frac{3 \ln (2 i + 1) + \tilde{\Gamma}_{\tau}}{\sqrt{\ln  (2 i + 1)}} + \frac{5D}{2}\right) + 3(\tilde{\Gamma}_{\tau}+\ln(2i + 1)) B_{\tau} + B_{\tau} - B_{s_i},
    \end{align*}
    where $\bar{V}_{[a, b]}$ denotes the empirical gradient variation defined in~\eqref{eq:empirical-gradient-variation}, $\tilde{\Gamma}_{\tau}$ is defined in~\eqref{eq:def-gamma}, and $B_{\tau} = \max\{\max_{t \in [1, \tau], i \in A_t} \abs{\inner{\nabla f_t(\x_t) - \nabla f_{t-1}(\x_{t-1})}{\x_t - \x_{t,i}}}, B_{0}\}$ denotes the maximum input scale of the meta algorithm.
\end{myLemma}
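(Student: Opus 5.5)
We split the linearized regret on $[s_i,\tau]$ with the decomposition~\eqref{eq:regret-decomposition}, taking the $i$-th base learner as the reference expert:
\begin{align*}
\sum_{t=s_i}^{\tau} \inner{\nabla f_t(\x_t)}{\x_t - \u_t} = \sum_{t=s_i}^{\tau} \inner{\nabla f_t(\x_t)}{\x_t - \x_{t,i}} + \sum_{t=s_i}^{\tau} \inner{\nabla f_t(\x_t)}{\x_{t,i} - \u_t}.
\end{align*}
Since $I=[s_i,s_j-1]\in\tilde{\mathcal{S}}$, learner $i$ is initialized at $s_i$ and stays in $A_t$ for every $t\in[s_i,\tau]$. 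So both terms refer to a single expert that is continuously active, and each can be controlled by the results already proved.

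\emph{Base regret.} Learner $i$ runs~\eqref{eq:base-algorithm-omd} with $\g_t=\nabla f_t(\x_t)$ and $M_t=\nabla f_{t-1}(\x_{t-1})$, and its step size is exactly as in Lemma~\ref{thm:two-step-algorithm} with time re-indexed from $s_i$. That lemma gives
\begin{align*}
\Big(P_{[s_i,\tau]}+\tfrac{5D}{2}\Big)\sqrt{1+\textstyle\sum_{t=s_i}^{\tau}\norm{\g_t-M_t}_2^2}.
\end{align*}
There is one subtlety at the first round $s_i$. The optimism there is $\nabla f_{s_i-1}(\x_{s_i-1})$, so the sum contains one extra term $\norm{\g_{s_i}-M_{s_i}}^2$ beyond $\bar V_{[s_i,\tau]}$. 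I would handle it in one of two ways: either adopt the convention $M_{s_i}=\g_{s_i}$-compatible initialization, or bound this single term by $4G^2$ and absorb it. The cleanest route is to observe that the display in the statement uses $\sqrt{1+\bar V}$, so I will define things so that the first-round term is excluded. Concretely, I would peel off round $s_i$ with $\inner{\g}{\x-\u}\le GD$, or note that the ``$+1$'' and the additive $B$ terms absorb it.

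\emph{Meta regret.} Here $r_{t,i}=\inner{\nabla f_t(\x_t)}{\x_t-\x_{t,i}}$. The optimism has the form $m_{t,i}=\inner{\p_t}{\h_t}-h_{t,i}$ with $h_{t,i}=\inner{\nabla f_{t-1}(\x_{t-1})}{\x_{t,i}}$, so Theorem~\ref{thm:meta-algorithm} applies with $i_\star=i$, $r=s_i$, $s=\tau$ and $\gamma_i=\ln(2i+1)$. The total number of experts initialized by round $\tau+1$ is $N_{\tau+1}\le i+1\le 2i+1$, because markers after $s_i$ up to $\tau$ are at most those created before interval $I$ ends. I expect that counting, ensuring $\ln N_{\tau+1}\le\ln(2i+1)$, to be the main delicate point. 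In $\tilde{\mathcal{S}}$ the deactivation index satisfies $j\le 2i$, so any new learner created while $i$ is alive has index below $2i$. Next, $(r_{t,i}-m_{t,i})^2=\inner{\nabla f_t(\x_t)-\nabla f_{t-1}(\x_{t-1})}{\x_t-\x_{t,i}}^2\le D^2\norm{\nabla f_t(\x_t)-\nabla f_{t-1}(\x_{t-1})}_2^2$. The meta regret is therefore at most
\begin{align*}
D\sqrt{\bar V_{[s_i,\tau]}}\cdot\frac{2\ln(2i+1)+\ln(2i+1)+\tilde\Gamma_\tau}{\sqrt{\ln(2i+1)}}+3(\tilde\Gamma_\tau+\ln(2i+1))B_\tau+B_\tau-B_{s_i-1}.
\end{align*}
This matches the stated coefficient $(3\ln(2i+1)+\tilde\Gamma_\tau)/\sqrt{\ln(2i+1)}$. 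The first round again needs the convention that the loss at $s_i-1$ is treated as in Theorem~\ref{thm:meta-algorithm}, where $\ell_{t-1,i}=0$ for a new expert. The off-by-one between $B_{s_i-1}$ and $B_{s_i}$ is handled by peeling the first round or by monotonicity plus the additive slack.

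Summing the two bounds and bounding $\sqrt{\bar V}\le\sqrt{1+\bar V}$ gives the claimed inequality.
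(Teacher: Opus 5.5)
Your proposal is the paper's proof. It uses the same meta/base split on the continuously active learner $i$. For the meta part it applies Theorem~\ref{thm:meta-algorithm} with $\gamma_i=\ln(2i+1)$, $(r_{t,i}-m_{t,i})^2\le D^2\norm{\nabla f_t(\x_t)-\nabla f_{t-1}(\x_{t-1})}_2^2$ and $N_{\tau+1}\le N_{s_j}=j\le 2i$. Your intermediate claim $N_{\tau+1}\le i+1$ is false in general, for example on $[s_2,s_4-1]\in\tilde{\mathcal{S}}$, but the $j\le 2i$ argument you give right after is the correct one. For the base part it applies Lemma~\ref{thm:two-step-algorithm}.

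The boundary issues you flag, namely the $t=s_i$ gradient-difference term and $B_{s_i-1}$ versus $B_{s_i}$, are also silently glossed over in the paper's proof. Your proposed absorptions do not recover the stated constants:
\begin{itemize}
\item the ``$+1$'' cannot absorb a term as large as $4G^2$ when $G$ is unknown;
\item peeling off round $s_i$ still leaves that term in the later step sizes;
\item monotonicity of $B_t$ goes the wrong way.
\end{itemize}
The consistent reading is that $\bar V_{[s_i,\tau]}$ includes the $t=s_i$ term, as the paper itself writes in the proof of Theorem~\ref{thm:interval-dynamic-regret-formal}, and that the last term is $B_\tau-B_{s_i-1}$.
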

\begin{proof}
    For the algorithmic design in Algorithm~\ref{alg:la-adaptive-oco}, we know that there exists a base learner, the $i$-th base learner, which is active throughout the interval. For this base learner, we can decompose the regret into two terms:
    \begin{align*}
        \sum_{t=s_i}^{\tau} \inner{\nabla f_t(\x_t)}{\x_t - \u_t} = \underbrace{\sum_{t=s_i}^{\tau} \inner{\nabla f_t(\x_t)}{\x_t - \x_{t,i}}}_{\textsc{Meta Reg}} + \underbrace{\sum_{t=s_i}^{\tau} \inner{\nabla f_t(\x_t)}{\x_{t,i} - \u_t}}_{\textsc{Base Reg}}.
    \end{align*}

    For \textsc{Meta Reg}, because in Line~\ref{line:lagair-meta-optimism} of Algorithm~\ref{alg:la-adaptive-oco}, we set the optimism for the $i$-th base learner as $m_{t,i} = \inner{\nabla f_{t-1}(\x_{t-1})}{\x_t - \x_{t,i}}$ and $\gamma_i = \ln (2i + 1)$, by Theorem~\ref{thm:meta-algorithm}, we have:
    \begin{align}
        \textsc{Meta Reg} &\leq \sqrt{\sum_{t=s_{i}}^{\tau} ({r}_{t,i} - m_{t,i})^2 } \cdot \frac{2 \ln (2 i + 1) + \ln N_{\tau+1} + \tilde{\Gamma}_{\tau}}{\sqrt{\ln  (2 i + 1)}} + 3(\tilde{\Gamma}_{\tau} + \ln N_{\tau + 1}) B_{\tau} + B_{\tau} - B_{s_i} \notag \\
        &= \sqrt{\sum_{t=s_{i}}^{\tau} \inner{\nabla f_t(\x_t) - \nabla f_{t-1}(\x_{t-1})}{\x_t - \x_{t,i}}^2 } \cdot \frac{2 \ln (2 i + 1)+ \ln N_{\tau+1} + \tilde{\Gamma}_{\tau}}{\sqrt{\ln  (2 i + 1)}} + 3(\tilde{\Gamma}_{\tau} + \ln N_{\tau + 1}) B_{\tau} + B_{\tau} - B_{s_i} \notag \\
        &\leq D\sqrt{\bar{V}_I} \cdot \frac{2 \ln (2 i + 1)+ \ln N_{\tau+1} + \tilde{\Gamma}_{\tau}}{\sqrt{\ln  (2 i + 1)}} + 3(\tilde{\Gamma}_{\tau} + \ln N_{\tau+1}) B_{\tau} + B_{\tau} - B_{s_i} \label{eq:proof-before-N-tau}
    \end{align}
    where $\tilde{\Gamma}_{\tau}$ is an algorithm-dependent quantity, formally defined in Theorem~\ref{thm:meta-algorithm}, Eq.~\eqref{eq:def-gamma}, and $N_{\tau + 1}$ denotes the number of initialized base learners at time $\tau + 1$.

    We next upper bound $N_{\tau + 1}$. When employing Algorithm~\ref{alg:leo-ada-ml-prod} as the meta algorithm in Algorithm~\ref{alg:la-adaptive-oco}, we initialize a base learner every time a marker is registered. Therefore, there are $i$ base learners in total at time $t = s_i$. Meanwhile, by the construction of the problem-dependent schedule $\tilde{\mathcal{S}}$ defined in~\eqref{eq:problem-dependent-schedule}, the subscripts of markers satisfy $j \leq 2i$ for $[s_i, s_j - 1] \in \tilde{\mathcal{S}}$. Therefore, we can estimate the number of initialized base learners up to time $s_j$,
    \begin{align*}
       N_{\tau+1} \leq  N_{s_j} \leq 2i + 1.
    \end{align*}
    Plugging the preceding inequality into Eq.~\eqref{eq:proof-before-N-tau} finishes the analysis of \textsc{Meta Reg}.

    As for \textsc{Base Reg}, in Line~\ref{line:lagair-base-optimism} of Algorithm~\ref{alg:la-adaptive-oco}, we set the optimism for the $i$-th base learner as $M_t = \nabla f_{t-1}(\x_{t-1})$, then by Lemma~\ref{thm:two-step-algorithm}, we can obtain
    \begin{align*}
        \textsc{Base Reg} \leq \left(P_I + \frac{5D}{2}\right)\sqrt{1 + \bar{V}_I}.
    \end{align*}

    Combining the results of \textsc{Meta Reg} and \textsc{Base Reg}, we conclude the proof.
\end{proof}

\begin{myLemma}
    \label{lemma:counting-number-barv}
     Under the same assumptions and algorithmic configurations as in Theorem~\ref{thm:la-adaptive-oco-with-L}, for any interval $I = [s_i, s_{i+1} - 1]$ defined by consecutive markers, the following lower bound holds:
    \begin{align}
        \label{eq:counting-number-barv}
        \min_{\u_{s_i}, \dots, \u_{s_{i+1} - 1}} \left\{ 2\sum_{t=s_i}^{s_{i+1} - 1} f_t(\u_t) + 28L P_{[s_i, s_{i+1} - 1]}^2 + P_{[s_i, s_{i+1} - 1]} \right\} \geq \frac{1}{2}\mathcal{T}_{\text{LA}, s_{i+1}},
    \end{align}
    where $P_{[r, s]} = \sum_{t=r+1}^s \norm{\u_t - \u_{t-1}}_2$ denotes the path length of the comparator sequence over the interval $[r, s]$, with $\{\u_{t}\}_{t=s_i}^{s_{i+1}-1}$ corresponding to the minimizers in the minimum operator in Eq.~\eqref{eq:counting-number-barv}, and $\mathcal{T}_{\text{LA}, s_{i+1}} = \mathcal{G}_{\text{LA}}(s_{i+1} - 1, s_i, i)$ is the threshold value when registering the marker $s_{i+1}$ with the threshold generation function defined in~\eqref{eq:la-threshold-function}.

    More specifically, for any stopping time $\tau \in [s_i, s_{i+1} - 1]$, we have an instantaneous upper bound on $i$:
    \begin{align*}
         i &\leq 1 + \frac{2}{\mathcal{T}_{\text{LA}, s_{1}}} \min_{\u_1, \dots, \u_\tau \in \X} \left(2\sum_{t=1}^{\tau} f_t(\u_t) +28L  P_{[1, \tau]}^2 + P_{[1, \tau]} \right)\\
         &\leq 1 + \frac{4}{\mathcal{T}_{\text{LA}, s_{1}}} \min_{\u \in \X} \sum_{t=1}^{\tau} f_t(\u).
    \end{align*}
\end{myLemma}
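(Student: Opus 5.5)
The plan is to sandwich the cumulative loss of the played decisions on $I=[s_i,s_{i+1}-1]$. A marker is registered at $s_{i+1}$, so the thresholding rule (Line~\ref{line:lagair-prob-dependent}) gives a lower bound,
\begin{align*}
\sum_{t=s_i}^{s_{i+1}-1} f_t(\x_t) \ge \mathcal{T}_{\text{LA}, s_{i+1}} .
\end{align*}
Since $I$ is contained in a single interval of $\tilde{\mathcal{S}}$ started at $s_i$, the $i$-th base learner is active throughout it. Lemma~\ref{lemma:interval-dynamic-interval-in-Ct} therefore applies with $\tau = s_{i+1}-1$ and any comparators $\u_t$, and it gives an upper bound on the same loss.

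Concretely, I will start from
\begin{align*}
\sum_{t\in I} f_t(\x_t) - f_t(\u_t) \le \sum_{t\in I} \inner{\nabla f_t(\x_t)}{\x_t-\u_t}.
\end{align*}
I then bound the right-hand side by Lemma~\ref{lemma:interval-dynamic-interval-in-Ct}, which gives the form $\sqrt{1+\bar V_I}\,(P_I + D c_i) + 3(\tilde\Gamma+\ln(2i+1))B + B - B_{s_i}$, where $c_i = \frac{3\ln(2i+1)+\tilde\Gamma}{\sqrt{\ln(2i+1)}}+\frac52$. To control $\bar V_I$ I use the small-loss route from the proof of Theorem~\ref{thm:adaptive-oco}: $\bar V_I \le 4\sum_{t\in I}\norm{\nabla f_t(\x_t)}^2 \le 8L\sum_{t\in I} f_t(\x_t)$ by self-bounding. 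This gives
\begin{align*}
X \le F^{\u} + (P_I + D c_i)\sqrt{1 + 8L X} + C_i, \qquad X=\sum_{t\in I} f_t(\x_t),
\end{align*}
where $C_i$ collects the $B$-terms. Applying $\sqrt{1+8LX}\le 1+\sqrt{8LX}$ and AM-GM in the form $(P+Dc)\sqrt{8LX}\le X/2 + 4L(P+Dc)^2$, I rearrange to
\begin{align*}
X \le 2F^{\u} + 8L(P_I+Dc_i)^2 + 2(P_I+Dc_i) + 2C_i.
\end{align*}
Next I split $(P+Dc)^2\le 2P^2+2D^2c^2$, which produces $16LP^2 \le 28LP^2$ and $16LD^2c_i^2$. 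These comparator-independent pieces should match the terms of $\mathcal{G}_{\text{LA}}$: the $56LD^2(\cdot)^2$ term, $2D(\cdot)$, $5D$, and $3(3(\tilde\Gamma+\ln(2i+1))B+B-B_{s_i})$. The threshold was evidently designed with these constants, possibly with factor $1/2$ slack. Combining with the lower bound $X\ge \mathcal{T}$, all comparator-independent terms are at most $\mathcal{T}/2$. Rearranging then gives $2F^{\u}+28LP^2+P \ge \mathcal{T}/2$ for every comparator sequence, hence for the minimizer.

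The step I expect to be the main obstacle is the constant bookkeeping, because the $\tilde\Gamma_t$ and $B_t$ inside the threshold are evaluated at time $s_{i+1}-1$, which is the same $\tau$ used in Lemma~\ref{lemma:interval-dynamic-interval-in-Ct}. Monotonicity of $\tilde\Gamma$ and $B$ in time makes them compatible. The last-round overshoot $f\le GD$ is not needed here, since we only use the lower bound on $X$.

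For the counting bound, I sum the inequality over $k=1,\dots,i-1$, using that thresholds are nondecreasing in $k$, so each is at least $\mathcal{T}_{\text{LA},s_1}$. Splitting a global comparator sequence on $[1,\tau]$ into blocks gives $\sum_k F^{\u}_{I_k}\le F^{\u}_{[1,\tau]}$, $\sum_k P_{I_k}\le P_{[1,\tau]}$ and $\sum_k P_{I_k}^2\le P_{[1,\tau]}^2$. This yields $(i-1)\mathcal{T}_{\text{LA},s_1}/2 \le 2F^{\u}_{[1,\tau]}+28LP^2+P$. Finally, I specialize to a constant comparator, so that $P=0$, which gives the last inequality.
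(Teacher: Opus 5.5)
The structure is the same as the paper's: the threshold lower bound on $X$, Lemma~\ref{lemma:interval-dynamic-interval-in-Ct} at $\tau=s_{i+1}-1$, $\bar V_I\le 8LX$ by self-bounding, a rearrangement, then summing over blocks and specializing to a constant comparator. The counting part is fine. The gap is in the step you deferred as bookkeeping. The threshold has no slack, and your absorption $(P_I+Dc_i)\sqrt{8LX}\le X/2+4L(P_I+Dc_i)^2$ is too lossy to fit it.

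Write $\alpha=Dc_i$ and $\beta=C_i$. Then $\frac12\mathcal{T}_{\text{LA},s_{i+1}}=28L\alpha^2+\alpha+\frac32\beta$ exactly, whereas your route yields
\[
X\le 2F^{\u}+16LP_I^2+2P_I+16L\alpha^2+2\alpha+2\beta .
\]
The comparator-free part does not fit under $\frac12\mathcal{T}_{\text{LA},s_{i+1}}$:
\begin{itemize}
\item The linear terms $2\alpha$ and $2\beta$ exceed $\alpha$ and $\frac32\beta$.
\item The quadratic surplus $12L\alpha^2$ need not cover $\alpha+\frac12\beta$. It fails whenever $L\alpha$ is small, and $\beta$ scales with $B\ge 2G_0D$ independently of $L$.
\item You get $2P_I$ instead of $P_I$, and $2P_I\le P_I+12LP_I^2$ fails for small $P_I>0$.
\end{itemize}
Retuning the split to $\epsilon X+\frac{2L}{\epsilon}(P_I+\alpha)^2$ does not help. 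Dividing by $1-\epsilon$ always pushes the coefficient of $P_I+\alpha$ strictly above $1$. What your argument actually proves is $2F^{\u}+16LP_I^2+2P_I\ge 40L\alpha^2+\beta$. That is valid but is not the claimed bound $\ge\frac12\mathcal{T}_{\text{LA},s_{i+1}}$, because the whole linear part $2\alpha$ of the threshold is used up.

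The missing idea is to trade $X$ under the square root for the comparator loss \emph{before} absorbing anything. Do this by solving the quadratic inequality, as in Lemma~\ref{lemma:substitute-F_T}: from $x-y\le\sqrt{ax+b}+c$ one gets $x-y\le\sqrt{ay+b}+\frac32(a+c)$.
\begin{itemize}
\item Take $a=8L(\alpha+P_I)^2$, $b=(\alpha+P_I)^2$ and $c=\beta$. This gives $X-F^{\u}\le(\alpha+P_I)\sqrt{1+8LF^{\u}}+12L(\alpha+P_I)^2+\frac32\beta$.
\item Use $\sqrt{1+8LF^{\u}}\le 1+\sqrt{8LF^{\u}}$ and $(\alpha+P_I)\sqrt{8LF^{\u}}\le F^{\u}+2L(\alpha+P_I)^2$.
\item Split $(\alpha+P_I)^2\le 2\alpha^2+2P_I^2$. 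This yields $X\le 2F^{\u}+28LP_I^2+P_I+28L\alpha^2+\alpha+\frac32\beta$.
\end{itemize}
The comparator-free part here is exactly $\frac12\mathcal{T}_{\text{LA},s_{i+1}}$. Solving the quadratic keeps coefficient $1$ on $\sqrt b=\alpha+P_I$ and $\frac32$ on $\beta$, which is exactly how $\mathcal{G}_{\text{LA}}$ is calibrated.
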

\begin{proof}
    Lemma~\ref{lemma:interval-dynamic-interval-in-Ct} presents an anytime result. Therefore, we can set $\tau = s_{i+1} -1$, and begin by applying this lemma:
    \begin{align}
        &\sum_{t=s_i}^{s_{i+1}-1} f_t(\x_t) - f_t(\u_t) \notag \\
        &\leq \sum_{t=s_i}^{s_{i+1}-1} \inner{\nabla f_t(\x_t)}{\x_t - \u_t} \notag \\
        &\leq \sqrt{1 + \bar{V}_I} \cdot \left(P_I + D\cdot \frac{3 \ln (2 i + 1) + \tilde{\Gamma}_{s_{i+1} - 1}}{\sqrt{\ln  (2 i + 1)}} + \frac{5D}{2}\right) + 3(\tilde{\Gamma}_{s_{i+1} - 1} + \ln (2i+1)) B_{s_{i+1} - 1} + B_{s_{i+1} - 1} - B_{s_i} \notag \\
        &\leq \sqrt{1 + 4\sum_{t=s_i}^{s_{i+1} - 1} \norm{\nabla f_t(\x_t)}_2^2} \cdot \left(P_I + D\cdot \frac{3 \ln (2 i + 1) + \tilde{\Gamma}_{s_{i+1} - 1}}{\sqrt{\ln  (2 i + 1)}} + \frac{5D}{2}\right) \notag\\
        &\quad + 3(\tilde{\Gamma}_{s_{i+1} - 1} + \ln (2i+1)) B_{s_{i+1} - 1} + B_{s_{i+1} - 1} - B_{s_i} \notag \\
        &\leq \sqrt{1 + 8L\sum_{t=s_i}^{s_{i+1} - 1} f_t(\x_t)} \cdot \left(P_I + D\cdot \frac{3 \ln (2 i + 1) + \tilde{\Gamma}_{s_{i+1} - 1}}{\sqrt{\ln  (2 i + 1)}} + \frac{5D}{2}\right) \notag\\
        &\quad  + 3(\tilde{\Gamma}_{s_{i+1} - 1} + \ln (2i+1)) B_{s_{i+1} - 1} + B_{s_{i+1} - 1} - B_{s_i}, \label{eq:proof-lemma-counting-number-barv-convert}
    \end{align}
    where the third inequality follows from the fact:
    $$\bar{V}_I = \sum_{t=s_i + 1}^{s_{i+1} - 1} \norm{\nabla f_t(\x_t) - \nabla f_{t-1}(\x_{t-1})}_2^2 \leq \sum_{t=s_i + 1}^{s_{i+1} - 1} 2\norm{\nabla f_t(\x_t)}_2^2 + 2\norm{\nabla f_{t-1}(\x_{t-1})}_2^2 \leq 4\sum_{t=s_i}^{s_{i+1} - 1} \norm{\nabla f_t(\x_t)}_2^2,$$
    and the last inequality is by Lemma~\ref{lemma:self-bounded}, the self-bounding property of smooth and non-negative functions.

    Our next step is to employ Lemma~\ref{lemma:substitute-F_T} to substitute the terms on the right-hand side involving $f_t(\x_t)$ by $f_t(\u_t)$. To simplify the notation, we introduce
    \begin{align*}
        \alpha_{I} = D\cdot \frac{3 \ln (2 i + 1) + \tilde{\Gamma}_{s_{i+1} - 1}}{\sqrt{\ln  (2 i + 1)}} + \frac{5D}{2};\quad \beta_I = 3(\tilde{\Gamma}_{s_{i+1} - 1} + \ln (2i+1)) B_{s_{i+1} - 1} + B_{s_{i+1} - 1} - B_{s_i}, \quad \text{for } I = [s_i, s_{i+1} - 1],
    \end{align*}
    then Eq.~\eqref{eq:proof-lemma-counting-number-barv-convert} becomes:
    \begin{align*}
        \sum_{t=s_i}^{s_{i+1}-1} f_t(\x_t) - f_t(\u_t) &\leq (\alpha_{I}+P_I) \cdot \sqrt{1 + 8L\sum_{t=s_i}^{s_{i+1} - 1} f_t(\x_t)} + \beta_I
    \end{align*}

    By using Lemma~\ref{lemma:substitute-F_T}, the above inequality implies:
    \begin{align*}
        \sum_{t=s_i}^{s_{i+1}-1} f_t(\x_t) - \sum_{t=s_i}^{s_{i+1}-1} f_t(\u_t) &\leq (\alpha_I + P_I) \cdot \sqrt{1 + 8L\sum_{t=s_i}^{s_{i+1} - 1} f_t(\u_t)} + 12L(\alpha_I + P_I)^2+ \frac{3}{2}\beta_I\\
        &\leq \sum_{t=s_i}^{s_{i+1} - 1} f_t(\u_t) + 14L(\alpha_I + P_I)^2 + \alpha_I + P_I + \frac{3}{2}\beta_I,
    \end{align*}
    where the last inequality is by the AM-GM inequality. Rearranging and splitting the unobservable terms involving $\{\u_t\}_{t=s_i}^{s_{i+1}-1}$ by using $(a+b)^2 \leq 2a^2 + 2b^2$:
    \begin{align*}
        \sum_{t=s_i}^{s_{i+1}-1} f_t(\x_t) &\leq 2\sum_{t=s_i}^{s_{i+1} - 1} f_t(\u_t) + 28LP_I^2 + P_I + \underbrace{28L\alpha_I^2 + \alpha_I + \frac{3}{2}\beta_I}_{=\frac{1}{2}\mathcal{T}_{\text{LA}, s_{i+1}}}.
    \end{align*}
    By the algorithmic design, the marker $s_{i+1}$ is registered because the cumulative loss up to time $s_{i+1}-1$ exceeded the threshold: $\sum_{t=s_i}^{s_{i+1}-1} f_t(\x_t) \geq \mathcal{T}_{\text{LA}, s_{i+1}}$. With this condition, we can provide a lower bound of the unobservable quantities:
    \begin{align*}
         2\sum_{t=s_i}^{s_{i+1} - 1} f_t(\u_t) + 28LP_I^2 + P_I \geq \sum_{t=s_i}^{s_{i+1}-1} f_t(\x_t) - \frac{1}{2}\mathcal{T}_{\text{LA}, s_{i+1}} \geq \frac{1}{2}\mathcal{T}_{\text{LA}, s_{i+1}}.
    \end{align*}
    Notice that the above inequality holds for any comparator sequence $\u_{s_i}, \dots, \u_{s_{i+1}-1}$, therefore, it also holds for the minimum value of the left-hand side over all comparator sequences, which concludes the first part of the proof.

    Next, by summing the above inequality from $[s_1, s_2 - 1]$ to $[s_{i-1}, s_{i} - 1]$, we can estimate an upper bound on $i$:
    \begin{align*}
        \sum_{j=1}^{i-1} \left(2\sum_{t=s_j}^{s_{j+1} - 1} f_t(\u_t) + 28LP_{[s_j, s_{j+1}-1]}^2 + P_{[s_j, s_{j+1}-1]}\right) &\geq \sum_{j=1}^{i-1} \frac{1}{2}\mathcal{T}_{\text{LA}, s_{j+1}} \geq \frac{i-1}{2} \cdot \mathcal{T}_{\text{LA}, s_{1}}.
    \end{align*}
    Rearranging the above inequality:
    \begin{align*}
       i &\leq 1+ \frac{2}{\mathcal{T}_{\text{LA}, s_{1}}} \left(2\sum_{t=1}^{s_{i}-1} f_t(\u_t) + P_{[1, s_{i}-1]} +28L \left(\sum_{j \in [i-1]} P_{[s_j, s_{j+1}-1]}\right)^2 \right)\\
       &= 1 + \frac{2}{\mathcal{T}_{\text{LA}, s_{1}}} \left(2\sum_{t=1}^{s_{i}-1} f_t(\u_t) + P_{[1, s_{i}-1]} +28L  P_{[1, s_{i}-1]}^2\right)\\
       &\leq 1 + \frac{2}{\mathcal{T}_{\text{LA}, s_{1}}} \left(2\sum_{t=1}^{\tau} f_t(\u_t) + P_{[1, \tau]} +28L  P_{[1, \tau]}^2\right) \tag*{($\forall \tau \in [s_i, s_{i+1} - 1]$)}
    \end{align*}
    Notice that the above inequality holds for any comparators; therefore, by choosing $\u_1 = \dots = \u_{\tau} = \argmin_{\u \in \X} \sum_{t=1}^{\tau} f_t(\u)$, we have:
    \begin{align*}
        i &\leq 1 + \frac{4}{\mathcal{T}_{\text{LA}, s_{1}}} \min_{\u \in \X} \sum_{t=1}^{\tau} f_t(\u),
    \end{align*}
    which concludes the proof.
\end{proof}

\begin{myLemma}
    \label{lemma:interval-dynamic-regret-untuned}
    Under the same assumptions and algorithmic configurations as in Theorem~\ref{thm:la-adaptive-oco-with-L}, for any arbitrary interval $[r, s] \subseteq [1, T]$, there exists a time $\tau \in [r,s]$ such that Algorithm~\ref{alg:la-adaptive-oco} guarantees the following interval dynamic regret:
    \begin{align*}
        \sum_{t=r}^s f_t(\x_t) -  \sum_{t=r}^s f_t(\u_t)
        &\leq \left(P_{[\tau, s]} + D\cdot \left(3\sqrt{\ln \left(3 + \frac{4}{\mathcal{T}_{\text{LA}, s_{1}}}F_{[1, s]}\right)} + \tilde{\Gamma}_{s}\right)+ \frac{5D}{2}\right) \sqrt{\bar{V}_{[\tau, s]}} \cdot \sqrt{1 + \log_2\left(1 + \frac{4}{\mathcal{T}_{\text{LA}, s_{1}}}F_{[r, s]}\right)}\notag\\
        &\quad + \left(P_{[\tau, s]} + D\cdot \left(3\sqrt{\ln \left(3 + \frac{4}{\mathcal{T}_{\text{LA}, s_{1}}}F_{[1, s]} \right)} + \tilde{\Gamma}_{s}\right)+ \frac{5D}{2}  \right) \cdot \log_2\left(1 + \frac{4}{\mathcal{T}_{\text{LA}, s_{1}}}F_{[r, s]}\right)\\
        &\quad + 3 \log_2\left(1 + \frac{4}{\mathcal{T}_{\text{LA}, s_{1}}}F_{[r, s]}\right) \ln \left(3 + \frac{4}{\mathcal{T}_{\text{LA}, s_{1}}}F_{[1, s]}\right) B_{s} + GD + 2B_{s} - B_{r} - \sum_{t\in[\tau, s]}\D_{f_t} (\u_t, \x_t)\\
        &\quad + \mathcal{T}_{\text{LA}, \text{marker before $r$}}\\
        &= \O\Bigg((P_{[r, s]} + \sqrt{\log F_{[1,s]}} + {\tilde{\Gamma}_{s}}) \sqrt{\bar{V}_{[r,s]} \cdot \log F_{[r,s]}} \\
        &\quad +  (P_{[r, s]} + \sqrt{\log F_{[1,s]}}+ {\tilde{\Gamma}_{s}})\log F_{[r,s]} + \log F_{[1,s]}\log F_{[r,s]}B_s + \tilde{\Gamma}_{s}^2\Bigg).
    \end{align*}
    where $\bar{V}_{[a,b]}$ is the empirical gradient variation on the interval $[a,b]$, defined in Eq.~\eqref{eq:empirical-gradient-variation}, $F_{[a,b]} = \min_{\x \in \X} \sum_{t=a}^b f_t(\x)$ denotes the small loss on the interval $[a,b]$, $\tilde{{\Gamma}}_{[a,b]}$ is defined in~\eqref{eq:def-gamma}, $\mathcal{T}_{\text{LA}, s_1} = \mathcal{G}_{\text{LA}}(0, s_0, 0)$ is the threshold value when registering the first marker $s_1$ with the threshold generation function defined in~\eqref{eq:la-threshold-function}.
    We use $\mathcal{T}_{\text{LA}, \text{marker before $r$}}$ to denote the threshold value when registering the marker just before time $r$. This value can be upper bounded as:
    \begin{align*}
         \mathcal{T}_{\text{LA}, \text{marker before $r$}} &\leq  56LD^2\left(3\sqrt{\ln \left(3 + \frac{4}{\mathcal{T}_{\text{LA}, s_{1}}}F_{[1, s]}\right)} +\tilde{\Gamma}_{s} +\frac{5}{2}\right)^2 + 2D\left(3\sqrt{\ln \left(3 + \frac{4}{\mathcal{T}_{\text{LA}, s_{1}}}F_{[1, s]}\right)} +\tilde{\Gamma}_{s}\right)\\
        &\quad + 5D+ 3\left(3\left(\tilde{\Gamma}_{s} + \ln \left(3 + \frac{4}{\mathcal{T}_{\text{LA}, s_{1}}}F_{[1, s]}\right)\right)B_{s} + B_s - B_{0}\right).
    \end{align*}
\end{myLemma}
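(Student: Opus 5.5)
The proof follows the three-part decomposition used in the worst-case part of the proof of Theorem~\ref{thm:adaptive-oco}, but it keeps the comparators time-varying and keeps the exact identity \eqref{eq:linearization-bregman} instead of dropping the Bregman term. Let $s_p$ be the smallest marker with $s_p > r$, and set $\tau = s_p$; if no such marker exists in $[r,s]$, take $\tau = r$ and use a single schedule interval. Split $[r,s]$ into $[r, s_p - 1]$ and $[\tau, s]$.

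\textbf{First part.} On $[r, s_p-1]$ I use only non-negativity of $f_t$ and the thresholding rule, exactly as in \eqref{eq:worst-case-t-gd}:
\[
\sum_{t=r}^{s_p-1}\bigl(f_t(\x_t)-f_t(\u_t)\bigr) \le \mathcal{T}_{\text{LA},\text{marker before } r} + GD .
\]
The threshold $\mathcal{T}_{\text{LA},\text{marker before } r}$ is evaluated from \eqref{eq:la-threshold-function} at an index $i$. I bound that index by the second part of Lemma~\ref{lemma:counting-number-barv}, which gives $2i+1 \le 3 + \frac{4}{\mathcal{T}_{\text{LA}, s_1}}F_{[1,s]}$ up to constants. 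I also use that $\tilde\Gamma_t$ and $B_t$ are non-decreasing in $t$, so both can be replaced by their values at $s$. This yields the stated upper bound on $\mathcal{T}_{\text{LA},\text{marker before } r}$.

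\textbf{Second part.} Cover $[\tau,s]$ by consecutive intervals $I_1,\dots,I_v\in\tilde{\mathcal S}$, as in the proof of Theorem~\ref{thm:adaptive-oco} via Lemma~\ref{lemma:pcgc-number}; the last interval is truncated at $s$.

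\emph{Per-interval bound.} On each $I_k$ I write $f_t(\x_t)-f_t(\u_t) = \inner{\nabla f_t(\x_t)}{\x_t-\u_t} - \D_{f_t}(\u_t,\x_t)$. I apply Lemma~\ref{lemma:interval-dynamic-interval-in-Ct} to the linear part, which holds anytime, so the truncated last piece is covered. In that bound I simplify the coefficient using $\gamma=\ln(2i+1)\ge 1$:
\[
\frac{3\gamma+\tilde\Gamma}{\sqrt{\gamma}} \le 3\sqrt{\gamma}+\tilde\Gamma .
\]
I then bound $\gamma$ by $\ln\bigl(3+\frac{4}{\mathcal{T}_{\text{LA},s_1}}F_{[1,s]}\bigr)$ using Lemma~\ref{lemma:counting-number-barv}.

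\emph{Summing over $k$.} I use $\sqrt{1+\bar V_{I_k}}\le 1+\sqrt{\bar V_{I_k}}$, and then Cauchy--Schwarz on $\sum_k \sqrt{\bar V_{I_k}} \le \sqrt{v\,\bar V_{[\tau,s]}}$. This produces the $\sqrt{\bar V_{[\tau,s]}}\sqrt{v}$ term, and the leftover $1$'s give the additive $v\cdot(\ldots)$ term. The path lengths satisfy $\sum_k P_{I_k}\le P_{[\tau,s]}$. For the $B$ terms, the differences $B_{\text{end}}-B_{\text{start}}$ telescope to at most $B_s - B_r$ (plus one extra $B_s$ from a boundary term), while the terms $3(\tilde\Gamma+\ln(2i+1))B$ contribute $v$ copies of $3\ln(\cdots)B_s$; the $\tilde\Gamma$ piece is absorbed into the stated constants. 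The negative terms $-\D_{f_t}(\u_t,\x_t)$ summed over $I_1,\dots,I_v$ give exactly $-\sum_{t\in[\tau,s]}\D_{f_t}(\u_t,\x_t)$.

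\textbf{Main obstacle: bounding $v$.} I need $v \le 1+\log_2\bigl(1+\frac{4}{\mathcal{T}_{\text{LA},s_1}}F_{[r,s]}\bigr)$, which depends on the small loss of $[r,s]$ only. Lemma~\ref{lemma:pcgc-number} gives $v \lesssim \log_2(q-p+2)$, where $q-p$ is the number of markers inside $[r,s]$. To count these markers, I apply the first part of Lemma~\ref{lemma:counting-number-barv} to each complete marker interval inside $[r,s]$, with a constant comparator $\u^\star = \argmin_{\x\in\X}\sum_{t=r}^s f_t(\x)$, so every path length is zero. Each complete interval then satisfies $2\sum_{t} f_t(\u^\star) \ge \frac12\mathcal{T}_{\text{LA}}$, and every threshold is at least $\mathcal{T}_{\text{LA},s_1}$. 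Summing over the complete intervals gives $q-p \le \frac{4}{\mathcal{T}_{\text{LA},s_1}}F_{[r,s]}$. The partial interval at the right end accounts for the $+1$. Care is needed with the off-by-one between $\lceil\log_2(q-p+2)\rceil$ and the stated form.

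Plugging this bound on $v$ and the first-part bound into the sum gives the explicit inequality. The $\O(\cdot)$ form then follows by replacing $P_{[\tau,s]}$ with $P_{[r,s]}$, $\bar V_{[\tau,s]}$ with $\bar V_{[r,s]}$, and discarding the Bregman term in that coarse statement.
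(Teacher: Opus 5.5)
Your route is the paper's own: the same $\tau=s_p$, the threshold-plus-$GD$ bound on the head $[r,s_p-1]$ via non-negativity, the cover of $[s_p,s]$ from Lemma~\ref{lemma:pcgc-number} with Lemma~\ref{lemma:interval-dynamic-interval-in-Ct} and Cauchy--Schwarz on the pieces, and Lemma~\ref{lemma:counting-number-barv} to control both $v$ and the largest learner index.

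Your count of $v$ is in fact tidier than the paper's. The paper sums Lemma~\ref{lemma:counting-number-barv} over $k=p,\dots,q$, which includes $[s_q,s_{q+1}-1]$, an interval that can run past $s$. You use only the $q-p$ complete marker intervals inside $[s_p,s_q-1]$, which gives $q-p\le \frac{4}{\mathcal{T}_{\text{LA},s_1}}F_{[r,s]}$. The off-by-one then closes by integrality, not through the partial interval. If $2^{k-1}<n+2\le 2^k$ for an integer $n\ge 0$, then $n+1\ge 2^{k-1}$, so $\lceil\log_2(n+2)\rceil\le 1+\log_2(n+1)$. Hence $v\le 1+\log_2\bigl(1+\frac{4}{\mathcal{T}_{\text{LA},s_1}}F_{[r,s]}\bigr)$ exactly.

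Two points need correcting.

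\textbf{The fallback when no marker lies in $(r,s]$.} Taking $\tau=r$ and using a single schedule interval works only if $r$ is itself a marker. Lemma~\ref{lemma:interval-dynamic-interval-in-Ct} controls regret only from a base learner's initialization: both Lemma~\ref{thm:two-step-algorithm} and the proof of Theorem~\ref{thm:meta-algorithm} (via $w_{r,i_\star}=1$) start there. The schedule interval containing $[r,s]$ begins at $s_{p-1}<r$. Discarding the prefix $\sum_{t=s_{p-1}}^{r-1}\inner{\nabla f_t(\x_t)}{\x_t-\u_t}$ would need a lower bound on it, and none is available. Take $\tau=s$ instead. Bound $\sum_{t=r}^{s-1}(f_t(\x_t)-f_t(\u_t))$ by the threshold exactly as for the head, and use $f_s(\x_s)-f_s(\u_s)=\inner{\nabla f_s(\x_s)}{\x_s-\u_s}-\D_{f_s}(\u_s,\x_s)\le GD-\D_{f_s}(\u_s,\x_s)$. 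The paper's proof simply assumes $s_p\le s$.

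\textbf{The claimed absorption of the $\tilde\Gamma_s B_s$ terms.} The $v$ copies of $3\tilde\Gamma_s B_s$ produced by Lemma~\ref{lemma:interval-dynamic-interval-in-Ct} are not absorbed by anything in the explicit inequality. Nor is the additive term, which carries $v\le 1+\log_2(\cdot)$ copies of the coefficient rather than $\log_2(\cdot)$. The paper's own intermediate bound, before $v$ and $i_v$ are computed, contains exactly these terms, and the statement then drops them. So this is a defect of the stated inequality, not of your route. Still, what is actually proved is the stated bound plus $\O\bigl((1+\log_2(1+\frac{4}{\mathcal{T}_{\text{LA},s_1}}F_{[r,s]}))\,\tilde\Gamma_s B_s\bigr)$ and that extra copy of the coefficient. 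These are harmless only in the $\O(\cdot)$ form under the convention that $\tilde\Gamma_s$ is a constant, so say that instead of claiming they are absorbed.
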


\begin{proof}
    For any given interval $I = [r, s]$, we define two key markers: $s_p$ as the smallest marker such that $s_p > r$, and $s_q$ as the largest marker such that $s_q \leq s$. This implies that $s_{p-1} \leq r < s_p$ and $s_q \leq s < s_{q+1}$.

    By applying Lemma~\ref{lemma:pcgc-number}, we can identify a sequence of $v$ consecutive intervals from the problem-dependent schedule $\tilde{\mathcal{S}}$ (defined in~\eqref{eq:problem-dependent-schedule}): \begin{align*} I_1=[s_{i_1}, s_{i_2} - 1], I_2 = [s_{i_2}, s_{i_3} - 1], \ldots, I_v = [s_{i_v}, s_{i_{v+1}} - 1], \end{align*} where these intervals are chosen such that $i_1 = p$ and $i_v \leq q < i_{v+1}$. The number of such intervals, $v$, is bounded by: \begin{align} \label{eq:proof-number-of-intervals-v} v \leq \lceil \log_2(q - p + 2) \rceil. \end{align} Since $s < s_{q+1}$ and, by construction, $s_{q+1} \leq s_{i_{v+1}}$, it follows that $s \leq s_{i_{v+1}} - 1$. This establishes a cover for the interval $I$, given by $I \subseteq [s_{p-1}, s_p-1] \cup \bigcup_{j=1}^v I_j$.

    With the aforementioned intervals, specifically, we can decompose the given interval $[r, s]$ into two parts: $[r, s_{p} - 1]$ and $[s_{i_1}, s]$. For the latter part, Lemma~\ref{lemma:interval-dynamic-interval-in-Ct} provides the corresponding guarantee. For the former part, we will deal with it using the non-negative assumption of the loss functions~(Assumption~\ref{ass:smoothness}) as follows:
    \begin{align}
        \label{eq:proof-use-threshold-to-bound}
        \sum_{t=r}^{s_{p} - 1}f_t(\x_t) -  \sum_{t=r}^{s_{p} - 1}f_t(\u_t) \leq \sum_{t=r}^{s_{p} - 1}f_t(\x_t) \leq \sum_{t=s_{p-1}}^{s_{p} - 1}f_t(\x_t) =  \sum_{t=s_{p-1}}^{s_{p} - 2}f_t(\x_t) + f_{s_{p} - 1}(\x_{s_{p} - 1}) \leq \mathcal{T}_{\text{LA}, s_{p}} + GD,
    \end{align}
    where the last inequality follows from the fact that the cumulative loss at time $s_p - 2$ does not exceed the threshold, together with the boundedness assumption on the loss function values. For the remaining interval $[s_{i_1}, s]$, we will employ Lemma~\ref{lemma:interval-dynamic-interval-in-Ct} for the analysis. For simplicity, we slightly abuse notation by writing $I_v = [s_{i_v}, s_{i_{v+1}} - 1] \cap [r,s]$ :
    \begin{align}
        &\sum_{t=s_{i_1}}^{s}f_t(\x_t) - f_t(\u_t) \notag \\
        &= \sum_{k=1}^v \sum_{t \in I_k } f_t(\x_t)- f_t(\u_t)\notag\\
        &=\sum_{k=1}^v \sum_{t \in I_k } \inner{\nabla f_t(\x_t)}{\x_t - \u_t} - \D_{f_t} (\u_t, \x_t)\notag \\
        &\leq\sum_{k=1}^v \sqrt{1 + \bar{V}_{I_k }} \cdot \left(P_{I_k } + D\cdot \frac{3 \ln (2 i_k + 1) + \tilde{\Gamma}_{ s_{i_{k+1}} - 1}}{\sqrt{\ln  (2 i_k + 1)}} + \frac{5D}{2}\right) + 3(\tilde{\Gamma}_{ s_{i_{k+1}} - 1} + \ln (2i_k + 1)) B_{\min\{s_{i_{k+1}} - 1, s\}} \notag\\
        &\quad + B_{\min\{s_{i_{k+1}} - 1, s\}} - B_{s_{i_k}} - \sum_{t\in[s_{i_1}, s]}\D_{f_t} (\u_t, \x_t)\notag\\
        &\leq \left(P_{[s_{i_1}, s]} + D\cdot \left(3\sqrt{\ln (2i_{v} + 1)} + \tilde{\Gamma}_{s}\right)+ \frac{5D}{2}\right) \sqrt{v  + \bar{V}_{[s_{i_1}, s]}} \cdot \sqrt{v}\notag\\
        &\quad + 3\cdot v\cdot (\tilde{\Gamma}_{s}+\ln(1+2i_v))B_{s} + B_{s} - B_{s_{i_1}} - \sum_{t\in[s_{i_1}, s]}\D_{f_t} (\u_t, \x_t), \label{eq:proof-before-cal-v-i-v}
    \end{align}
    where in the last line we apply the Cauchy-Schwarz inequality. Next, we aim to provide a more explicit upper bound for the number of intervals $v$ and the number of initialized base learners $i_v$. By Lemma~\ref{lemma:counting-number-barv}, from $s_p$ to $s_q$:
    \begin{align*}
        \min_{\u_{s_p}^\prime, \dots, \u_{s_q - 1}^\prime }\sum_{k=p}^{q}   \sum_{t \in [s_k, s_{k+1} - 1]} 2f_t(\u^\prime_t) + 28L P_{[s_k, s_{k+1}-1]}^2 + P_{[s_k, s_{k+1}-1]} &\geq \sum_{k=p}^{q} \frac{1}{2}\mathcal{T}_{\text{LA}, s_{k+1}} \geq \frac{q-p+1}{2} \cdot \mathcal{T}_{\text{LA}, s_{1}},
    \end{align*}
    which implies that
    \begin{align*}
        v\leq 1 + \log_2 (q - p + 2) \leq 1 +\log_2\left(  1 +  \min_{\u_{r}, \dots, \u_{s} } \frac{2}{\mathcal{T}_{\text{LA}, s_{1}}}\sum_{t=r}^{s} 2f_t(\u_t) + P_{[r, s]} + 28L P_{[r, s]}^2 \right).
    \end{align*}
    Again, by Lemma~\ref{lemma:counting-number-barv}, and by following the same arguments while summing from $s_1$ to $s_q$, we can determine the value of $i_v$,
    \begin{align*}
        i_v \leq q \leq 1 + \min_{\u_{1}, \dots, \u_{s} } \frac{2}{\mathcal{T}_{\text{LA}, s_{1}}} \left(2\sum_{t=1}^{s} f_t(\u_t) + P_{[1, s]} +28L  P_{[1, s]}^2\right).
    \end{align*}
    Plugging the above two inequalities into the previous regret bound~(Eq.~\eqref{eq:proof-before-cal-v-i-v}), and choosing the comparators $\{\u_t\}$ in the minimum operator properly to be the small-loss optimizers, we conclude the proof of the regret over $[s_{i_1}, s]$.

    For the remaining segment $[r, s_p - 1]$, we conduct a worst-case analysis of the quantity $\mathcal{T}_{\text{LA}, s_p}$ in Eq.~\eqref{eq:proof-use-threshold-to-bound}, as follows:
    \begin{align*}
        \mathcal{T}_{\text{LA}, s_p}&= \mathcal{G}_{\text{LA}}(s_p - 1, s_{p-1}, p - 1)\\
        &\leq 56LD^2\left(3\sqrt{\ln (2 p + 1)} +\tilde{\Gamma}_{s} +\frac{5}{2}\right)^2 + 2D(3\sqrt{\ln (2 p + 1)} +\tilde{\Gamma}_{s})\\
        &\quad + 5D+ 3\left(3(\tilde{\Gamma}_{s} + \ln (2p + 1))B_{s} + B_s - B_{s_{p-1}}\right)\\
        &\leq 56LD^2\left(3\sqrt{\ln (2 i_v + 1)} +\tilde{\Gamma}_{s} +\frac{5}{2}\right)^2 + 2D(3\sqrt{\ln (2 i_v + 1)} +\tilde{\Gamma}_{s})\\
        &\quad + 5D+ 3\left(3(\tilde{\Gamma}_{s} + \ln (2i_v + 1))B_{s} + B_s - B_{0}\right)\\
        &\leq 56LD^2\left(3\sqrt{\ln \left(3 + \frac{4}{\mathcal{T}_{\text{LA}, s_{1}}}F_{[1, s]}\right)} +\tilde{\Gamma}_{s} +\frac{5}{2}\right)^2 + 2D\left(3\sqrt{\ln \left(3 + \frac{4}{\mathcal{T}_{\text{LA}, s_{1}}}F_{[1, s]}\right)} +\tilde{\Gamma}_{s}\right)\\
        &\quad + 5D+ 3\left(3\left(\tilde{\Gamma}_{s} + \ln \left(3 + \frac{4}{\mathcal{T}_{\text{LA}, s_{1}}}F_{[1, s]}\right)\right)B_{s} + B_s - B_{0}\right)\\
        &=\O\left(\tilde{\Gamma}_{s}^2 + \left(\tilde{\Gamma}_{s} + \log(F_{[1,s]})\right)B_s \right),
    \end{align*}
    which finishes the proof.
\end{proof}

\subsection{Interval Dynamic Regret Guarantee for GAIR-L with Known Smoothness}
\label{subappendix:interval-dynamic-formal-with-L}
\begin{myThm}[formal]
    \label{thm:interval-dynamic-regret-formal}
    Under Assumptions~\ref{ass:bounded-domain}--\ref{ass:smoothness}, additionally assuming that the smoothness constant $L$ is known to the learner, choosing Algorithm~\ref{alg:la-adaptive-oco} as the meta algorithm, setting the step size of OMD in~\eqref{eq:base-algorithm-omd} as in Theorem~\ref{thm:adaptive-oco}, applying the problem-dependent schedule defined in Eq.~\eqref{eq:problem-dependent-schedule}, setting the threshold function as specified in Eq.~\eqref{eq:la-threshold-function}, then for any comparators $\u_r, \dots, \u_s \in \X$ and any interval $I = [r,s ]\subseteq [T]$, GAIR-L achieves the following interval dynamic regret bound:
    \begin{align*}
        \sum_{t=r}^s f_t(\x_t) -  \sum_{t=r}^s f_t(\u_t)& \leq \O\Bigg( \Big(\sqrt{\log F_{[1,s]}} +\tilde{\Gamma}_{s}\Big)\sqrt{ \min\{V_I^{\u}, F_I^{\u}\}(1 + P_I) \log F_{[r,s]}} \\ & \quad\quad\quad +\Big(\sqrt{\log F_{[1,s]}} + \tilde{\Gamma}_{s}\Big)^2\log F_{[r,s]}  P_I + (\tilde{\Gamma}_{s} + \log F_{[1,s]})\log F_{[r,s]} B_{s}P_I \Bigg),
    \end{align*}
  where $F_{[a, b]} = \min_{\x \in \X} \sum_{t=a}^b f_t(\x)$ denotes the small loss, $F_I^{\u} = \sum_{t=r}^s f_t(\u_t)$ denotes the cumulative loss of comparators, $V_I^{\u} = \sum_{t=r+1}^s \norm{\nabla f_t(\u_t) - \nabla f_{t-1}(\u_t)}_2^2 \leq V_I$ represents the gradient variation in terms of comparators $\u_r, \dots, \u_s$, $P_{[a,b]} = \sum_{t=a+1}^b \norm{\u_t - \u_{t-1}}_2$ is the path length, $\tilde{\Gamma}_{s}$ is defined in~\eqref{eq:def-gamma}, and $B_{s} = \max\{\max_{t \in [1, s], i \in A_t} \abs{\inner{\nabla f_t(\x_t) - \nabla f_{t-1}(\x_{t-1})}{\x_t - \x_{t,i}}},2G_0D\}$ denotes the maximum input scale of meta algorithm.
\end{myThm}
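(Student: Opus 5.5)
The plan is to start from Lemma~\ref{lemma:interval-dynamic-regret-untuned}. It gives, for some $\tau\in[r,s]$, a bound of the form
\[
(P_{[\tau,s]}+\alpha)\sqrt{\bar V_{[\tau,s]}\,\ell}+(P_{[\tau,s]}+\alpha)\ell+\beta-\sum_{t\in[\tau,s]}\D_{f_t}(\u_t,\x_t),
\]
where $\alpha=\O(\sqrt{\log F_{[1,s]}}+\tilde\Gamma_s)$, $\ell=\O(\log F_{[r,s]})$ and $\beta$ collects the $B_s$ and threshold terms. This is the ``untuned'' $\O(P_I\sqrt{\bar V_I})$ bound. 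I will improve it to $\O(\sqrt{(1+P_I)\bar V_I})$ by partitioning, and then convert $\bar V$ to $\min\{V_I^{\u},F_I^{\u}\}$.

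\textbf{Partition step.} Greedily split $I=[r,s]$ into consecutive subintervals $J_1,\dots,J_k$. Each $J_m$ is closed as soon as its path length would exceed $D$, so $P_{J_m}\le D$ and $k\le 1+P_I/D$. The switching steps between subintervals cost at most $\norm{\u_t-\u_{t-1}}\le D$ each. On each $J_m$ I apply the untuned bound with the sequence restricted to $J_m$. Two points need care here.
\begin{itemize}
\item On the prefix $[r_m,\tau_m-1]$ of each $J_m$, the untuned lemma already bounds the regret using the threshold argument, at cost $\beta$. I only need the regret identity $f_t(\x_t)-f_t(\u_t)=\inner{\nabla f_t(\x_t)}{\x_t-\u_t}-\D_{f_t}(\u_t,\x_t)$ on $[\tau_m,s_m]$.
\item The negative Bregman sums over the disjoint sets $[\tau_m,s_m]$ add up.
\end{itemize}
Summing and applying Cauchy--Schwarz, $\sum_m\sqrt{\bar V_{[\tau_m,s_m]}}\le\sqrt{k\,\bar V_{\cup_m[\tau_m,s_m]}}$. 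This gives
\[
\O\Big(\alpha\sqrt{(1+P_I)\,\ell\,\bar V'}+(1+P_I)(\alpha\ell+\beta)\Big)-\sum_m\sum_{t\in[\tau_m,s_m]}\D_{f_t}(\u_t,\x_t),
\]
where $\bar V'$ is the empirical variation restricted to the union of the $[\tau_m,s_m]$. The additive terms grow linearly in $k$, which produces the $P_I$-multiplied lower-order terms in the statement. These terms involve $\alpha^2\ell$ and $(\tilde\Gamma_s+\log F_{[1,s]})\ell B_s$.

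\textbf{Gradient-variation conversion.} Within each $[\tau_m,s_m]$, decompose
\[
\norm{\nabla f_t(\x_t)-\nabla f_{t-1}(\x_{t-1})}^2\le 4\Big(\norm{\nabla f_t(\x_t)-\nabla f_t(\u_t)}^2+\norm{\nabla f_t(\u_t)-\nabla f_{t-1}(\u_t)}^2+\norm{\nabla f_{t-1}(\u_t)-\nabla f_{t-1}(\u_{t-1})}^2+\norm{\nabla f_{t-1}(\u_{t-1})-\nabla f_{t-1}(\x_{t-1})}^2\Big).
\]
By Proposition~\ref{prop:bregman-divergence-property}, the first and last terms are at most $2L\D_{f_t}(\u_t,\x_t)$ and $2L\D_{f_{t-1}}(\u_{t-1},\x_{t-1})$. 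The second term summed gives $V_I^{\u}$. The third term is at most $L^2\norm{\u_t-\u_{t-1}}^2\le L^2D\norm{\u_t-\u_{t-1}}$, which sums to $L^2DP_I$.

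This yields $\bar V'\lesssim V_I^{\u}+L\sum\D+DP_I$. Using $\sqrt{a(x+y)}\le\sqrt{ax}+\sqrt{ay}$ together with $\sqrt{ay}-cy\le a/(4c)$, the Bregman part is absorbed by the negative terms at a cost of $\O(\alpha^2(1+P_I)\ell L)$. The $P_I$ part contributes $\alpha\sqrt{(1+P_I)\ell P_I}\lesssim\alpha\ell(1+P_I)+\alpha P_I$. Both fit the stated lower-order terms.

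\textbf{Small-loss conversion.} Bound $\bar V'\le 4\sum\norm{\nabla f_t(\x_t)}^2\le 8L\sum f_t(\x_t)$ by self-bounding (Lemma~\ref{lemma:self-bounded}). Then apply Lemma~\ref{lemma:substitute-F_T} to replace $\sum f_t(\x_t)$ with $F_I^{\u}$ plus a term of order $\alpha^2(1+P_I)\ell$. Taking the minimum of the two conversions gives the theorem.

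\textbf{Main obstacle.} I expect the bookkeeping in the partition step to be the hardest part. The untuned lemma's $\tau$ depends on the subinterval, so the negative Bregman terms cover only $\cup_m[\tau_m,s_m]$, and I must make sure the positive $\bar V$ terms are restricted to exactly the same set. Boundary terms at the junctions between consecutive $[\tau_m,s_m]$ must also be dropped. I would handle this by defining $\bar V$ on each piece only over its own rounds, so that no cross-piece difference ever appears. I would also check that every per-piece lower-order constant is bounded by the global quantities $\tilde\Gamma_s$, $F_{[1,s]}$, $B_s$ and $F_{[r,s]}$, using monotonicity in $s$ and the fact that $F_{J_m}\le F_{[r,s]}$.
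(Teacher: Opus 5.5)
Your proposal is correct and follows the paper's proof essentially step by step. You partition $I$ into pieces of path length at most $D$ (so $k\le 1+P_I/D$), apply Lemma~\ref{lemma:interval-dynamic-regret-untuned} on each piece, and combine the pieces via Cauchy--Schwarz. You then convert $\bar V$ in two ways: through self-bounding plus Lemma~\ref{lemma:substitute-F_T} for the small-loss bound, and through the smoothness/Bregman decomposition for the gradient-variation bound, where the positive Bregman terms are cancelled by the negative ones. Two of your steps are more careful than the paper's, which states them loosely: the greedy construction that actually justifies the count $k\le 1+P_I/D$, and the explicit bound $\norm{\u_t-\u_{t-1}}_2^2\le D\norm{\u_t-\u_{t-1}}_2$.
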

\subsection{Proof of Theorem~\ref{thm:interval-dynamic-regret-formal}}
\label{sub-appendix:proof-interval-dynamic-regret}
\begin{proof}
    The proof is presented in $\tilde{\mathcal{O}}(\cdot)$ notation, omitting logarithmic dependencies on $T$ and $B_T$, as well as constant factors that do not affect the asymptotic order.
    Consider any interval $I = [r, s] \subseteq [1, T]$ and an arbitrary comparator sequence $\u_r, \dots, \u_s \in \X$. By the bounded domain assumption (Assumption~\ref{ass:bounded-domain}), the path length satisfies $P_I = \sum_{t=r+1}^s \norm{\u_t - \u_{t-1}}_2 \leq (s - r) D$. We can partition $I$ into subintervals $I_1, \dots, I_k$ such that, for each $I_i = [r_i, s_i]$, the path length is at most a constant, i.e.,
    \begin{align*}
        P_{I_i} = \sum_{t = r_i + 1}^{s_i} \norm{\u_t - \u_{t-1}}_2 \leq D.
    \end{align*}
    Such a decomposition always exists; in the worst case, we can partition $I$ into consecutive intervals of the form $\{[t, t+1]\}$, each with path length at most $D$. The number of intervals $k$ is thus bounded by
    \begin{align}
        \label{eq:proof-number-of-D-partition}
        k \leq \lceil P_I / D \rceil \leq 1 + P_I / D.
    \end{align}
    For each subinterval $I_i = [r_i, s_i]$, Lemma~\ref{lemma:interval-dynamic-regret-untuned} ensures the existence of $\tau_i \in [r_i, s_i]$ such that
    \begin{align*}
        \sum_{t=r_i}^{s_i} f_t(\x_t) -  \sum_{t=r_i}^{s_i} f_t(\u_t) \leq \Ot\left(P_{[\tau_i, s_i]} \cdot \sqrt{\bar{V}_{[\tau_i,s_i]}} + P_{[\tau_i, s_i]} + B_{s_i} - \sum_{t=\tau_i}^{s_i} \D_{f_t}(\u_t, \x_t) \right),
    \end{align*}
    where $\bar{V}_{[\tau_i,s_i]} = \sum_{t=\tau_i}^{s_i} \norm{\nabla f_t(\x_t) - \nabla f_{t-1}(\x_{t-1})}_2^2$ denotes the empirical gradient variation over $[\tau_i, s_i]$.
    Summing over $i=1$ to $k$ and using $P_{I_i} \leq D$ for all $i$, we obtain
    \begin{align}
        \sum_{t=r}^s f_t(\x_t) -  \sum_{t=r}^s f_t(\u_t) &= \sum_{i=1}^k \left(\sum_{t=r_i}^{s_i} f_t(\x_t) -  \sum_{t=r_i}^{s_i} f_t(\u_t)\right) \notag \\
        &\leq \Ot\left(\sum_{i=1}^k P_{[\tau_i, s_i]} \cdot \sqrt{\bar{V}_{[\tau_i,s_i]}} + P_{[\tau_i, s_i]} + B_{s_i} - \sum_{t=\tau_i}^{s_i} \D_{f_t}(\u_t, \x_t) \right) \notag \\
        &= \Ot\left(\sum_{i=1}^k D \cdot \sqrt{\bar{V}_{[\tau_i,s_i]}} + D + B_{s_i} - \sum_{t=\tau_i}^{s_i} \D_{f_t}(\u_t, \x_t) \right) \notag \\
        &\leq \Ot\left(D\cdot \sqrt{k \cdot \sum_{i=1}^k \bar{V}_{[\tau_i,s_i]}} + k D + k B_{s} - \sum_{i=1}^k \sum_{t=\tau_i}^{s_i} \D_{f_t}(\u_t, \x_t) \right) \notag \\
        &= \Ot\left(D\sqrt{(1 + P_I/D)\cdot \sum_{i=1}^k \bar{V}_{[\tau_i,s_i]}} + (1 + P_I/D)(D + B_s) - \sum_{i=1}^k \sum_{t=\tau_i}^{s_i} \D_{f_t}(\u_t, \x_t) \right) \notag \\
        &= \Ot \left(\sqrt{(1 + P_I)\cdot \sum_{i=1}^k \bar{V}_{[\tau_i,s_i]}} + B_s P_I/D - \sum_{i=1}^k \sum_{t=\tau_i}^{s_i} \D_{f_t}(\u_t, \x_t)\right), \label{eq:proof-before-case-analysis}
    \end{align}
    where the fourth inequality follows from the Cauchy-Schwarz inequality.

    \paragraph{Small-loss bound} The empirical gradient variation $\bar{V}_{[\tau_i,s_i]}$ can be upper bounded by the cumulative gradient norms:
    \begin{align*}
          \sum_{t=r}^s f_t(\x_t) -  \sum_{t=r}^s f_t(\u_t) &\leq \Ot \left(\sqrt{(1 + P_I)\cdot \sum_{i=1}^k \bar{V}_{[\tau_i,s_i]}} + P_I - \sum_{i=1}^k \sum_{t=\tau_i}^{s_i} \D_{f_t}(\u_t, \x_t)\right)\\
          &\leq \Ot \left(\sqrt{(1 + P_I)\cdot \sum_{i=1}^k \sum_{t=\tau_i}^{s_i} \norm{\nabla f_t(\x_t)}_2^2} + P_I \right)\\
          &\leq \Ot \left(\sqrt{(1 + P_I)\cdot\sum_{t=r}^s \norm{\nabla f_t(\x_t)}_2^2} + P_I \right)\\
          &\leq \Ot \left(\sqrt{(1 + P_I)\cdot\sum_{t=r}^s f_t(\x_t)} + P_I \right),
    \end{align*}
    where the last inequality uses Lemma~\ref{lemma:self-bounded}, which states the self-bounding property of smooth and non-negative functions. Applying Lemma~\ref{lemma:substitute-F_T} to substitute $f_t(\x_t)$ with $f_t(\u_t)$ yields the small-loss bound:
    \begin{align*}
         \sum_{t=r}^s f_t(\x_t) -  \sum_{t=r}^s f_t(\u_t) \leq \Ot \left(\sqrt{(1 + P_I)\cdot\sum_{t=r}^s f_t(\u_t)} + P_I \right) = \Ot \left(\sqrt{(1+P_I)(F_I + P_I)}\right).
    \end{align*}
    \paragraph{Gradient-variation bound} Following the decomposition approach from Section~\ref{subsec:key-analysis}, but with additional care for dynamic comparators, we note that for dynamic sequences, the negative Bregman divergence terms become $-\sum_{t\in I}\D_{f_t}(\u_t, \x_t)$. We decompose the gradient variation as follows:
    \begin{align*}
        \norm{\nabla f_t(\x_t) - \nabla f_{t-1}(\x_{t-1})}_2^2 &\leq 3\left( \norm{\nabla f_t(\x_t) - \nabla f_t(\u_t)}_2^2 + \norm{ \nabla f_t(\u_t) -  \nabla f_{t-1}(\u_{t-1})}_2^2 + \norm{ \nabla f_{t-1}(\u_{t-1}) - \nabla f_{t-1}(\x_{t-1})}_2^2 \right)\\
        &\leq  3\left( 2L \D_{f_t}(\u_t, \x_t) + \norm{ \nabla f_t(\u_t) -  \nabla f_{t-1}(\u_{t-1})}_2^2 + 2L \D_{f_{t-1}}(\u_{t-1}, \x_{t-1}) \right),
    \end{align*}
    where the first and last terms can be canceled by the negative Bregman divergence terms in Eq.~\eqref{eq:proof-before-case-analysis}. The middle term can be further bounded by the path length and the gradient variation:
    \begin{align*}
        \norm{ \nabla f_t(\u_t) -  \nabla f_{t-1}(\u_{t-1})}_2^2 &\leq 2\norm{ \nabla f_t(\u_t) -  \nabla f_{t-1}(\u_{t})}_2^2 + 2\norm{ \nabla f_{t-1}(\u_t) -  \nabla f_{t-1}(\u_{t-1})}_2^2\\
        &\leq 2\norm{ \nabla f_t(\u_t) -  \nabla f_{t-1}(\u_{t})}_2^2 + 2L^2\norm{\u_t - \u_{t-1}}_2^2,
    \end{align*}
    where the second inequality follows from the smoothness assumption. Thus, for dynamic comparators, we have the key decomposition:
    \begin{align*}
         \norm{\nabla f_t(\x_t) - \nabla f_{t-1}(\x_{t-1})}_2^2 \leq \O\left( \D_{f_t}(\u_t, \x_t) + \D_{f_{t-1}}(\u_{t-1}, \x_{t-1}) + \norm{ \nabla f_t(\u_t) -  \nabla f_{t-1}(\u_{t})}_2^2 +\norm{\u_t - \u_{t-1}}_2^2 \right).
    \end{align*}

    Combining the above, and starting from Eq.~\eqref{eq:proof-before-case-analysis}, we obtain:
    \begin{align*}
         \sum_{t=r}^s f_t(\x_t) -  \sum_{t=r}^s f_t(\u_t) &\leq \Ot \left(\sqrt{(1 + P_I)\cdot \sum_{i=1}^k \bar{V}_{[\tau_i,s_i]}} + P_I - \sum_{i=1}^k \sum_{t=\tau_i}^{s_i} \D_{f_t}(\u_t, \x_t)\right)\\
         &\leq \Ot \Bigg(\sqrt{(1 + P_I)\cdot \sum_{i=1}^k \left(\sum_{t=\tau_i }^{s_i} \D_{f_t}(\u_t, \x_t) + \sum_{t=\tau_i + 1}^{s_i} \norm{\nabla f_t(\u_t) - \nabla f_{t-1}(\u_t)}_2^2 + \norm{\u_t - \u_{t-1}}_2^2 \right)}\\
         &\quad  + P_I - \sum_{i=1}^k \sum_{t=\tau_i}^{s_i} \D_{f_t}(\u_t, \x_t)\Bigg)\\
         &\leq \Ot \Bigg(\sqrt{(1 + P_I)\cdot \sum_{i=1}^k \sum_{t=\tau_i + 1}^{s_i} \norm{\nabla f_t(\u_t) - \nabla f_{t-1}(\u_t)}_2^2} \\
         &\quad + \sqrt{(1 + P_I)\cdot \sum_{i=1}^k \sum_{t=\tau_i }^{s_i} \D_{f_t}(\u_t, \x_t)} + 2 P_I - \sum_{i=1}^k \sum_{t=\tau_i}^{s_i} \D_{f_t}(\u_t, \x_t)\Bigg)\\
         &\leq \Ot \Bigg(\sqrt{(1 + P_I)\cdot\sum_{t=r + 1}^{s} \sup_{\x \in \X} \norm{\nabla f_t(\x) - \nabla f_{t-1}(\x)}_2^2} + 3P_I \Bigg)\\
         &= \Ot \left(\sqrt{(1+P_I) \left(V_I + P_I\right)}\right),
    \end{align*}
    where the third inequality uses $\sqrt{a + b} \leq \sqrt{a} + \sqrt{b}$, and the last step leverages the negative Bregman divergence and the AM-GM inequality $\sqrt{ab} - b \leq a/4$ to absorb the additional terms.
\end{proof}

\section{Proof of Interval Dynamic Regret without Smoothness Constant $L$}
\label{appendix:proof-interval-dynamic-regret-without-L}
In this section, we provide proofs of the results when both the Lipschitz constant $G$ and the smoothness constant $L$ are unknown. The key difference in the algorithmic design and analysis lies in the use of the problem-independent schedule $\mathcal{S}$ defined in~\eqref{eq:problem-independent-schedule}, instead of the problem-dependent schedule $\tilde{\mathcal{S}}$. Consequently, we no longer need to design a threshold generation function, and thus the smoothness constant $L$ is not required.

It is worth noting that the proof here closely follows that in Appendix~\ref{appendix:proof-interval-dynamic-regret}, and hence we only provide a sketch. In Appendix~\ref{appendix:proof-interval-dynamic-regret}, part of the effort is devoted to designing the threshold generation function and proving that the number of maintained base learners is related to the small-loss quantity. Once the problem-independent schedule is adopted, such additional arguments are no longer necessary.
\subsection{Key Lemmas}
Similar to Lemma~\ref{lemma:interval-dynamic-interval-in-Ct}, we can also establish an anytime regret bound for any interval in the problem-independent schedule $\mathcal{S}$ defined in~\eqref{eq:problem-independent-schedule}, summarized in Corollary~\ref{cor:interval-dynamic-interval-in-C}.
\begin{myCor}
    \label{cor:interval-dynamic-interval-in-C}
    Under the same assumptions and algorithmic configurations as in Theorem~\ref{thm:la-adaptive-oco-without-L}, for any interval in the problem-independent schedule, $I = [i, j - 1] \in {\mathcal{S}}$, for any time $\tau \in [i, j - 1]$, for any arbitrary comparators $\u_{i}, \dots, \u_{\tau}$, then Algorithm~\ref{alg:la-adaptive-oco} guarantees an anytime regret bound:
    \begin{align*}
        \sum_{t=i}^{\tau} \inner{\nabla f_t(\x_t)}{\x_t - \u_t} \leq \sqrt{1 + \bar{V}_I} \cdot \left(P_{[i, \tau]} + D\cdot \frac{3 \ln (2 i + 1) + \tilde{\Gamma}_{\tau}}{\sqrt{\ln  (2 i + 1)}} + \frac{5D}{2}\right) + 3(\tilde{\Gamma}_{\tau} + \ln(2i+1)) B_{\tau} + B_{\tau} - B_{i},
    \end{align*}
    where $\tilde{\Gamma}_{\tau}$ is defined in~\eqref{eq:def-gamma}, and $B_{\tau} = \max\{\max_{t \in [1, \tau], i \in A_t} \abs{\inner{\nabla f_t(\x_t) - \nabla f_{t-1}(\x_{t-1})}{\x_t - \x_{t,i}}}, B_0\}$ denotes the maximum input scale of meta algorithm.
\end{myCor}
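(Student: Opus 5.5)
The plan is to repeat the proof of Lemma~\ref{lemma:interval-dynamic-interval-in-Ct} almost word for word, with the problem-independent schedule $\mathcal{S}$ in place of $\tilde{\mathcal{S}}$. The only new ingredient is a fresh count of initialized base learners. Under $\mathcal{S}$, Algorithm~\ref{alg:la-adaptive-oco} starts one base learner per round, and the learner started at round $i$ carries index $i$. So for $I=[i,j-1]\in\mathcal{S}$, learner $i$ is active throughout $I$, and its meta-level parameter is $\gamma_i=\ln(2i+1)$. Fixing any $\tau\in[i,j-1]$, I split the linearized regret in the same way:
\begin{align*}
\sum_{t=i}^{\tau}\inner{\nabla f_t(\x_t)}{\x_t-\u_t}=\underbrace{\sum_{t=i}^{\tau}\inner{\nabla f_t(\x_t)}{\x_t-\x_{t,i}}}_{\textsc{Meta Reg}}+\underbrace{\sum_{t=i}^{\tau}\inner{\nabla f_t(\x_t)}{\x_{t,i}-\u_t}}_{\textsc{Base Reg}}.
\end{align*}

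\textsc{Base Reg}: learner $i$ runs the optimistic update~\eqref{eq:base-algorithm-omd} from round $i$, with $\g_t=\nabla f_t(\x_t)$, $M_t=\nabla f_{t-1}(\x_{t-1})$ and the self-confident step size. Lemma~\ref{thm:two-step-algorithm}, applied on $[i,\tau]$, then gives $(P_{[i,\tau]}+5D/2)\sqrt{1+\bar V_{[i,\tau]}}$. This is at most the claimed expression, since $\bar V_{[i,\tau]}\le\bar V_I$.

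\textsc{Meta Reg}: the optimism $m_{t,i}=\inner{\nabla f_{t-1}(\x_{t-1})}{\x_t-\x_{t,i}}$ has the form $\inner{\p_t}{\h_t}-h_{t,i}$ with $h_{t,i}=\inner{\nabla f_{t-1}(\x_{t-1})}{\x_{t,i}}$, so Theorem~\ref{thm:meta-algorithm} applies on $[i,\tau]$. We have $r_{t,i}-m_{t,i}=\inner{\nabla f_t(\x_t)-\nabla f_{t-1}(\x_{t-1})}{\x_t-\x_{t,i}}$, and Cauchy--Schwarz together with $\norm{\x_t-\x_{t,i}}\le D$ bounds the square-root term by $D\sqrt{\bar V_{[i,\tau]}}$. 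The result is
\begin{align*}
\textsc{Meta Reg}\le D\sqrt{\bar V_{I}}\,\frac{2\ln(2i+1)+\ln N_{\tau+1}+\tilde\Gamma_\tau}{\sqrt{\ln(2i+1)}}+3(\tilde\Gamma_\tau+\ln N_{\tau+1})B_\tau+B_\tau-B_{i-1}.
\end{align*}

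The step needing care is replacing $N_{\tau+1}$ by $2i+1$. Under $\mathcal{S}$ we have $N_{\tau+1}=\tau+1\le j$. Every interval in $\mathcal{S}$ has the form $[a2^k,(a+1)2^k-1]$ with $a$ odd, so $j=i+2^k\le 2i$, which gives $N_{\tau+1}\le 2i+1$. Substituting, $2\ln(2i+1)+\ln N_{\tau+1}\le 3\ln(2i+1)$, and the meta bound takes exactly the stated shape.

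One point does not match cleanly. The clipping cost $B_\tau-B_{i-1}$ differs from the $B_\tau-B_i$ in the statement by the single increment $B_i-B_{i-1}$, and this needs one line of justification. I would first try to read it off the convention in Theorem~\ref{thm:meta-algorithm}: clipping at the learner's first round is benign because $\ell_{t-1,i}=0$ for new experts. If that does not work, I would absorb $B_i-B_{i-1}\le B_\tau$ into the $3(\cdot)B_\tau$ term at the cost of a constant. Adding the base and meta bounds then completes the proof.
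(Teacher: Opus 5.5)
Your proposal is correct and follows the paper's intended route. The paper omits this proof, saying it is the proof of Lemma~\ref{lemma:interval-dynamic-interval-in-Ct} with $[i,j-1]$ in place of $[s_i,s_j-1]$: the same meta/base split, Theorem~\ref{thm:meta-algorithm} with $\gamma_i=\ln(2i+1)$, Lemma~\ref{thm:two-step-algorithm} for the base learner, and the count $N_{\tau+1}\le j\le 2i$. On the clipping term, you are right that Theorem~\ref{thm:meta-algorithm} gives $B_\tau-B_{i-1}$; the paper's lemma writes $B_{s_i}$ without justification. Your first idea will not close this, because the round-$i$ clipping cost $\frac{B_i-B_{i-1}}{B_i}(r_{i,i}-m_{i,i})$ is generally nonzero regardless of the $\ell_{t-1,i}=0$ convention, so either state the bound with $B_{i-1}$ or use your absorption fallback.
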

The proof of Corollary~\ref{cor:interval-dynamic-interval-in-C} is almost identical to that of Lemma~\ref{lemma:interval-dynamic-interval-in-Ct}, except that the interval $[i, j - 1]$ is used instead of $[s_i, s_j - 1]$; the proof is therefore omitted. Based on this corollary, we can further establish a lemma analogous to Lemma~\ref{lemma:interval-dynamic-regret-untuned}.
Notably, compared to Lemma~\ref{lemma:interval-dynamic-regret-untuned}, the analysis here becomes simpler because the problem-independent schedule initializes a new base learner at every time step $t \in [T]$, ensuring that any arbitrary interval can be directly covered without additional analysis.

\begin{myLemma}
    \label{lemma:interval-dynamic-regret-untuned-independent}
    Under the same assumptions and algorithmic configurations as in Theorem~\ref{thm:la-adaptive-oco-without-L}, for any arbitrary interval $[r, s] \subseteq [1, T]$, Algorithm~\ref{alg:la-adaptive-oco} guarantees the following interval dynamic regret with suboptimal dependence on the path length:
    \begin{align*}
        \sum_{t=r}^s f_t(\x_t) -  \sum_{t=r}^s f_t(\u_t)\leq \O\left( \Big(P_{I} + \sqrt{\log s} + \frac{\tilde{\Gamma}_{s}}{\sqrt{\log r}}\Big)\sqrt{ \bar{V}_{I} \cdot \log (s-r)} +(\tilde{\Gamma}_{s} + \log s)\log(s-r) B_{s} -  \sum_{t=r}^s \D_{f_t}(\u_t, \x_t)\right),
    \end{align*}
    where $\tilde{\Gamma}_{s}$ is defined in~\eqref{eq:def-gamma}, and $B_{\tau} = \max\{\max_{t \in [1, \tau], i \in A_t} \abs{\inner{\nabla f_t(\x_t) - \nabla f_{t-1}(\x_{t-1})}{\x_t - \x_{t,i}}}, B_0\}$ denotes the maximum input scale of meta algorithm.
\end{myLemma}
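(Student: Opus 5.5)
We would mirror the proof of Lemma~\ref{lemma:interval-dynamic-regret-untuned}, but with the problem-independent schedule $\mathcal{S}$. The initial boundary segment $[r, s_p-1]$ there was handled by the threshold mechanism; here it disappears, because $\mathcal{S}$ starts a new base learner at every round. We first cover $I=[r,s]$ exactly by consecutive intervals of $\mathcal{S}$. Starting at $r$, we repeatedly take the longest interval of $\mathcal{S}$ beginning at the current point and contained in $[r,s]$. The standard dyadic argument, as in~\cite{ICML'15:Daniely-adaptive} or the analogue of Lemma~\ref{lemma:pcgc-number} for $\mathcal{S}$, shows that the lengths first increase geometrically and then decrease geometrically. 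This yields $I=\bigcup_{k=1}^v I_k$ with $I_k=[a_k,b_k]$, each $I_k$ a prefix of some interval of $\mathcal{S}$, and $v=\O(\log(s-r))$.

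Next we write each per-round regret exactly as $f_t(\x_t)-f_t(\u_t)=\inner{\nabla f_t(\x_t)}{\x_t-\u_t}-\D_{f_t}(\u_t,\x_t)$. On each $I_k$ we apply Corollary~\ref{cor:interval-dynamic-interval-in-C}, using its anytime form with $\tau=b_k$ and the base learner started at $a_k$. This gives
\[
\sqrt{1+\bar V_{I_k}}\Big(P_{I_k}+D\,\tfrac{3\ln(2a_k+1)+\tilde\Gamma_{s}}{\sqrt{\ln(2a_k+1)}}+\tfrac{5D}{2}\Big)+3(\tilde\Gamma_s+\ln(2a_k+1))B_s+B_s-B_{a_k}.
\]
We bound the coefficients uniformly. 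Since $r\le a_k\le s$, we have $\ln(2a_k+1)\le \O(\log s)$ and $1/\sqrt{\ln(2a_k+1)}\le \O(1/\sqrt{\log r})$. The coefficient is therefore at most $P_I+\O(\sqrt{\log s}+\tilde\Gamma_s/\sqrt{\log r})$, using $P_{I_k}\le P_I$ and the monotonicity of $\tilde\Gamma_\tau$ and $B_\tau$ in $\tau$. Summing over $k$ and applying Cauchy--Schwarz gives $\sum_k\sqrt{1+\bar V_{I_k}}\le\sqrt{v(v+\bar V_I)}$. The additive part of the $\sqrt{v\cdot v}$ term is absorbed into the lower-order terms. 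The $B$-terms sum to $v\cdot\O((\tilde\Gamma_s+\log s)B_s)=\O((\tilde\Gamma_s+\log s)\log(s-r)B_s)$. The Bregman terms add up to exactly $-\sum_{t=r}^s\D_{f_t}(\u_t,\x_t)$.

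One small point is that $\bar V_I$ is defined from $r+1$, whereas the first difference in each $\bar V_{I_k}$ starts at $a_k+1$. Since $\sum_k\bar V_{I_k}\le\bar V_I$, this only helps.

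The main obstacle is the covering step, namely showing that $v=\O(\log(s-r))$ for $\mathcal{S}$ with odd $i$ while each $I_k$ is still a prefix of a scheduled interval. We would argue as follows. At point $a$ with $2$-adic valuation $k$, the scheduled interval has length $2^k$. Taking the maximal admissible interval either raises the valuation of the next start point or terminates the ascent phase. In the descent phase, the remaining lengths are distinct powers of two, as in the binary expansion of the residual. Both phases have at most $\log_2(s-r)+1$ steps.

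A lesser subtlety is handling $r=1$, where $1/\sqrt{\log r}$ blows up. We would treat it with the convention $\ln(2a_k+1)\ge\ln 3$, so the constant remains finite.
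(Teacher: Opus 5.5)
Your plan is the paper's proof. You cover $[r,s]$ by consecutive intervals of $\mathcal{S}$ starting at $r$, apply the anytime Corollary~\ref{cor:interval-dynamic-interval-in-C} on each piece (truncating the last one at $s$), sum, and use Cauchy--Schwarz; the Bregman terms then collect into $-\sum_{t=r}^s\D_{f_t}(\u_t,\x_t)$. The covering step you single out as the main obstacle is exactly Lemma~\ref{lemma:cgc-number}, which the paper simply cites.

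Your descent phase never arises for this schedule. Because $i$ is odd in $\mathcal{S}$, each round $a$ starts exactly one scheduled interval, of length $2^{\nu_2(a)}$, where $\nu_2$ is the $2$-adic valuation. So the chain $a_{k+1}=a_k+2^{\nu_2(a_k)}$ has strictly increasing valuations and stops at the first interval overshooting $s$, which you truncate. Distinct exponents give $2^{v-1}-1\le s-r$.

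One step fails as written: the absorption of the additive part. You bound every coefficient by $C=P_I+\O(\sqrt{\log s}+\tilde\Gamma_s/\sqrt{\log r})$ and use $\sum_k\sqrt{1+\bar V_{I_k}}\le v+\sqrt{v\bar V_I}$. The additive part $Cv$ then contains $vP_I$, which can be of order $P_I\log(s-r)$. This cannot be absorbed into the stated right-hand side, which has no $P_I$-only term. Since the comparators are arbitrary, $P_I$ can be of order $(s-r)D$ while $\bar V_I\le 1$. The rest of $Cv$ is absorbable, since $B_s\ge B_0=2G_0D$.

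Keeping $P_{I_k}$ per piece, instead of bounding it by $P_I$, improves this to $\sum_k P_{I_k}\sqrt{1+\bar V_{I_k}}\le P_I\sqrt{1+\bar V_I}\le P_I+P_I\sqrt{\bar V_I}$. An additive $P_I$ still remains and has to be carried as an explicit lower-order term. The paper's proof avoids this only by silently writing $\sqrt{\bar V_{I_k}}$ in place of $\sqrt{1+\bar V_{I_k}}$.

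The extra $+P_I$ is harmless downstream. The argument of Theorem~\ref{thm:interval-dynamic-regret-formal}, which Theorem~\ref{thm:interval-dynamic-regret-without-L-formal} reuses, already keeps a $+P$ term on each sub-interval of path length at most $D$. You should state this term rather than claim it is absorbed.
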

\begin{proof}
    By Lemma~\ref{lemma:cgc-number}, for any interval $[r, s] \subseteq [1, T]$, we can find a sequence of $v$ consecutive intervals from the problem-independent schedule $\mathcal{S}$ defined in~\eqref{eq:problem-independent-schedule}: \begin{align*} I_1=[i_1, i_2 - 1], I_2 = [i_2, i_3 - 1], \ldots, I_v = [i_v, i_{v+1} - 1], \end{align*} where these intervals are chosen such that $i_1 = r$ and $i_v \leq s \leq i_{v+1} - 1$. The number of such intervals, $v$, is bounded by:
    \begin{align*}
        v \leq \lceil \log_2(s - r + 2) \rceil.
    \end{align*}

Corollary~\ref{cor:interval-dynamic-interval-in-C} ensures an anytime regret bound for each interval $I_k \in \mathcal{S}, k \in [v]$; therefore, for simplicity of notation we assume that $i_{v+1} - 1 = s$. By summing over all such intervals and applying the Cauchy-Schwarz inequality, we can derive the following results:
\begin{align*}
    &\sum_{t=r}^s f_t(\x_t) - \sum_{t=r}^s f_t(\u_t)\\
    &= \sum_{t=r}^s \inner{\nabla f_t(\x_t)}{\x_t - \u_t} - \sum_{t=r}^s \D_{f_t}(\u_t, \x_t)\\
    &\leq \O\left(\sum_{k=1}^{v} \Big(P_{I_k} + \sqrt{\log i_k} + \frac{\tilde{\Gamma}_{{i_{k+1} - 1} }}{\sqrt{\log i_k}}\Big)\sqrt{ \bar{V}_{I_k}} +(\tilde{\Gamma}_{{i_{k+1} - 1} } + \log i_k) B_{i_{k+1}} -  \sum_{t=r}^s \D_{f_t}(\u_t, \x_t)\right)\\
    &\leq \O\left( \Big(P_{I} + \sqrt{\log s} + \frac{\tilde{\Gamma}_{s}}{\sqrt{\log r}}\Big)\sqrt{ \bar{V}_{I} \cdot \log (s-r)} +(\tilde{\Gamma}_{s} + \log s)\log(s-r) B_{s} -  \sum_{t=r}^s \D_{f_t}(\u_t, \x_t)\right),
\end{align*}
which finishes the proof.
\end{proof}

\subsection{Formal Statement of Interval Dynamic Regret without Smoothness Constant $L$}
\label{subappendix:interval-dynamic-formal-without-L}
\begin{myThm}[formal]
    \label{thm:interval-dynamic-regret-without-L-formal}
    Under the same assumptions and algorithmic configurations as in Theorem~\ref{thm:la-adaptive-oco-without-L}, for any arbitrary interval $[r, s] \subseteq [1, T]$, Algorithm~\ref{alg:la-adaptive-oco} guarantees the following interval dynamic regret:
    \begin{align*}
        \sum_{t=r}^s f_t(\x_t) -  \sum_{t=r}^s f_t(\u_t)& \leq \O\Bigg( \Big(\sqrt{\log s} + \frac{\tilde{\Gamma}_{s}}{\sqrt{\log (1+r)}}\Big)\sqrt{ \min\{V_I^{\u}, F_I^{\u}\}(1 + P_I) \log (s-r)} \\ & \quad\quad\quad +\Big(\sqrt{\log s} + \frac{\tilde{\Gamma}_{s}}{\sqrt{\log (1+r)}}\Big)^2\log(s-r) P_I + (\tilde{\Gamma}_{s} + \log s)\log(s-r) B_{s}P_I \Bigg),
    \end{align*}
    where $V_I^{\u} = \sum_{t=r+1}^s \norm{\nabla f_t(\u_t) - \nabla f_{t-1}(\u_t)}_2^2 \leq V_I$ represents the gradient variation in terms of comparators $\u_r, \dots, \u_s$, $F_I^{\u} = \sum_{t=r}^s f_t(\u_t)$ denotes the cumulative loss of comparators, $P_I = \sum_{t=r+1}^s \norm{\u_t - \u_{t-1}}_2$ is the path length, $\tilde{\Gamma}_{s}$ is defined in~\eqref{eq:def-gamma}, and $B_{s} = \max\{\max_{t \in [1, s], i \in A_t} \abs{\inner{\nabla f_t(\x_t) - \nabla f_{t-1}(\x_{t-1})}{\x_t - \x_{t,i}}}, B_0\}$ denotes the maximum input scale of meta algorithm.
\end{myThm}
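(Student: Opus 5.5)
The plan mirrors the proof of Theorem~\ref{thm:interval-dynamic-regret-formal}, with Lemma~\ref{lemma:interval-dynamic-regret-untuned-independent} replacing Lemma~\ref{lemma:interval-dynamic-regret-untuned}. Fix $I=[r,s]$ and comparators $\u_r,\dots,\u_s$. Partition $I$ into $k\le 1+P_I/D$ consecutive subintervals $I_1,\dots,I_k$, each with path length $P_{I_j}\le D$, exactly as in Eq.~\eqref{eq:proof-number-of-D-partition}.

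On each $I_j=[r_j,s_j]$, Lemma~\ref{lemma:interval-dynamic-regret-untuned-independent} gives a bound of the form
\[
\Big(D+\sqrt{\log s}+\tfrac{\tilde{\Gamma}_{s}}{\sqrt{\log(1+r)}}\Big)\sqrt{\bar V_{I_j}\log(s-r)}+(\tilde{\Gamma}_{s}+\log s)\log(s-r)B_s-\sum_{t\in I_j}\D_{f_t}(\u_t,\x_t).
\]
To do this I replace $s_j, r_j$ by $s, r$ using monotonicity of $\tilde\Gamma$ and of the logarithms. I would also use $\log(1+r)$ rather than $\log r$ to avoid the degenerate case $r=1$; this only changes the lemma's constant. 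Summing over $j$ and applying Cauchy--Schwarz to $\sum_j\sqrt{\bar V_{I_j}}\le\sqrt{k\sum_j\bar V_{I_j}}\le\sqrt{k\,\bar V_I}$ yields
\[
\Lambda\sqrt{(1+P_I)\bar V_I\log(s-r)}+(1+P_I)(\tilde{\Gamma}_{s}+\log s)\log(s-r)B_s-\sum_{t\in I}\D_{f_t}(\u_t,\x_t),
\]
where $\Lambda=\sqrt{\log s}+\tilde{\Gamma}_{s}/\sqrt{\log(1+r)}$ and $D$ is absorbed as a constant. Here $k$ is controlled through $k\lesssim 1+P_I$.

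For the small-loss branch, drop the Bregman terms. Use $\bar V_I\le 4\sum_{t\in I}\norm{\nabla f_t(\x_t)}^2\le 8L\sum_{t\in I} f_t(\x_t)$ by Lemma~\ref{lemma:self-bounded}. Then apply Lemma~\ref{lemma:substitute-F_T} to pass from $\sum f_t(\x_t)$ to $F_I^{\u}$. This produces an additive $\Lambda^2(1+P_I)\log(s-r)$, which is the second term of the statement.

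For the gradient-variation branch, use the decomposition from the proof of Theorem~\ref{thm:interval-dynamic-regret-formal}:
\[
\bar V_I\lesssim L\sum_{t\in I}\D_{f_t}(\u_t,\x_t)+V_I^{\u}+L^2\sum_t\norm{\u_t-\u_{t-1}}^2,
\]
with $\sum_t\norm{\u_t-\u_{t-1}}^2\le DP_I$. Split the square root with $\sqrt{a+b}\le\sqrt a+\sqrt b$. The Bregman piece $\Lambda\sqrt{(1+P_I)\log(s-r)\,L\sum\D}$ is canceled against $-\sum\D$ via $\sqrt{ax}-x\le a/4$, leaving $\Lambda^2(1+P_I)\log(s-r)$. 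The path piece gives $\Lambda\sqrt{(1+P_I)P_I\log(s-r)}\le\Lambda^2\log(s-r)(1+P_I)$, which is of the same form. Taking the minimum of the two branches gives the claim, with $1+P_I$ written as $P_I$ in the lower-order terms, following the paper's convention.

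The main obstacle is bookkeeping rather than ideas. First, the Bregman cancellation needs the negative term $-\sum_{t\in I}\D_{f_t}$ to cover the whole interval. Lemma~\ref{lemma:interval-dynamic-regret-untuned-independent} provides this with no $\tau$ offset, since the problem-independent schedule starts a learner at every round. Second, the $\tilde\Gamma_s/\sqrt{\log r}$ factor must be tracked uniformly across subintervals. I would handle this by monotonicity and by noting that $r_j\ge r$.
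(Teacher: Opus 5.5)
Your proposal is correct and follows the route the paper indicates (and works out in detail only for the known-$L$ case): partition $I$ into $k\le 1+P_I/D$ pieces of path length at most $D$, apply the problem-independent untuned lemma on each piece, and combine with Cauchy--Schwarz; then either use self-bounding plus the substitution lemma, or cancel the Bregman terms, which here cover all of $I$ because there is no $\tau$ offset. Your bookkeeping also shows that the lower-order terms carry a factor $1+P_I$ rather than $P_I$; read literally with $P_I$, the bound fails when $P_I=0$, so your reading of the statement is the right one.
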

The proof of this theorem follows essentially the same steps as that of Theorem~\ref{thm:interval-dynamic-regret-formal} in Appendix~\ref{sub-appendix:proof-interval-dynamic-regret}, leveraging Lemma~\ref{lemma:interval-dynamic-regret-untuned-independent}, decomposing the interval $[r, s]$ into subintervals with small path length, and exploiting the negative Bregman divergence terms to cancel parts of the gradient variation. Therefore, we omit the details.

  \section{Proof of Stochastic Extended Adversarial Model}
\label{appendix:proof-sea}
\begin{proof}[Proof of Corollary~\ref{cor:sea}]
    We start by noticing that the gradient variation based on noisy gradients in Theorem~\ref{thm:interval-dynamic-regret-without-L-formal} can be decomposed in the following manner:
    \begin{align*}
        \norm{\nabla f_t(\u_t) - \nabla f_{t-1}(\u_t)}_2^2 &\leq \O\left(\norm{\nabla f_t(\u_t) - \nabla F_{t}(\u_t)}_2^2+\norm{\nabla F_t(\u_t) - \nabla F_{t-1}(\u_t)}_2^2+\norm{\nabla F_{t-1}(\u_t) - \nabla f_{t-1}(\u_t)}_2^2\right).
    \end{align*}
    Here, $\u_t$ denotes an arbitrary comparator and is therefore independent of the randomness in the algorithm and the environment. Therefore, by taking the expectation on both sides of the inequality in Theorem~\ref{thm:interval-dynamic-regret-without-L-formal} we have:
    \begin{align*}
        &\E\left[\sum_{t=r}^s f_t(\x_t) -  \sum_{t=r}^s f_t(\u_t)\right]\\
        &\leq \Ot\left(\E\left[\sqrt{(1+P_{[r,s]})\sum_{t=r}^s \norm{\nabla f_t(\u_t) - \nabla f_{t-1}(\u_t)}_2^2} + P_{[r,s]} \right]\right)\\
        &\leq \Ot\left(\E\left[\sqrt{(1+P_{[r,s]})\left(\sum_{t=r}^s \norm{\nabla f_t(\u_t) - \nabla F_{t}(\u_t)}_2^2 + \sum_{t=r}^s \norm{\nabla F_t(\u_t) - \nabla F_{t-1}(\u_t)}_2^2\right)} + P_{[r,s]} \right]\right)\\
        &\leq \Ot\left(\E\left[\sqrt{(1+P_{[r,s]})\left(\sum_{t=r}^s \norm{\nabla f_t(\u_t) - \nabla F_{t}(\u_t)}_2^2 + \sum_{t=r}^s \sup_{\x\in\X}\norm{\nabla F_t(\x) - \nabla F_{t-1}(\x)}_2^2\right)} + P_{[r,s]} \right]\right)\\
        &\leq \Ot\left(\sqrt{(1+P_{[r,s]})\left(\sigma_{[r,s]}^2 + \Sigma_{[r,s]}^2\right)} + P_{[r,s]} \right),
    \end{align*}
    where the second inequality uses the decomposition of gradient variation (the first and third terms both contribute to the stochastic variance $\sigma_{[r,s]}^2$ upon taking expectations), and the last inequality uses the concavity of the square root function $\sqrt{x}$.
\end{proof}

  \section{Supporting Lemmas}
This section collects several useful lemmas used in the proof of the main results.
\begin{myLemma}[Extension of Lemma 14 from \cite{COLT'14:second-order-Hedge}]
    \label{lemma:self-confident-int}
    Let $a_0 > 0$ and $a_t \in [0, B]$ be real numbers for all $t \in [T]$ and let $f:(0, +\infty) \to [0, +\infty)$ be a nonincreasing function. Then
    \begin{align*}
        \sum_{t=1}^T a_t f\left(\sum_{s=0}^{t-1} a_s \right) \leq B\cdot f(a_0) + \int_{a_0}^{\sum_{t=0}^T a_t} f(u)\mathrm{d}u.
    \end{align*}
\end{myLemma}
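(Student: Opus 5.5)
The plan is to compare the discrete sum with an integral of the nonincreasing function $f$, term by term. Write $S_t = \sum_{s=0}^{t} a_s$, so $S_{-1}$ is not needed, $S_0 = a_0 > 0$, and $S_t = S_{t-1} + a_t$. The $t$-th summand is $a_t f(S_{t-1})$. Since $f$ is nonincreasing, $a_t f(S_{t-1}) \ge \int_{S_{t-1}}^{S_t} f(u)\,\mathrm{d}u$. That is the wrong direction, so I will instead shift the argument by one step. Concretely, I will split each summand as
\[
a_t f(S_{t-1}) = a_t f(S_t) + a_t\bigl(f(S_{t-1}) - f(S_t)\bigr).
\]

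For the first piece, monotonicity gives $a_t f(S_t) \le \int_{S_{t-1}}^{S_t} f(u)\,\mathrm{d}u$, because $f(u) \ge f(S_t)$ on $[S_{t-1}, S_t]$. Summing over $t \in [T]$ telescopes these integrals into $\int_{a_0}^{S_T} f(u)\,\mathrm{d}u$, which is exactly the integral term in the claim.

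For the second piece, I use $0 \le a_t \le B$ together with $f(S_{t-1}) - f(S_t) \ge 0$, which again holds by monotonicity and $S_t \ge S_{t-1}$. This gives
\[
a_t\bigl(f(S_{t-1}) - f(S_t)\bigr) \le B\bigl(f(S_{t-1}) - f(S_t)\bigr).
\]
Summing telescopes to $B\bigl(f(S_0) - f(S_T)\bigr) \le B f(a_0)$, using $f \ge 0$. Adding the two pieces yields the stated inequality.

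The only delicate point is choosing this decomposition, since the naive Riemann comparison points the wrong way; the $B f(a_0)$ term is exactly what pays for that shift. I will also note that integrability is automatic, because monotone functions are Riemann integrable on compact subintervals of $(0,\infty)$. Finally, $a_0 > 0$ ensures every evaluation point lies in the domain of $f$.
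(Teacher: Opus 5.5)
Your proof is correct, and it is essentially the standard argument behind Lemma~14 of \cite{COLT'14:second-order-Hedge}, which the paper cites without reproving. That argument splits $a_t f(S_{t-1})$ into $a_t f(S_t) \le \int_{S_{t-1}}^{S_t} f(u)\,\mathrm{d}u$ plus $a_t\bigl(f(S_{t-1}) - f(S_t)\bigr) \le B\bigl(f(S_{t-1}) - f(S_t)\bigr)$, whose sum telescopes to at most $B f(a_0)$; the only change from the original is the bound $a_t \le B$ in place of $a_t \le 1$, which you handle correctly.
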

\begin{myLemma}[Lemma 3.5 of~\cite{JCSS'02:Auer-self-confident}]
    \label{lemma:self-confident-tuning}
    Let $a_1, \dots, a_T$ and $\delta$ be non-negative real numbers. Then
    \begin{align*}
        \sum_{t=1}^T \frac{a_t}{\sqrt{\delta + \sum_{s=1}^t a_s}} \leq 2\left(\sqrt{\delta + \sum_{t=1}^T a_t} - \sqrt{\delta}\right).
    \end{align*}
\end{myLemma}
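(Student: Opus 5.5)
This is the Auer self-confident tuning lemma (Lemma~\ref{lemma:self-confident-tuning}), and I would prove it by a term-by-term telescoping comparison. Let $S_t = \delta + \sum_{s=1}^t a_s$, so that $S_0 = \delta$ and $S_t - S_{t-1} = a_t \ge 0$. The goal is the per-round inequality
\begin{align*}
  \frac{a_t}{\sqrt{S_t}} \le 2\left(\sqrt{S_t} - \sqrt{S_{t-1}}\right).
\end{align*}
Summing it over $t = 1, \dots, T$ telescopes the right-hand side to $2(\sqrt{S_T} - \sqrt{\delta})$, which is exactly the claim.

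To establish the per-round inequality, I write $a_t = S_t - S_{t-1} = (\sqrt{S_t} - \sqrt{S_{t-1}})(\sqrt{S_t} + \sqrt{S_{t-1}})$. Then
\begin{align*}
  \frac{a_t}{\sqrt{S_t}} = \left(\sqrt{S_t} - \sqrt{S_{t-1}}\right)\cdot\frac{\sqrt{S_t} + \sqrt{S_{t-1}}}{\sqrt{S_t}}.
\end{align*}
Since $S_{t-1} \le S_t$, the second factor is at most $2$. The first factor is nonnegative, so the inequality follows. Equivalently, $a_t/\sqrt{S_t} \le \int_{S_{t-1}}^{S_t} u^{-1/2}\,\mathrm{d}u$, because the integrand is at least $S_t^{-1/2}$ on that interval.

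The only point needing care is degenerate terms with $S_t = 0$. This happens only when $\delta = 0$ and all of $a_1, \dots, a_t$ vanish. Then $a_t = 0$, and I adopt the convention that $0/0 = 0$, so these terms contribute nothing and the per-round inequality holds trivially. Beyond this convention there is no real obstacle; the argument is elementary.
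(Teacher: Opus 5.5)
Your proof is correct: the per-round bound $a_t/\sqrt{S_t} \le 2(\sqrt{S_t}-\sqrt{S_{t-1}})$ follows from the difference-of-squares factorization, telescoping it gives the claim, and the $0/0=0$ convention handles the degenerate $S_t=0$ terms. The paper itself gives no proof and only cites Auer et al., whose argument (as I recall it) applies the same elementary inequality $2\sqrt{x-y} + y/\sqrt{x} \le 2\sqrt{x}$ for $x \ge y \ge 0$ inductively rather than by telescoping, so your route is essentially the standard one.
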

\begin{myLemma}[Lemma 3.1 of~\cite{NIPS'10:smooth}]
    \label{lemma:self-bounded}
    Suppose $f: \R^d \to \R$ is an $L$-smooth function, then for any $\x \in \R^d$, we have
    \begin{equation*}
        \norm{\nabla f(\x)}_2^2 \leq 2L \cdot \left( f(\x) - \min_{\x \in \R^d} f(\x) \right).
    \end{equation*}
    Furthermore, when the function is non-negative, we have $\norm{\nabla f(\x)}_2^2 \leq 2L \cdot f(\x)$.
\end{myLemma}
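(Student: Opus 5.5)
The plan is the classical one-step descent argument. I would start from the descent lemma, which is the quadratic upper bound implied by $L$-smoothness on all of $\R^d$:
\begin{align*}
  f(\y) \leq f(\x) + \inner{\nabla f(\x)}{\y - \x} + \frac{L}{2}\norm{\y - \x}_2^2 \quad \text{for all } \x, \y \in \R^d.
\end{align*}
To derive it, I would write $f(\y) - f(\x) = \int_0^1 \inner{\nabla f(\x + \theta(\y - \x))}{\y - \x}\,\mathrm{d}\theta$ by the fundamental theorem of calculus. Subtracting $\inner{\nabla f(\x)}{\y - \x}$, the integrand becomes $\inner{\nabla f(\x + \theta(\y-\x)) - \nabla f(\x)}{\y - \x}$. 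Cauchy--Schwarz together with the Lipschitz property of the gradient bounds this integrand by $L\theta\norm{\y-\x}_2^2$, and integrating over $\theta \in [0,1]$ gives the factor $L/2$.

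Next, I would plug in the gradient step $\y = \x - \frac{1}{L}\nabla f(\x)$. The linear term contributes $-\frac{1}{L}\norm{\nabla f(\x)}_2^2$ and the quadratic term contributes $+\frac{1}{2L}\norm{\nabla f(\x)}_2^2$. Together they give
\begin{align*}
  f(\y) \leq f(\x) - \frac{1}{2L}\norm{\nabla f(\x)}_2^2.
\end{align*}
Since $\inf_{\z \in \R^d} f(\z) \leq f(\y)$, rearranging yields $\norm{\nabla f(\x)}_2^2 \leq 2L\,(f(\x) - \inf_{\z} f(\z))$. This is the claim, with $\min$ read as $\inf$ if the minimum is not attained; the argument never needs a minimizer to exist.

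For the second statement, non-negativity gives $\inf_{\z} f(\z) \geq 0$, so the bound weakens to $\norm{\nabla f(\x)}_2^2 \leq 2L f(\x)$.

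I expect no real obstacle here; the only delicate point is the domain. The step to $\y = \x - \nabla f(\x)/L$ may leave a bounded domain such as $\X$, so the argument needs $f$ to be $L$-smooth and non-negative on all of $\R^d$. That is exactly how the lemma is stated. When the lemma is applied to the $f_t$ elsewhere in the paper, one implicitly assumes Assumption~\ref{ass:smoothness} holds on an ambient space containing these gradient steps, and I would flag this rather than try to prove a domain-restricted version.
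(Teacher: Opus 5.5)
Your argument is correct and is the standard descent-lemma proof (take $\mathbf{y} = \mathbf{x} - \nabla f(\mathbf{x})/L$ and use $\inf f \le f(\mathbf{y}) \le f(\mathbf{x}) - \frac{1}{2L}\|\nabla f(\mathbf{x})\|_2^2$), which is the argument behind the cited result; the paper gives no proof of its own and simply imports the lemma. Your domain caveat is also well founded: Assumption~\ref{ass:smoothness} only asks for smoothness and non-negativity on $\mathcal{X}$, and there the bound can fail (e.g., $f(x)=x$ on $[0,1]$ at $x=0$), so the paper's uses of this lemma tacitly require the $f_t$ to be smooth and non-negative on a larger ambient set.
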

\begin{myLemma}
    \label{lemma:substitute-F_T}
    For any $x, y, a, b, c \in \mathbb{R}_+$ satisfying $x - y \leq \sqrt{ax + b} + c$, we have
    $$x - y \leq \sqrt{ay + b + ac} + a + c \leq \sqrt{ay + b } + \frac{3}{2}(a + c).$$
\end{myLemma}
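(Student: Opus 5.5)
The plan is to isolate the excess $z = x - y$, reduce the hypothesis to a quadratic inequality in a shifted variable, and solve that quadratic explicitly. No earlier result of the paper is needed; the argument uses only elementary algebra with the nonnegativity of $x,y,a,b,c$.

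Set $z = x - y$, so that $x = y + z$. The hypothesis becomes
\begin{align*}
z \le \sqrt{a y + a z + b} + c .
\end{align*}
First I dispose of the trivial case $z \le c$. Then $z \le c \le \sqrt{ay+b+ac} + a + c$ directly, since all added terms are nonnegative; this also covers $z < 0$.

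In the main case $z > c$, put $w = z - c > 0$. Subtracting $c$ and squaring (both sides are nonnegative) gives
\begin{align*}
w^2 \le a y + b + a z = a y + b + ac + a w .
\end{align*}
Writing $K = ay + b + ac \ge 0$, this is $w^2 - a w - K \le 0$. Hence $w$ lies below the larger root:
\begin{align*}
w \le \frac{a + \sqrt{a^2 + 4K}}{2} \le \frac{a + a + 2\sqrt{K}}{2} = a + \sqrt{K},
\end{align*}
using subadditivity of the square root. Returning to $z = w + c$ gives $x - y \le \sqrt{ay + b + ac} + a + c$, which is the first inequality.

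For the second inequality, I apply subadditivity once more and then AM--GM:
\begin{align*}
\sqrt{ay+b+ac} \le \sqrt{ay+b} + \sqrt{ac} \le \sqrt{ay+b} + \tfrac{1}{2}(a+c).
\end{align*}
Adding $a + c$ yields $\sqrt{ay+b} + \tfrac{3}{2}(a+c)$.

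The only delicate point is the squaring step, which is legitimate only when $z - c \ge 0$. That is exactly why I split off the case $z \le c$ first; apart from that, every step is routine.
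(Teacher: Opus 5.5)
Your proof is correct and complete. Splitting off the case $x-y\le c$ is exactly what justifies the squaring step. The larger-root bound $w\le \frac{a+\sqrt{a^2+4K}}{2}\le a+\sqrt{K}$ gives the first inequality. Subadditivity of the square root together with $\sqrt{ac}\le\frac{1}{2}(a+c)$ gives the second.

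The paper states this lemma among its supporting results without a proof, so there is no argument of its own to compare against. Your quadratic-root derivation is the standard way to establish it.

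Solving the quadratic, rather than bounding $\sqrt{a(x-y)}\le\frac{x-y}{2}+\frac{a}{2}$ directly, matters here. The cruder route would double the coefficients of $\sqrt{ay+b}$ and of $c$. The paper relies on the exact constants (unit coefficient and $\frac{3}{2}$) when it applies the lemma in the proof of Lemma~\ref{lemma:counting-number-barv} to calibrate the threshold $\mathcal{G}_{\text{LA}}$.
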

\begin{myLemma}[Lemma 10 of~\cite{ICML19:Zhang-Adaptive-Smooth}]
    \label{lemma:cgc-number}
    For any interval $[r, s] \subseteq [T]$, we can find a sequence of consecutive intervals $$I_1 = [i_1, i_2 - 1], I_2 = [i_2, i_3 - 1], \ldots, I_v = [i_v, i_{v+1} - 1] \in \mathcal{S},$$
    such that $i_1 = r$, $i_v \leq s < i_{v+1}$, and $v \leq \lceil \log_2(s-r+2) \rceil$.
\end{myLemma}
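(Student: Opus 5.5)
We prove Lemma~\ref{lemma:cgc-number} by a greedy construction that exploits the dyadic structure of $\mathcal{S}$. Every $t\ge 1$ can be written uniquely as $t = i\cdot 2^k$ with $i$ odd, namely $k=\nu_2(t)$, the $2$-adic valuation of $t$. So exactly one interval of $\mathcal{S}$ starts at $t$, namely $[t, t+2^{\nu_2(t)}-1]$. We set $i_1 = r$ and $i_{j+1} = i_j + 2^{\nu_2(i_j)}$, and we stop at the first index $v$ with $i_{v+1} > s$. Consecutiveness and membership in $\mathcal{S}$ hold by construction, and $i_1=r$, $i_v\le s<i_{v+1}$ hold by the stopping rule. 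The only real content is the bound on $v$.

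\textbf{Key claim.} The valuations strictly increase: $\nu_2(i_{j+1}) > \nu_2(i_j)$. To see this, write $i_j = m 2^{k}$ with $m$ odd. Then $i_{j+1} = (m+1)2^{k}$, and since $m+1$ is even, $\nu_2(i_{j+1})\ge k+1$. It follows that the lengths satisfy $|I_{j+1}| \ge 2|I_j|$, and by induction $|I_j| \ge 2^{j-1}$.

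\textbf{Counting.} All intervals $I_1,\dots,I_{v-1}$ lie inside $[r,s]$, because $i_v \le s$. Hence
\begin{align*}
\sum_{j=1}^{v-1}2^{j-1} = 2^{v-1}-1 \le i_v - r \le s-r,
\end{align*}
which gives $2^{v-1}\le s-r+1$, i.e.\ $v \le 1+\log_2(s-r+1)$. This is at most $\lceil\log_2(s-r+2)\rceil$ whenever $s-r+1$ is not a power of two. If $s-r+1=2^m$, then $1+m = \lceil \log_2(2^m+1)\rceil$ as well. In both cases $v\le\lceil\log_2(s-r+2)\rceil$.

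\textbf{Main obstacle.} The main difficulty is getting the rounding in the final constant exactly as stated, rather than only an $\O(\log(s-r))$ bound. The doubling argument settles this once we use the sharper counting via $i_v\le s$, instead of summing up to $i_{v+1}$. A secondary check is the edge case $r=s$, where $v=1\le\lceil\log_2 2\rceil=1$. In the proof we will also note that $\mathcal{S}$ partitions each dyadic level, so the starting interval at each point is unique.
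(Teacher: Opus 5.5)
Your construction $i_{j+1}=i_j+2^{\nu_2(i_j)}$ is correct. So is the claim that the $2$-adic valuations strictly increase, which gives $|I_j|\ge 2^{j-1}$, and so is the count $2^{v-1}-1\le i_v-r\le s-r$. The paper gives no proof of this lemma; it imports it from \cite{ICML19:Zhang-Adaptive-Smooth}. Your doubling argument is the natural proof for it.

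The step that fails is the final rounding, which is exactly the step you flagged as the crux. You assert that $1+\log_2(s-r+1)\le\lceil\log_2(s-r+2)\rceil$ whenever $s-r+1$ is not a power of two, and this is false. For $s-r+1=3$ the left side is $1+\log_2 3\approx 2.58$ and the right side is $\lceil\log_2 4\rceil=2$. Your case split is in fact reversed. The real-valued bound $1+\log_2(s-r+1)$ equals $\lceil\log_2(s-r+2)\rceil$ only when $s-r+1$ is a power of two, and strictly exceeds it otherwise. So in the generic case, $v\le 1+\log_2(s-r+1)$ alone does not give the stated constant.

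The repair is to use that $v$ is an integer. From $2^{v-1}\le s-r+1<s-r+2$ you get $v-1<\log_2(s-r+2)$. An integer strictly smaller than a real number $x$ is at most $\lceil x\rceil-1$, so $v\le\lceil\log_2(s-r+2)\rceil$ with no case distinction. This also explains the ``$+2$'' in the statement. Equivalently, set $m=\lfloor\log_2(s-r+1)\rfloor$. Then $v-1\le m$, and $2^m<s-r+2\le 2^{m+1}$ gives $\lceil\log_2(s-r+2)\rceil=m+1$.

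Separately, drop the planned remark that $\mathcal{S}$ partitions each dyadic level. Level $k$ contains only the blocks $[i2^k,(i+1)2^k-1]$ with $i$ odd, so it does not cover the horizon. What you actually use is that some interval of $\mathcal{S}$ starts at every $t$. That already follows from writing $t=i\cdot 2^{\nu_2(t)}$ with $i$ odd, provided $k=0$ is allowed in $\mathbb{N}$, as the $2^0$ row of Figure~\ref{fig:interval} confirms.
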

\begin{myLemma}[Lemma 11 of~\cite{ICML19:Zhang-Adaptive-Smooth}]
    \label{lemma:pcgc-number}
    For any consecutive intervals $$[s_a, s_b - 1], [s_b, s_c - 1] \in \tilde{\mathcal{S}},$$ we must have $c-b \geq 2 (b-a)$.

    Let $[s_p, s_q] \subseteq [T]$ be an arbitrary interval that starts from a marker $s_p$ and ends at another marker $s_q$. Then, we can find a sequence of consecutive intervals $$I_1=[s_{i_1}, s_{i_2} - 1], I_2 = [s_{i_2}, s_{i_3} - 1], \ldots, I_v = [s_{i_v}, s_{i_{v+1}} - 1] \in \tilde{\mathcal{S}},$$  such that $i_1 = p$, $i_v\leq q < i_{v+1}$, and $v \leq \lceil \log_2(q-p+2) \rceil$.
\end{myLemma}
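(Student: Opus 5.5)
The plan is to observe that $\tilde{\mathcal{S}}$ is simply the image of the problem-independent schedule $\mathcal{S}$ in~\eqref{eq:problem-independent-schedule} under the monotone relabeling $k \mapsto s_k$ of marker indices. Concretely, $[s_a, s_b - 1] \in \tilde{\mathcal{S}}$ if and only if $[a, b-1] \in \mathcal{S}$, viewed as an interval of marker indices. Two intervals of $\tilde{\mathcal{S}}$ are consecutive exactly when the corresponding index intervals in $\mathcal{S}$ are consecutive. Both claims are therefore purely combinatorial statements about dyadic index intervals, and I will prove them for $\mathcal{S}$ on the index set $\{1,2,\dots\}$.

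\emph{Doubling property.} Every positive integer $n$ has a unique representation $n = j 2^{k}$ with $j$ odd. Hence exactly one interval of $\mathcal{S}$ starts at $n$, namely $[n, n + 2^{k} - 1]$, whose length is $2^{k}$, the largest power of two dividing $n$.
\begin{itemize}
\item Let $[a, b-1] \in \mathcal{S}$ with $a = i2^k$ and $i$ odd. Then $b - a = 2^k$ and $b = (i+1)2^k$.
\item Since $i+1$ is even, $2^{k+1}$ divides $b$.
\item By uniqueness, the interval starting at $b$ is $[b, c-1]$ with $c - b$ equal to the largest power of two dividing $b$.
\item Therefore $c - b \ge 2^{k+1} = 2(b-a)$.
\end{itemize}
This gives $c - b \ge 2(b-a)$ in $\tilde{\mathcal{S}}$ after relabeling.

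\emph{Covering.} Starting from index $i_1 = p$, I take the unique interval of $\mathcal{S}$ beginning at $i_1$ and call its end $i_2 - 1$. I then take the unique interval beginning at $i_2$, and continue. I stop at the first $v$ with $i_{v+1} > q$, so that $i_v \le q < i_{v+1}$ and the intervals are consecutive by construction.
\begin{itemize}
\item Each length is at least $1$, and by the doubling property the lengths satisfy $i_{m+1} - i_m \ge 2^{m-1}$.
\item Since $i_v \le q$, we have $q - p \ge i_v - i_1 = \sum_{m=1}^{v-1}(i_{m+1}-i_m) \ge 2^{v-1} - 1$.
\item Hence $v - 1 \le \log_2(q-p+1) < \log_2(q-p+2)$.
\item Because $v$ is an integer, this gives $v \le \lceil \log_2(q-p+2) \rceil$.
\end{itemize}
Mapping indices back to markers yields the intervals $[s_{i_m}, s_{i_{m+1}} - 1] \in \tilde{\mathcal{S}}$ with the stated properties.

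The main point needing care is the uniqueness and well-definedness of the schedule: every index $n \ge 1$ is the start of exactly one interval. This is what makes the greedy chain forced and the doubling argument valid. Beyond that, the edge case $p = q$ should be checked; there $v = 1 \le \lceil \log_2 2 \rceil$. The remaining steps are routine arithmetic.
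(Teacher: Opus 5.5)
Your proof is correct and follows the standard argument behind this cited lemma; the paper gives no proof of its own and defers to Lemma 11 of Zhang et al. The argument has three steps: relabel $\tilde{\mathcal{S}}$ as the compact dyadic cover $\mathcal{S}$ on marker indices (valid because markers are strictly increasing); obtain $c-b \ge 2(b-a)$ from the fact that $b=(i+1)2^k$ is divisible by $2^{k+1}$; and bound $v$ via $q-p \ge i_v - i_1 \ge 2^{v-1}-1$ together with integrality. One small point is worth stating explicitly: the time-level intervals $[s_{i_m}, s_{i_{m+1}}-1]$ presuppose that the marker $s_{i_{v+1}}$ is eventually registered. If it is not, the last interval is simply truncated at $T$, which is the same convention the paper uses when it intersects $I_v$ with $[r,s]$.
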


}

\end{document}